\newcommand{\Prob}{\mathbb{P}}
\newcommand{\Pm}{\mathbb{P}_{\mathrm{MFM}}}
\newcommand{\Po}{\mathbb{P}_{\mathrm{OA}}}
\newcommand{\aoamlong}{any-order autoregressive model\xspace}
\newcommand{\aoamlongs}{any-order autoregressive models\xspace}
\newcommand{\aoarlong}{any-order autoregressive\xspace}
\newcommand{\aoamshort}{AO-ARM\xspace}
\newcommand{\aoamshorts}{AO-ARMs\xspace}
\newcommand{\sftshort}{FT\xspace}
\newcommand{%
  \immediate\write18{texcount -1 -sum -q .tex > -words.sum }%
  \input{-words.sum}%
}[1]{%
  \immediate\write18{texcount -1 -sum -q #1.tex > #1-words.sum }%
  \input{#1-words.sum}%
}
\newcommand{\eg}{\textit{e.g.}}
\newcommand{\ie}{\textit{i.e.}}
\newcommand{\ddg}{\Delta \Delta G}
\newcommand{\ourmethod}{\textsc{ProteinGuide}\xspace}
\newcommand{\ourmethodacro}{\textsc{ProteinGuide}\xspace}
\newcommand{\ourmethodICLR}{\textsc{Discrete Guidance}\xspace}
\newcommand{\ournewsamplingmethod}{DEG\xspace}
\newcommand{\tref}[1]{Table~\ref{#1}}
\newcommand{\fref}[1]{Fig.~\ref{#1}}
\newcommand{\supptref}[1]{Supplementary Table~\ref{#1}}
\newcommand{\suppfref}[1]{Supplementary Figure~\ref{#1}}
\theoremstyle{plain}
\newtheorem{theorem}{Theorem}
\newtheorem{proposition}[theorem]{Proposition}
\newtheorem{lemma}{Lemma}
\newtheorem{corollary}{Corollary}
\theoremstyle{definition}
\theoremstyle{remark}
\def\scititle{ProteinGuide: On-the-fly property guidance for protein sequence generative models}
\title{\scititle}
\author{
    Junhao~Xiong$^{1,\ast}$,
    Ishan~Gaur$^{1,\ast}$,
    Maria~Lukarska$^{3,4,5,\ast}$,
    Hunter~Nisonoff$^{1,\ast}$,\and
    Luke~M.~Oltrogge$^{3,4,5}$,
    David~F.~Savage$^{3,4,5,\dagger}$,
    Jennifer~Listgarten$^{1,2,\dagger}$\\
\and
\small$^{1}$Department of Electrical Engineering and Computer Sciences, University of California, Berkeley,\and
{\small Berkeley, CA 94720, USA}\and
\small$^{2}$Center for Computational Biology, University of California, Berkeley, Berkeley, CA 94720, USA\and
\small$^{3}$Department of Molecular and Cell Biology, University of California, Berkeley, Berkeley, CA 94720, USA\and
\small$^{4}$Innovative Genomics Institute, University of California, Berkeley, Berkeley, CA 94720, USA\and
\small$^{5}$Howard Hughes Medical Institute, University of California, Berkeley, CA 94720, USA\\
\and
\small$^{\ast}$These authors contributed equally to this work.\and
\small$^{\dagger}$Corresponding authors. Email: \url{savage@berkeley.edu}, \url{jennl@berkeley.edu}
}
\date{\today}
\begin{document}

\maketitle


\begin{abstract}
Sequence generative models are transforming protein engineering. However, no principled framework exists for conditioning these models on auxiliary information, such as experimental data, without additional training of a generative model. Herein, we present \mbox{\ourmethod}, a method for such ``on-the-fly'' conditioning, amenable to a broad class of protein generative models including Masked Language Models (\eg \ ESM3), any-order auto-regressive models (\eg \ ProteinMPNN) as well as 
diffusion and flow matching models (\eg \ MultiFlow). 
\ourmethod stems from our unifying view of these model classes under a single statistical framework. As proof of principle, we perform several {\it in silico} experiments. We first guide pre-trained generative models to design proteins with user-specified properties, such as higher stability or activity. Next, we design for optimizing two desired properties that are in tension with each other. Finally, we apply our method in the wet lab, using \ourmethod to increase the editing activity of an adenine base editor \textit{in vivo} with data from only a single pooled library of 2,000 variants. We find that a single round of \ourmethod achieves a higher editing efficiency than was previously achieved using seven rounds of directed evolution.

\end{abstract}

\newpage

Generative models are enabling protein engineers to create bespoke proteins for therapeutic, agricultural and environmental uses~\citep{dauparas2022robust, watson2023novo, ingraham2023illuminating}. The outcome of any AI-based protein engineering approach is to ultimately specify a protein sequence, or a set of such sequences, even when designed by way of structure. For example, one might first use a backbone structure generative model~\citep{watson2023novo,ingraham2023illuminating,hayes2025simulating}, followed by use of a backbone-conditioned generative model on sequences (also called an inverse-folding model)~\citep{dauparas2022robust,hsu2022learning}, to generate sequences likely to fold into the specified backbones. Alternatively, one might forgo generating a backbone structure intermediary and instead generate sequences directly from a sequence-only model~\citep{hayes2025simulating,madani2023large,alamdari2023protein,wang2024diffusion,marks2011protein,russ2020evolution}. In all cases, the final step employs a generative model of sequences.

In most practical settings one is interested in generating sequences with particular properties of interest---often properties that are rare in the distribution of natural sequences. For example, suppose one wishes to increase the catalytic activity of an enzyme for some substrate, then these desired proteins are not expected to be well-represented in most general-purpose, pre-trained generative models. Nor are they likely to be well-represented by conditioning on existing annotation labels in such models. Consequently, protein engineers often acquire experimental measurements to glean which proteins are more likely to express this property at the desired level. With these newly generated data, they can construct a predictive model for the measured property, with the hope of leveraging this model to generate new sequences to better achieve their design goal. However, current strategies for incorporating such a model or other auxiliary information into generation remain limited and often {\it ad hoc}. 

A common workaround to re-training a generative model is to use {\it post hoc} filtering---that is, to generate many candidate sequences from a general, pre-trained model and then to filter these sequences according to a predictive model~\citep{yeh2023novo, bennett2025atomically}. However, {\it post hoc} filtering is fundamentally constrained by the pre-trained model’s distribution, which makes  sampling of desired proteins increasingly inefficient as the desired sequences become more rare in the pre-trained generative model distribution~\citep{lisanza2024multistate}. Moreover, for filtering on more than one property, the sampling inefficiency problem is compounded~\citep{hong2024integrative}.
Alternatively, fine-tuning---which does not modulate the sampling distribution on-the-fly and does require re-training a generative model---is another commonly used approach~\citep{ziegler2019fine, ouyang2022training}. In fine-tuning, a set of proteins thought to most exhibit the desired property---such as those that scored most highly in the laboratory---are used to continue training the pre-trained generative model~\citep{ruffolo2024design, nijkamp2023progen2, ivanvcic2025discovery}.
Fine-tuning concentrates the generative model distribution on the fine-tuning dataset, thereby coercing it to sample from the relevant part of the existing pre-trained model distribution, rather than enabling extrapolation beyond that distribution~\citep{nijkamp2023progen2, munsamy2024conditional, ruffolo2024design}. Furthermore, fine-tuning risks overwriting important general domain information learned by the model~\citep{kotha2024understanding, luo2025empirical}, such as features related to the stability of a protein. The pre-trained model can learn about such general properties from data sets that are typically orders of magnitude larger than the fine-tuning data set. We shall see the practical difficulties that fine-tuning encounters in our experiments. See ``Alternative approaches for infusing generative models with auxiliary information'' and Section~\ref{sec:si-related-work} for more detailed discussions of other methods. 

The aforementioned limitations point to the need for a methodological framework that 
can infuse sequence generation from a general, pre-trained model with information about the desired properties, all-the-while retaining general knowledge absorbed into the pre-trained model. Development of such a framework should increase our chances of discovering parts of protein design space that are inaccessible to the generative model alone, or to the predictive model alone. We would also like this framework to handle multiple properties, which might be in tension with one another, such as enzyme specificity and activity.

%
Herein, we propose \ourmethod (\fref{fig:schematic}) to fill this gap. Specifically, \ourmethod enables “on-the-fly”  conditioning, such that
we do not train a new generative model; instead, we iteratively modulate, or “guide” the generation process of a pre-trained unconditional model with information from one or more property predictive models. Each sampling step in this guidance procedure represents a sampling step from the desired conditional generative model, without the need to re-train a generative model to explicitly capture the desired conditional distribution.

\ourmethod can be used, for example, to guide an inverse-folding model to produce proteins with greater solubility/expression/activity than the model naturally generates, or, to produce proteins more likely to bind a particular target, and so forth. Predictive models for generic properties can be used, such as those for stability/expression/solubility, or new, user-created models trained on newly acquired experimental measurements, such as catalytic activity for a desired reaction. Owing to the latter, \ourmethod enables easy integration of experimental data in a statistically coherent manner with an existing generative model, without re-training the existing model. Consequently, \ourmethod should be a key enabler in training-efficient iterative cycles of design, build, test, and learn---a process where experimental feedback guides subsequent AI-based design rounds.

Part of the technical underpinnings for \ourmethod rely on our earlier development of guidance for continuous-time diffusion and flow-matching models on discrete state-spaces, referred to herein as \ourmethodICLR~\citep{nisonoff2025unlocking}. Following on that work, others found that \ourmethodICLR performed favorably compared to fine-tuning for protein property design tasks when used to guide their family-specific generative models~\citep{yang2025steeringgenerativemodelsexperimental}.
Crucially, \ourmethodICLR did not address guidance for models other than discrete diffusion models (DDMs)~\citep{austin2021structured,campbell2022continuous,lou2023discrete} and discrete flow matching models (DFMs)~\citep{campbell2024generative, gat2024discrete}, thereby limiting its utility to the field of protein engineering and beyond. For example, our earlier work did not provide a way to guide ESM3~\citep{hayes2025simulating}, ProteinMPNN~\citep{dauparas2022robust} and many other models~\citep{madani2023large, nijkamp2023progen2, esm_cambrian_2024}. With \ourmethod, we extend guidance to also include masked language models with masking rates covering the range $0\%-100\%$, sometimes known as generative masked language models (MLMs)~\citep{ghazvininejad2019mask,wang2019bert,chang2022maskgit,li2024promises}, and \aoarlong models (\aoamshorts)~\citep{uria2014deep,yang2019xlnet,hoogeboom2021autoregressive}. In so doing, \ourmethod unlocks guidance to much larger set of models that are actively being developed by the protein engineering community.

To demonstrate the potential utility of \ourmethod, we deployed it on a range of problems in protein engineering, first evaluating the approach {\it in silico}, and then in the wet lab. Specifically, we demonstrate the applicability of \ourmethod \ {\it in silico} by guiding different pre-trained protein generative models, including ProteinMPNN, ESM3, and ESM C~\citep{esm_cambrian_2024}, to generate amino acid and structure token sequences, conditioned on several user-specified properties. These properties include higher stability and activity, functional annotations such as enzyme classes and CATH-labeled folds. We show results for generating new sequences with similar properties to the training data (interpolative), and also pushing beyond that (extrapolative). We also demonstrate that \ourmethod can be used on more challenging problems wherein we seek to design for two different desired properties that are in tension with one another (increased on-target, and decreased off-target activity). 
Finally, we demonstrate the utility of our method experimentally by using \ourmethod with two different inverse folding models to design new adenine base editors. We obtained experimental editing activity measurements in a high-throughput assay and train a predictive model to design sequences with higher editing activities. We show that editing efficiency previously achieved with seven rounds of directed evolution can now be surpassed with just a single round of guided design with \ourmethod.

\section*{Results}

We begin by introducing the concept of guidance and the theoretical framework underlying \ourmethod, from which we derive the guidance algorithms used throughout this work. We then place \ourmethod in the context of alternative strategies for incorporating auxiliary information into generative models, before presenting our \textit{in silico} and \textit{in vivo} experimental results.



\begin{figure*}[!htbp]
\centering
\includegraphics[width=1.0\textwidth]{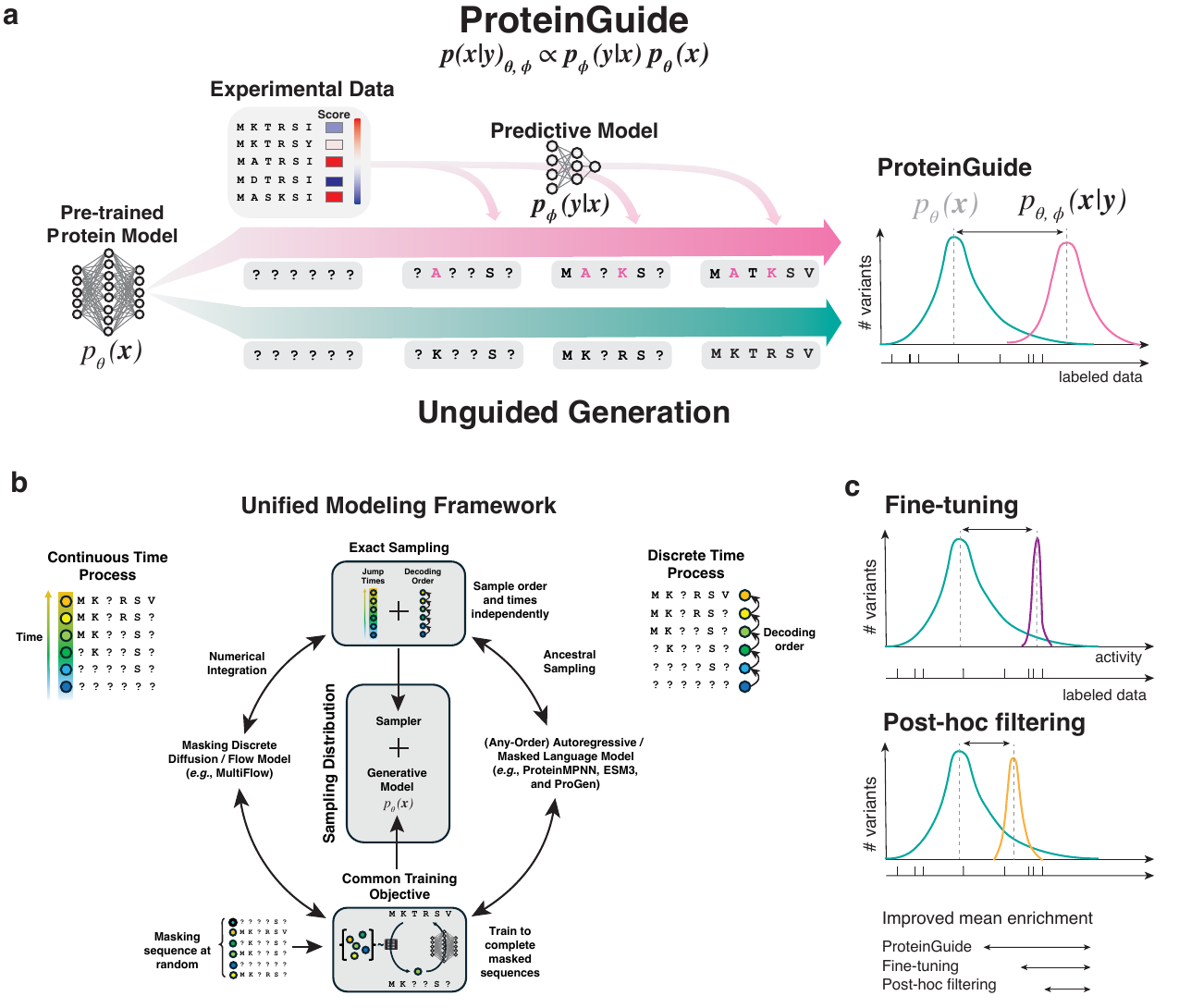}
\caption{
\textbf{Overview of \ourmethod.} 
    \textbf{a}) Contrasting unguided generation with guided generation by way of \ourmethodacro. A sequence is denoted, $x$, the pre-trained generative model is denoted $p_\theta(x)$; $y$ denotes a property of interest that we seek to guide toward using predictive model $p_\phi(y \mid x)$. When guiding, we blend information from the predictive model with that of the pre-trained model in a statistically correct manner, resulting in sequences sampled from the property-informed, reweighted distribution, $p_{\theta, \phi}(x \mid  y)$. Masking of one sequence position is denoted with the symbol ``?'', and pink letters indicate key mutations that are introduced only when guiding with the predictive model, rather than when using unguided generation. 
    \textbf{b}) Unifying view of training and sampling for masked diffusion/flow models, MLMs, and AO-ARMs enables guidance on a broad set of model classes. Colored dots visually tag individual sequences, each of which has a different amount of masked tokens. The sequence being masked during training is \texttt{M K T R S V}. The shared training objective involves randomly masking out parts of the sequence and then predicting the masked sequence given the remaining unmasked context over different masking rates. Once any type of model is trained, one can then use a sampling algorithm originally developed for any of the other models. Consequently, we can obtain computationally tractable exact sampling for continuous time models and also accelerated approximate sampling.
    \textbf{c}) Schematic contrasting \ourmethod with {\it post hoc} filtering and fine-tuning. \ourmethod combines a pre-trained generative prior with assay-labeled data through correct statistical conditioning, enabling more reliable enrichment for high-property sequences than the other approaches shown.
}
\label{fig:schematic}
\end{figure*}


\subsection*{Intuitive overview of guidance}
\label{sec:technical_overview}

We here begin our intuitive overview of \ourmethod with the notion of guidance itself. Conceptually, performing guidance comprises taking i) a pre-trained generative model, $p(x)$, on sequences, $x$, and ii) a classifier/regression predictive model, $p(y | x)$ for property $y$, to enable sampling (\ie, generation) from the desired conditional generative model, \mbox{$x \sim p(x | y) \propto p(y|x) p(x)$}. Notably, guidance does so without training/re-training an underlying generative model on $x$---rather, it modulates the sampling process of the unconditional model. Specifically, we execute Bayes' Rule at each sampling step to compute a property-conditioned distribution for the next amino acid using the unconditional model and the predictive model. This makes the guided sequence an effective sample from a property-conditioned generative model without ever needing to materialize the conditional generative model itself. We therefore refer to guidance as ``on-the-fly" conditioning. It does not require training a generative model---it only modulates the sampling process of the unconditional model.


We first develop guidance for discrete-space diffusion and flow-matching models, a method we earlier dubbed \ourmethodICLR~\cite{nisonoff2025unlocking}, which builds on prior work for continuous-space diffusion models. In continuous spaces, the key insight enabling guidance was that the generative process uses a series of gradient steps, each of which can be tractably converted to a conditional gradient with Bayes' Rule. Consequently, the corresponding sequence of gradient steps enables sampling from the desired conditional distribution~\citep{sohl2015deep,song2020score,dhariwal2021diffusion}. However, for discrete-space generative models, such as discrete-space diffusion and flow-matching models, these gradients are not defined. In fact, in discrete spaces, such models need to be set up completely differently even without guidance, to rely on \textit{transition rates} instead of gradients~\citep{campbell2022continuous,sun2022score,lou2023discrete, campbell2024generative,gat2024discrete}. In an analogous manner to deriving conditional gradients for continuous-space diffusion guidance, we derived conditional transition rates to unlock guidance for diffusion and flow-matching models on discrete spaces such as sequences (Section~\ref{sec:method-background}). 

Our derivation shows that to sample from the desired conditional generative model, one 
modulates the unguided transition rates with a likelihood ratio of the desired property condition before and after each possible transition (Section~\ref{sec:method-background}). 
The key to enabling practical application of guidance with these conditional rates is to leverage aspects of the continuous-time process to enable what is naively an intractably expensive computation, to one that is manageable. Specifically, we show how to change naive guidance from being combinatorial in the sequence length at each sampling step, to just linear, by noticing that it is sufficient to assume that only one position mutates in infinitesimal time (Section~\ref{sec:method-background}).
We also show how to speed up this \textit{exact guidance} even further by approximating repeated calls to the predictive model with a first order approximation to the predictive model. This even faster, approximate guidance, called \textit{Taylor-approximate guidance (TAG)}, often matches the sample quality of exact guidance on many tasks~\citep{nisonoff2025unlocking}.

\subsection*{Overview of {\ourmethod} and theoretical results}


\ourmethod is a unifying framework that extends \ourmethodICLR to a broad range of discrete-space generative models by exploiting equivalences in their training objectives and sampling distributions shown herein (Section~\ref{sec:method-training} and~\ref{sec:method-sampling}). A key observation is that when diffusion and flow-based models employ a so-called masking noise process---a common choice for protein sequence modeling---their training objectives and sampling distributions reduce to forms that are mathematically equivalent to those of generative MLMs and \aoamshorts (Section~\ref{sec:method-training} and~\ref{sec:method-sampling}). As a result, four widely used model classes---DDMs, DFMs, MLMs, and \aoamshorts---can be unified under a single framework comprising two regimes: continuous-time models (DDMs and DFMs) and discrete-time models (MLMs and \aoamshorts) (\fref{fig:schematic}b). By establishing their training and sampling equivalences, \ourmethod allows guidance algorithms to be transferred across model classes, enabling any of the four to be guided using either a continuous-time (DFM-style) or discrete-time (\aoamshort-style) formulation.

Although these two strategies are theoretically equivalent, they have practical differences that give rise to two different concrete algorithms. 
The first is Taylor-approximate guidance (TAG), a fast, approximate method proposed in \ourmethodICLR that we use to guide pre-trained generative models as DFMs using the derived training equivalences. We used TAG in the majority of our experiments.
The second is the newly derived method that we called \textit{discrete-time exact guidance (\ournewsamplingmethod)}. This is a slower, but exact, guidance algorithm that uses our sampling equivalence to directly extend \ourmethodICLR to discrete-time models. We use \ournewsamplingmethod to guide models as \aoamshorts when either the numerical integration used to sample from a continuous time model is unstable or the first-order approximation used by TAG is insufficient. In particular, we used \ournewsamplingmethod only in our multi-property Pareto-extrapolative and enzyme class guidance experiments. 

While connections between the training objectives of aforementioned model classes have been previously noted in various forms~\citep{austin2021structured,hoogeboom2021autoregressive,shi2024simplified,sahoo2024simple,ou2025your}, the practical utility of these observations has been unclear. Herein, we show that this connection can be used to guide MLMs and \aoamshorts by treating them as discrete flow matching models so as to then apply \ourmethodICLR. We also note that our sampling equivalence can be used more generally to define equivalent \aoamshorts to exactly sample from any DFM or DDM, with or without guidance. This generalizes recent results for DDMs showing that efficient ``Gillespie'' samplers could be created for those models~\cite{peng2025path, chen2024fast, hoogeboom2021autoregressive, zheng2024masked, ou2025your, shi2024simplified, sahoo2024simple}, and shows, for the first time, that exact sampling from a DFM can be done tractably. Previously this was assumed to be intractable because it would require infinitesimally small numerical integration time steps~\cite{campbell2024generative,nisonoff2025unlocking,havasi2025edit}.

Altogether, our unifying view of training and sampling brings these four model classes under a single framework and enables \ourmethod to guide a broad set of discrete-space generative models, making it readily applicable to many popular protein sequence generative models commonly used by practitioners (\fref{fig:schematic}b). 


\subsection*{Alternative approaches for infusing generative models with auxiliary information}
\label{sec:related_work_main}

The simplest approach for generating sequences from pre-trained models with desired property criteria is to \textit{post hoc} filter the generated sequences to only those satisfying the user-specified criteria (\fref{fig:schematic}c). Such an approach can be viable when the pre-trained model by itself frequently generates sequences satisfying the criteria. However, more generally, such an approach is extremely inefficient because the properties of interest are not well-captured by general pre-trained models. We shall also see this in our experiments.
Any approach that stands a chance of outperforming \textit{post hoc} filtering must involve, in some fashion, ``adapting'' the generative models to infuse it with knowledge of the relationship between sequences and the desired properties, as \ourmethod does. 

Perhaps the most commonly used approach in protein engineering for infusing experimental data into a generative model is by way of so-called ``fine-tuning'' (FT, sometime referred to as supervised finetuning, see Section~\ref{si_sec:lora})~\citep{ziegler2019fine, ouyang2022training}. FT involves training the generative model further with its original objective on a smaller, curated set of examples that have the desired properties (\fref{fig:schematic}c).
In the context of protein sequence generative models, this set might comprise sequences from a particular family that one hopes to focus the model on~\citep{ruffolo2024design,nijkamp2023progen2,gordon2025protein, munsamy2024conditional,chen2025target}, or sequences that are categorized as being functional~\citep{blalock2025functional,stocco2024guiding}. 
\sftshort has the benefit of being immediately applicable to any class of generative models. However, it has two downsides. First, FT is a heuristic method for combining sources of information, for which the statistical outcome is not well-defined (as opposed to say \ourmethod which executes Bayes' Rule). Consequently, important information from the pre-trained model may be lost, a phenomenon often referred to as ``forgetting''~\citep{kotha2024understanding,luo2025empirical}. Second, the act of continuing to train the original objective on new data concentrates the existing pre-trained distribution onto the portion corresponding to the new data, limiting generation to sequences whose properties resemble those seen during training~\citep{nijkamp2023progen2, munsamy2024conditional, ruffolo2024design}. We see evidence of this in its relative inability to extrapolate in several of our \textit{in silico} experiments (see Section ``Extrapolative experiments: guiding ESM3 to generate sequences from a family for desired properties'' and ``Multi-property, Pareto-extrapolative experiment: guiding ESM C for on- and
off-target binding of PbrR'').
Nevertheless, \sftshort has been shown to be useful either on its own, or as a first step before applying other approaches~\citep{ziegler2019fine, ouyang2022training, blalock2025functional} and is likely sufficient for designing proteins with similar property values to the training data.

Fundamentally, guidance, and consequently \ourmethod, differs from both \textit{post hoc} filtering and fine-tuning in that it uses a stand-alone predictive model to infuse auxiliary information into the pre-trained model, and does so using the statistical operation of executing Bayes Rule. Consequently, \ourmethod provides a framework with which to understand how we are blending sources of information. Additionally, because the predictive model can have class probabilities, or real-valued predictions, it can provide more nuanced information than FT which is limited to using only a select set of samples without associated property strengths. Because \ourmethod executes Bayes' rule, it can coherently blend several independent predictive models to simultaneously condition on all properties (see Section ``Multi-property, Pareto-extrapolative experiment: guiding ESM C for on- and
off-target binding of PbrR'').

Other statistically grounded frameworks have also been proposed. For example, Direct Preference Optimization (DPO)~\citep{rafailov2023direct} and related work~\citep{widatalla2024aligning, chennakesavalu2025aligning,wang2025finetuning,rector-brooks2025steering} enable Bayesian-style updating of a pre-trained model using ``preference-labeled" data or user specified ``reward function''. Notably, these methods all require training a new generative model before one can sample from it.
Up until recently, DPO has been applicable only to models with tractable likelihoods, thereby restricting them largely to autoregressive models~\citep{widatalla2024aligning}, although DPO was recently extended to diffusion models on continuous spaces by way of approximation~\citep{wallace2024diffusion}, and then to discrete spaces~\citep{borso2025preference}. Further research is needed to understand and improve the quality of such approximations. Herein, we focus our study on ``on-the-fly" guidance, rather than approaches that require training a generative model. However, we believe that such approaches will also prove useful. 

Our view is that guidance offers an approach to
statistical conditioning that is theoretically sound, practically well-behaved, flexible, and complementary to existing techniques. Importantly, guidance uniquely does not require access to the parameters of pre-trained models---it only requires predictions (logits) from the model. Accordingly, it can be used with closed-source models and large open source models for which fine-tuning might not be feasible for many users. Additionally, in settings where useful property-predictive models already exist, guidance allows one to directly leverage those models for the design task at hand. 
In recent work comparing guidance-based approach to fine-tuning
for protein property-focused design tasks, it was found that our \ourmethodICLR~\cite{nisonoff2025unlocking}
performed favorably compared to fine-tuning based approaches, and that guidance requires less hyperparameter tuning and computation, making it more practically accessible to everyday users than other approaches examined~\cite{yang2025steeringgenerativemodelsexperimental}. 
For further discussion of related work, including that of Markov Chain Monte Carlo (MCMC) methods~\citep{emami2023plug,wu2023practical,li2024derivative,singhal2025a}, we refer the readers to Section~\ref{sec:si-related-work}.

\subsection*{Overview of \textit{in silico} experiments}
\label{sec:in_silico_overview}

To illustrate the potential of \ourmethod, we first applied it, {\it in silico}, to a suite of experiments spanning three design settings, using a set of representative, well-known protein generative models, including ProteinMPNN~\citep{dauparas2022robust}, ESM3~\citep{hayes2025simulating}, and ESM C~\citep{esm_cambrian_2024}, as illustrative examples. See \supptref{tab:design_tasks} for an overview of these experiments. 
Note that \ourmethod can also be applied to many other sequence generative models~\citep{alamdari2023protein, wang2024diffusion, campbell2024generative, madani2023large, nijkamp2023progen2}. 

The three design settings capture three valuable use-cases. The first is an {\it interpolative design setting}, wherein we sought to generate proteins with property values that were seen in the supervised training data, but with greater diversity than appeared there. In a second category, an {\it extrapolative design setting}, we sought to generate diverse proteins with property values that exceeded any seen in the supervised training data. Finally, in a third task category, \textit{multi-property, Pareto-extrapolative}, we sought to improve one property, while decreasing another, such that we could push beyond the optimality frontier of these two properties defined by the training data for the supervised model.

For the interpolative task, we guided the inverse-folding model ProteinMPNN with a stability predictive model to generate sequences. We also guided the multi-modal model ESM3 with function annotations to generate sequences and structures. Because the interpolative setting is a relatively easy one for design, we highlight the former in the main text, leaving the other to the Supplementary Information (Section~\ref{sec:method-enzyme},~\ref{sec:method-fold-class}, Supplementary Figure~\ref{si_fig:ec},~\ref{si_fig:fold_class}).

For the extrapolative tasks, we guided ESM3 to generate sequences for each of three proteins, with predictive models trained on assay-labeled data measuring fluorescence, ubiquitin ligase activity, or \textit{in vivo} functional complementation. For the extrapolative, multi-property Pareto task, we guided ESM C to generate PbrR sequences, that simultaneously bind the desired lead ion with higher affinity and the off-target, zinc ion with low affinity. 

The interpolative tasks had hundreds of thousands of data points with which to train the predictive model, while the extrapolative tasks had around 2,000, and the Pareto-extrapolative task had around 1,000. These numbers were chosen to be in line with our wet lab experiment where we used 2,000 assayed variants to train our predictive model.

In all of these {\it in silico} experiments, we compared \ourmethod to the most commonly-used approaches in practice, namely, fine-tuning and \textit{post hoc} filtering. We used LoRA~\citep{hu2022lora} when performing fine-tuning with ESM3 and ESM C. In addition to these two baselines, we compared to the pre-trained model without any guidance or other changes. This baseline demonstrates
how frequently the desired property would emerge by simple use of a general, unconditional model, reflecting the difficulty of the task for current models.
%
%
For meaningful comparison, we allotted each method the same fixed compute budget as measured by wall-clock time, and a sample budget of $k$ samples. Full experimental details, including datasets, predictive models, baseline configurations, sampling settings, as well as additional results, are provided in the Methods and Supplementary Information.  We now go through the \textit{in silico} experimental results.

\subsection*{Interpolative experiments: guiding ProteinMPNN for improved stability}
\label{sec:pmpnn_stability}

Here we illustrate how assay-labeled experimental data can be used to condition protein sequence generative models to improve their success in generating sequences with a desired function, as measured in those data. We used \ourmethod to guide ProteinMPNN, an \aoamshort, with a  predictive model trained on a large dataset of experimental folding stability measurements of mini-proteins~\citep{tsuboyama2023mega}. The goal was to generate sequences with greater stability than the wild-type (\fref{fig:pmpnn_stability}a). A similar experiment was previously conducted by re-training the inverse folding model, which \ourmethod lets us bypass~\citep{widatalla2024aligning}. 

We conducted these experiments on eight randomly chosen proteins from the stability dataset for which sequences generated by ProteinMPNN were less stable than the wild-type sequence on average. For each protein, the success rate was computed over $k=100$ generated sequences, where success was defined based on two criteria: (i) the generated sequence should fold into the desired structure, and (ii) the generated sequence should be at least as stable as the wild-type sequence, \mbox{$\ddg \leq 0$}. We used ESM3 to predict the structure of the generated sequences and compare the root mean square deviation (RMSD) between the predicted structure and the desired structure; we considered the predicted designed structure to be successful if the RMSD was less than or equal to $2$\r{A}, following previous evaluation criteria~\citep{campbell2024generative}. To evaluate the folding stability of the generated sequences, we used the stability oracle from~\citet{nisonoff2025unlocking}.

In addition to ProteinMPNN's default autoregressive sampling, through our equivalences, we could also more quickly sample from it as a flow-matching (FM) model, which we also compared to. 
In implementing the \sftshort baseline, we followed a similar procedure to \citet{widatalla2024aligning} in that we further trained ProteinMPNN on its original objective on all sequences more stable than its corresponding wild-type in the stability dataset (Section~\ref{sec:method-stability}).

Across all eight proteins, \ourmethod consistently yielded the highest success rates (\fref{fig:pmpnn_stability}b). In particular, the generated sequences from \ourmethod preserved the structure constraints of the backbone as measured by refolded RMSD (\fref{fig:pmpnn_stability}c), while simultaneously improving their predicted stability (\fref{fig:pmpnn_stability}d). As expected, \textit{post hoc} filtering worked best when 
the unguided model had at least a moderate success rate. \sftshort improved success rates of the unguided model in some cases, but fails to do so in others, for reasons that are not yet fully understood. In contrast to these two methods, \ourmethod worked well across the board. Note that diversity and novelty of generated sequences was similar across all methods (\suppfref{si_fig:stability_diversity_novelty}).

\begin{figure*}[h!]
\centering
\includegraphics[width=1.0\textwidth]{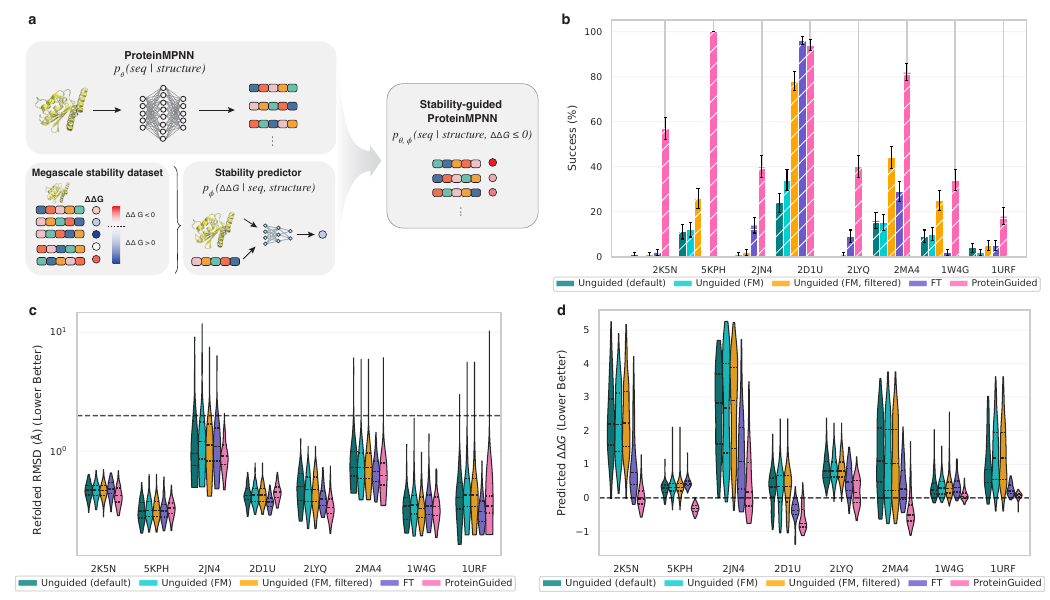}
\caption{
    \textbf{Guiding ProteinMPNN with experimental stability data to generate sequences with enhanced stability.}
    \textbf{a}) Schematics for guiding ProteinMPNN with large-scale experimental stability measurements. ProteinMPNN, an inverse folding model, is guided by a stability predictive model trained on the large-scale stability dataset from~\citet{tsuboyama2023mega} to generate sequences with stability better than the wild-type sequence. 
    \textbf{b}) Five methods for sampling (in legends) are assessed across eight proteins (horizontal axis) by their success rate on 100 generated samples. The height of each bar represents the percentage of sequences predicted to be at least as stable as wild type \mbox{($\ddg \leq 0$)}, while the hatched sub-portion indicates the fraction of those stable sequences that are also predicted to correctly fold into the desired structure (RMSD $\leq 2$\r{A}). In most cases the hatched sub-portion covers the full height of the bar. Error bars represent standard errors of proportions, calculated as $\sqrt{p(1-p)/n}$, where $p$ is the empirical proportion of success and $n$ is the sample size. 
    \textbf{c}) Distributions of refolded RMSD for the generated sequences (lower is better). The dashed horizontal line indicates the threshold below which a sequence is considered to fold into the desired structure (RMSD $\leq 2$\r{A}). 
    \textbf{d}) Distributions of oracle predicted stability for the generated sequences (lower is better). The dashed horizontal line indicates the threshold below which a sequence is predicted to be at least as stable as the wild-type \mbox{($\ddg \leq 0$)}.
}
\label{fig:pmpnn_stability}
\end{figure*}

\subsection*{Extrapolative experiments: guiding ESM3 to generate sequences from a family for desired properties}
\label{sec:fitness_single}

Having investigated the interpolative design setting, we next evaluated \ourmethod with the more challenging task of generating sequences with higher property values than those measured in the assay-labeled datasets (\fref{fig:fitness_single}a). We used assay-labeled datasets for three different proteins, each one having had a distinct property measured, and each data set including many variants beyond single mutations (\suppfref{si_fig:fitness_dataset}a). Specifically, we used data from i) the CreiLOV fluorescent protein, where the assay measures fluorescence intensity~\citep{chen2023deep}, ii) the ubiquitination factor E4B (UBE4B), where the assay measures E3 ubiquitin ligase activity~\citep{starita2013activity}, and iii) the poly(A)- binding protein Pab1 (PABP), where fitness is measured by way of an \textit{in vivo} functional complementation selection~\citep{melamed2013deep}. 

For our pre-trained model, we used ESM3 unconditionally. For each of the three proteins, we guided generation with a predictive model trained on 2,000 randomly chosen labeled sequences from that protein family. Following similar experimental set-up to prior work~\citep{blalock2025functional, yang2025steeringgenerativemodelsexperimental, emami2023plug}, we used an ``oracle'' model trained on all assay-labeled dataset as an \textit{in silico} proxy to assess the fitness of the generated sequences (\suppfref{si_fig:fitness_dataset}b). 
Since ESM3 is a ``pan-protein'' model, we also simultaneously guided ESM3 to generate sequences belonging to the desired protein family. This required training a predictive model to discriminate between sequences in the assay-labeled dataset and unguided generations from ESM3 (Section~\ref{sec:method-esm3-fitness}). 

For each protein design task, we allowed a budget of $k=100$ sequences. For fine-tuning in this set of experiments, we trained on the $M$ top variants from the assay-labeled data set. The value of $M$ was chosen by scanning through different $\%$ threshold on top-scoring variants as a hyperparameter, and choosing the one that had highest likelihood for held-out high-scoring variants.
%

Across all three proteins, \ourmethod consistently yielded the highest success rates. 
In particular, \ourmethod generated sequences with highest oracle property values compared to other methods (\fref{fig:fitness_single}b), and with higher diversity and novelty than those generated with FT (\fref{fig:fitness_single}c). Moreover, some of \ourmethod's variants had higher property values than any observed in the assay-labeled dataset (\fref{fig:fitness_single}d). As expected, \sftshort samples' oracle values recapitulated the range of those in its training dataset (\fref{fig:fitness_single}c,d). We hypothesize that this occured because \sftshort concentrates the distribution of the pre-trained model on the new training data---useful for interpolative design. 
\ourmethod, on the other hand, has the potential to extrapolate beyond the observed dataset to the extent that the predictive model used for guidance is able to extrapolate.

\begin{figure*}[h!]
\centering
\includegraphics[width=1.0\textwidth]{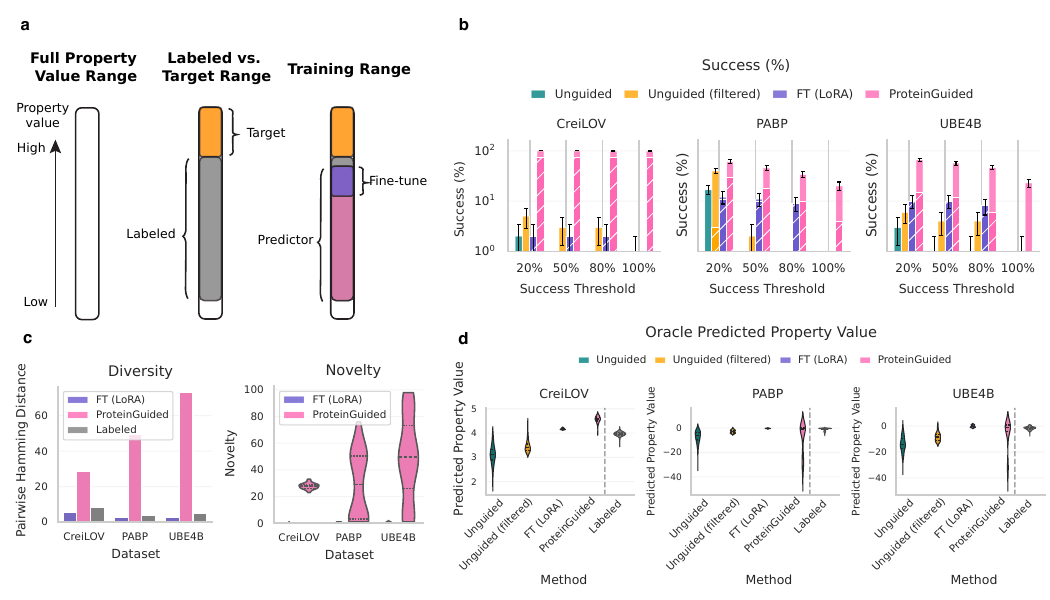}
\caption{
    \textbf{\ourmethod guides ESM3 with assay-informed predictive models for extrapolative design.} 
    \textbf{a}) Experimental set-up for the extrapolative setting, where the goal is to generate sequences with higher fitness compared to those in the labeled dataset. The set of sequences (around 100) with highest property values in the labeled dataset were excluded for both fine-tuning and predictive model training. Fine-tuning is performed on sequences on $M$ top variants from the assay-labeled data set. $M$ is chosen for each dataset scanning through different \% threshold on top-scoring variants as a hyperparameter, and choosing the one that had highest likelihood for held-out high-scoring variants. The predictive model used in guidance is trained on 2,000 sequences randomly sampled from the labeled dataset.
    \textbf{b}) For each dataset (subpanel title), each method is scored by its success rate on 100 samples. The height of each bar represents the percentage of sequences predicted by the oracle model to be above a particular threshold based on the quantile of the training set (horizontal axis) and are not identical to any sequence in the labeled set (\ie, novelty $> 0$), while the hatched sub-portion indicates the fraction of those sequences that are also predicted by ESMFold~\citep{lin2023evolutionary} to fold into a structure with pLDDT $\geq$ 0.8 (\suppfref{si_fig:fitness_structure}). Error bars represent standard errors of proportions, calculated as $\sqrt{p(1-p)/n}$, where $p$ is the empirical proportion of success and $n$ is the sample size.
    \textbf{c}) Diversity (left) and novelty (right) of the generated sequences. Higher diversity and novelty is desirable provided that the sequences also have high property value. Diversity is computed as the averaged pairwise hamming distance among the generated sequences. ``Labeled'' diversity is computed on 500 randomly sampled sequences from the labeled dataset for each protein. For each sequence, novelty is computed as its minimum hamming distance with sequences in the labeled dataset. The novelty of FT generated sequences is very low novelty and is not readily discernible.
    \textbf{d}) Distributions of oracle predicted property value for the generated sequences. ``Labeled'' consists of all sequences in the labeled dataset.
}
\label{fig:fitness_single}
\end{figure*}

\subsection*{Multi-property, Pareto-extrapolative experiment: guiding ESM C for on- and off-target binding of PbrR}
\label{sec:fitness_multi}

\begin{figure*}[htbp!]
\centering
\includegraphics[width=0.88\textwidth]{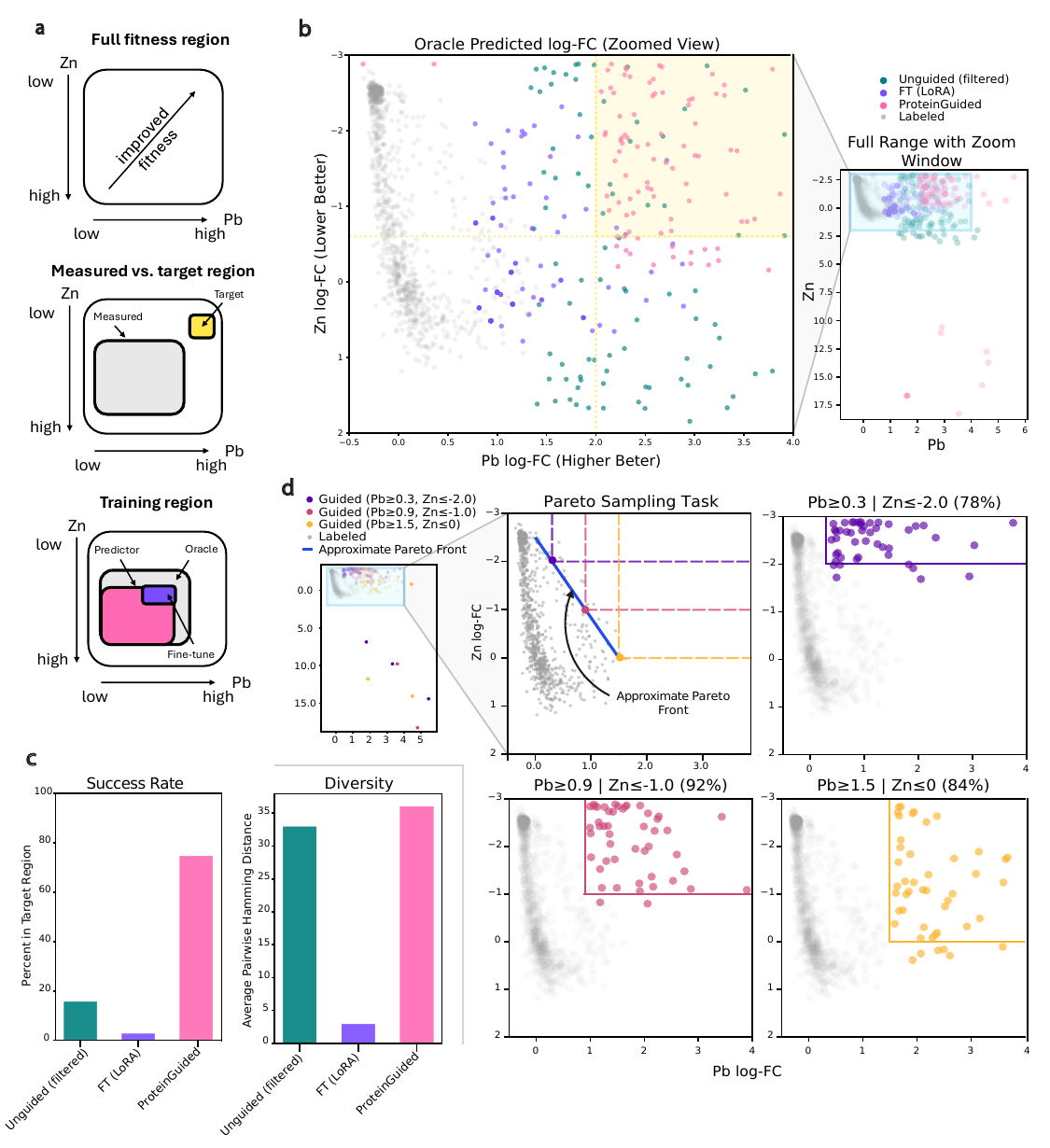}
\caption{
\textbf{Multi-property guidance with \ourmethod on PbrR to improve lead (Pb) binding and reduce off-target zinc (Zn) binding.}
    \textbf{a}) Schematic diagrams of the data splits used for training and evaluating models for multi-property guidance on PbrR. 
    \textbf{b}) Pareto-extrapolation task: scatterplot of oracle-predicted log-fold change for model generations for ESM C (Unguided), \sftshort, and \ourmethod. Experimental values of the initial library of variants from \cite{wang2025active} are also shown in gray. The target region requires a Pb log-FC of 2 compared to the wild-type and a Zn log-FC of -0.6 compared to wild-type.
    \textbf{c}) Left panel shows the proportion of sequences that were in the target region for each of ESM C (Unguided), \sftshort, and \ourmethod. Right panel shows the average Hamming distance between sequences generated by each model, as a measure of diversity.
    \textbf{d}) Controllable generation task: the top left panel shows outliers and the region corresponding to the remaining four plots (in yellow shading). Top middle panel shows each of three target regions for the Pareto sampling task as quadrants delineated by colored dashed lines. Each target region has it's bottom left vertex anchored on the approximate Pareto frontier. The remaining three plots (top right and bottom row) show sequences generated by guidance for each target region using the classifier thresholds in the subpanel titles. Subpanel titles also show success rate in parentheses.
}
\label{fig:fitness_multi}
\end{figure*}

Finally in our {\it in silico} series of experiments, we tackled a design problem comprising optimizing for two desired properties in tension with one another. This setting is of great interest to protein engineers who may want to, say, increase the activity of an enzyme while simultaneously keeping its stability above some threshold. Moreover, practitioners may want to turn a ``knob'' to trace out the optimality (\ie, Pareto) frontier in order to characterize it, or to optimize simultaneously for even more than two properties. 

Our multi-property optimization task uses data from a recent design campaign~\citep{wang2025finetuning} that optimized the PbrR protein for use in a cell-free biosensor to detect lead (Pb) contamination in water. PbrR is a transcription factor that forms a dimer when bound to metal ions---typically Pb (II)---allowing it to bind DNA and induce the transcription of a downstream reporter gene. The key challenge is that increasing Pb affinity typically also increases off-target binding to zinc (Zn). We therefore aimed to generate PbrR variants to simultaneously increase Pb binding while reducing off-target Zn (II) binding (\fref{fig:fitness_multi}a). That is, the goal was to push the Pareto frontier of the experimentally collected data outward past its limits of optimality (Figure \ref{fig:fitness_multi}d).

To simulate the early stage of a design campaign, we only used experimental data from the first library (of five) collected by~\citet{wang2025active}. This first library comprised of 1,098 variants arising from an alanine scan of the protein; a site saturation mutagenesis (SSM) at 49 positions; and a modest number of higher order mutations (up to degree 5) sampled from seven positions. We only designed positions that had SSM data available, leaving the rest at their wild-type amino acids. Binding affinities are reported as log fold change (log-FC) of fluorescence with respect to the wild-type protein in the presence of its metal-ion cofactor.

To use \ourmethod we trained two separate predictive models for each of Pb and Zn binding affinities, using all single site and pairwise terms in a linear additive model. We did so to explicitly mimic the common scenario in protein design where initial screens do not capture higher order epistatic interactions and predictive models cannot be expected to learn these interactions from the data. 
We used ESM C~\cite{esm_cambrian_2024} as the pre-trained model. Fine-tuning was done on variants lying on the experimentally observed Pareto frontier for on-/off-target activity (\suppfref{si_fig:multi_data}b, center panel).

We evaluated design for two different tasks.
In the first task---Pareto-extrapolation---we aimed to generate sequences in a target region that lay outside the (approximate) Pareto frontier of the experimental data. Specifically, we sought to produce sequences with Pb log-FC greater than $2.0$ and Zn log-FC less than $-0.6$ (\fref{fig:fitness_multi}b, yellow rectangle). In the second task---controllable generation---we selected three reference points along the Pareto frontier representing different Pb-/Zn-binding tradeoffs (\fref{fig:fitness_multi}d, top-left panel, points on the blue line). For each reference point, the target region corresponded to the part of function space with strictly better Pb and Zn binding characteristics than the reference point (see quadrants delineated by colored dashed lines in \fref{fig:fitness_multi}d, top-left panel). 

For the Pareto-extrapolation task, 75\% of the sequences generated using \ourmethod fell into the design target region (\fref{fig:fitness_multi}b), without losing any diversity compared to the unguided model (\fref{fig:fitness_multi}c). The unguided model's samples had high variance, partially covering the target region (\suppfref{si_fig:multi_main}a). Accordingly, \textit{post hoc} filtering the unguided sequences (top 100 sequences out of 1,000) yielded only a 16\% success rate. 
The fine-tuned model remained very close to the observed Pareto-optimal frontier in the dataset (\ie, its training data, see~\suppfref{si_fig:multi_data} center panel) and failed to extrapolate (\fref{fig:fitness_multi}b). Moreover, sequences generated from the fine-tuned model displayed poor diversity (\fref{fig:fitness_multi}c). In fact, the finetuned model actually predicts \textit{lower} log-likelihoods for sequences in the target region than the original unconditional model (\suppfref{si_fig:multi_main}f). 


Our next experiment---the controllable generation task---showcased the ability of \ourmethod to conditionally sample sequences according to a diverse range of property target values without retraining generative or predictive models. This is a key advantage of on-the-fly methods as part of design-build-test-learn (DBTL) cycles. As engineered proteins are evaluated in different contexts, the original design criteria may change. With \ourmethod, new sequences can then be sampled by simply inputting the new property constraints. For methods like fine-tuning, new datasets must be collected and new models must be trained and validated before sequences can be sampled.

In this experiment, we used \ourmethod to sample $k=50$ sequences for each of three target regions (\fref{fig:fitness_multi}d, top-right, bottom-left, and bottom-right subpanels). Each target region is anchored on a point from the Pareto frontiers. These anchor points represent different possible lower-bounds on the desired on-/off-target binding affinities for the generated sequences that could be specified by a user during a DBTL cycle. Overall, \ourmethod had an average success rate of 85\% across these three regions showing that it could readily handle a diverse set of user-specified design objectives. 

\subsection*{\textit{In vivo} experiments: engineering a base editor with enhanced activity}
\label{sec:tadA}

At last, we move to an \textit{in vivo} demonstration of \ourmethod to engineer an adenine base editor (ABEs)~\citep{rees2018base} for high editing activity (\fref{fig:tadA}a). Base editors are engineered CRISPR effectors that can make transition mutations in the DNA by directly deaminating cytosine (causing C-T mutations) or adenine (A-G mutations) in the single-stranded DNA exposed during target recognition by Cas9.

Our goal was to use experimental base editor activity data with \ourmethod to condition pre-trained generative models to produce sequences with improved activity above and beyond what the models had access to.  For pre-trained models, we used two inverse folding models, ESM3 and FMIF (Flow-Matching Inverse Folding)~\citep{nisonoff2025unlocking}---the latter, an inverse folding model that we trained with similar training data and architecture as ProteinMPNN, but having specifically removed all examples of engineered ABE variants from the training data. Our intent was to ensure that FMIF could not encode existing published information for engineering ABEs that is generally not available when engineering a new protein. 
We started our engineering campaign with the wild-type TadA sequence (an ABE) which is known to have very low, but detectable activity~\citep{ranzau2023wild}.
Then, we chose to only design the region of 86 residues on the C-terminal region of this variant due to constraints of gene synthesis; we also hypothesized that this region might be enriched for mutations important for increasing editing activity based on prior studies~\citep{gaudelli2017programmable, richter2020phage, xiao2024adenine}.

Activity levels for the base editors were assayed using an in-house bacterial selection assay based on the restoration of an inactive antibiotic resistance marker to measure the activity of the library, and activity was reported as log enrichments between the selective and non-selective conditions. 
Our ``pre-guidance'' first-round library was designed by sampling from each of the pre-trained models, ESM3 and FMIF, separately (1,000 samples per model). These sequences were then pooled into a library of 2,000 variants that was then assayed for activity. From this data set, we then built one classification model to be used for guidance, where  the classifier predicts the probability that a sequence’s LE is greater than zero. 

Next we could apply \ourmethod to design an improved second round library by guiding each of ESM3 and FMIF with the trained classifier. We generated 1,000 sequences by guiding each pre-trained model, yielding a total of 2,000 sequences, that we then assayed. These generated sequences were predicted to be active based on the classifier, while maintaining a similar level of diversity to the first round library (\fref{fig:tadA}b). 

We found that the majority of the \ourmethod variants in the second round had higher activities (mean log enrichment $0.66 \pm 0.02$) than the first round library (mean log enrichment $ -0.28 \pm 0.01$) (\fref{fig:tadA}c). Furthermore, the second round library revealed variants that were more active than any seen in the first round library, while having less than 80\% sequence identity to the wild-type TadA sequence within the designed region (\fref{fig:tadA}d). Intriguingly, we observed that while in the first round, more variants from ESM3 were predicted to be active than from FMIF (\fref{fig:tadA}b), the most active variants from the second round mostly came from FMIF (\fref{fig:tadA}d). Examining the distributions of sequences between the two rounds, we observed different enrichments of mutations from the two models, suggesting that 
they might be capturing different types of sequence constraints due to the difference in their training data (\fref{fig:tadA}e). Additionally, several enriched positions overlap with residues previously identified in directed evolution campaigns (\fref{fig:tadA}e). In particular, the D108N mutation---enriched in sequences generated by the guided FMIF model---has been reported in prior directed evolution campaigns, whereas most other beneficial mutations identified here differ from those previously observed (\fref{fig:tadA}e)~\citep{gaudelli2017programmable, richter2020phage}. 

We chose two of the variants with high enrichment values and low sequence similarity to the wild-type from our guided FMIF second library for further validation (\fref{fig:tadA}d, left panel). Specifically, we tested their activity individually in a titer plate-based assay and a pooled selection assay together with previously engineered base editors (BEs): ABE1.1, ABE7.10 and ABE8 (Section~\ref{sec:method-tadA})~\citep{gaudelli2017programmable, richter2020phage}. \ourmethod's variant with the highest log enrichment value, FMIF-1016, had activity between ABE7.10 and ABE8, which correspond to respectively seven and eight rounds of directed evolution (\fref{fig:tadA}f). Consequently, \ourmethod, in one round, generated base editors with activity higher than seven rounds of directed evolution. We speculate that further rounds could lead to still higher activity.

\begin{figure*}[ht!]
\centering
\includegraphics[width=0.9\textwidth]{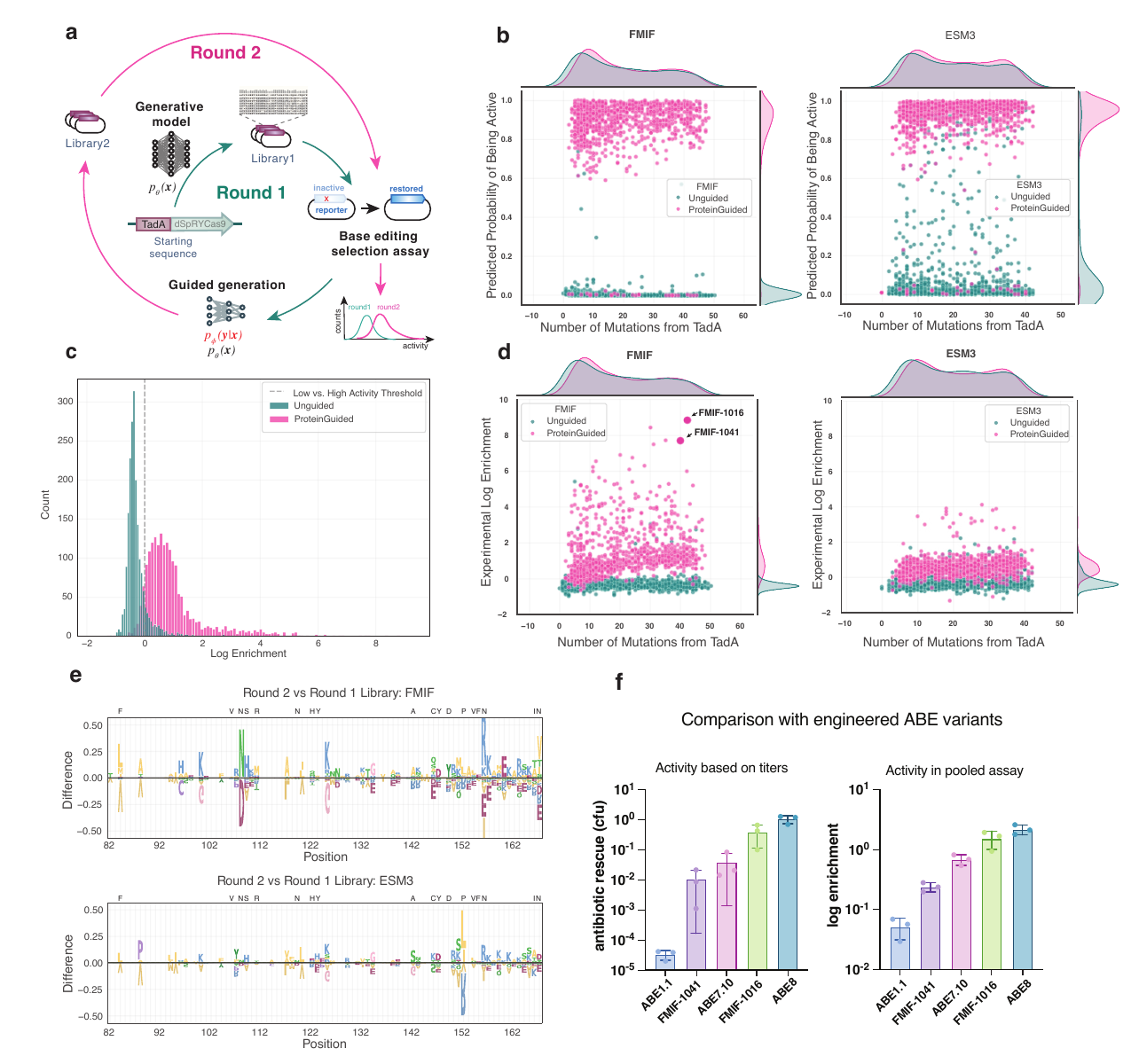}
\caption{
\textbf{Engineering adenine base editor with ProteinGuide.} 
    \textbf{a}) Schematic representation of the \ourmethod~ABE experiment. 
    \textbf{b}) Scatterplots of the number of mutations from TadA ($x$-axis) vs. the predicted probability of being active ($y$-axis) for every variant in library 1 (teal) and library 2 (pink) of FMIF (left) and ESM3 (right).
    \textbf{c}) A histogram of the experimentally measured log enrichments for both library 1 (teal) and library 2 (pink). \ourmethod~was used to generate the round 2 library such that variants would be enriched for having high activity (predicted activity in log enrichment units higher than the gray dashed line).
    \textbf{d}) Similar to \textbf{b}), but with experimentally measured log enrichments shown on the vertical axis.
    \textbf{e}) Sequence logo plots of the differences in the position-wise amino acids frequencies between library 1 and library 2 for FMIF (left) and ESM3 (right). Letters above the horizontal line at $y=0$ represent position/amino acids that are enriched in library 2 relative to library 1, and those below 0 represent positions/amino acids that are depleted in library 2 relative to library 1. Black letters above the plots show mutations obtained in the experimental evolution campaigns leading to ABE8. 
    \textbf{f}) Comparison of the activity of two of the top variants after \ourmethod with experimentally obtained variants with known activities (ABE1.1, ABE7.10, ABE8). Left: activity estimated based on the titers of individual transformations in the selection strain, shown are colony forming units (cfu) on selective normalized to non-selective condition. Right: activity calculated based on log-enrichment in a pooled selection assay measured by sequencing. Error bars represent standard deviations over three independent transformations for each experiment.
}
\label{fig:tadA}
\end{figure*}

\section*{Discussion}
\label{sec:discussion}

We introduced \ourmethod, a broadly applicable framework that enables conditioning of pre-trained protein generative models on any property of interest that can be encoded in a predictive model, without having to re-train the pre-trained model. An important application enabled by \ourmethod is to nimbly incorporate feedback from experimental measurements into the generative process, on-the fly. 
Through \textit{in silico} experiments in three distinct design settings---interpolative, extrapolative, and multi-property Pareto-extrapolative---we demonstrated the utility of \ourmethod~in a variety of application settings of interest to protein engineers, across a variety of pre-trained generative models, and spanning numerous protein properties. In our \textit{in vivo} experiment to engineer an adenine base editor with improved activity, we demonstrated that a single round of \ourmethod yielded variants with activity higher than that produced by seven rounds of directed evolution. 

Guidance, and hence \ourmethod, differs from the common practices of \textit{post hoc} filtering and fine-tuning in a few important ways. First, it uses a separate predictive model to modulate the sampling process from a pre-trained model, without having to train/re-train any generative model. Second, it uses the principle of Bayes' rule to statistically, coherently blend information from the pre-trained model with the predictive model, 
thereby enabling generation of variants that the pre-trained model would not otherwise produce. Moreover, the coherent blending mitigates worries about ``forgetting" that often emerge from fine-tuning.

Beyond straightforward application of our on-the-fly guidance approach, one might consider combining \ourmethod with other Bayesian-inspired frameworks that involve re-training a pre-trained model towards a reward-weighted posterior distribution~\citep{wang2025finetuning,rector-brooks2025steering,widatalla2024aligning,chennakesavalu2025aligning}, particularly in cases when the targeted region is significantly out of distribution for the pre-trained generative model. Pure guidance might be challenging in such a situation. Separately, it might prove promising to combine \ourmethod with MCMC-based methods, such as recent works for guiding generative models with Sequential Monte Carlo (SMC)~\citep{wu2023practical,li2024derivative,singhal2025a}, to improve the generation quality, albeit at the expense of added sampling computation time.

We anticipate future work will further extend and build upon our work. One interesting line of exploration might involve applying \ourmethod in conjunction with guidance methods for continuous spaces to guide multimodal generative models of both structure and sequence~\citep{campbell2024generative, jing2025generating}. 

\section*{Acknowledgments}
This work was funded in part by the the U.S. Department of Energy, Office of Science, Office of Biological and Environmental Research, Lawrence Livermore National Laboratory BioSecure SFA within the Secure Biosystems
Design program (SCW1710), and also the Office
of Naval Research (ONR) under grant N00014-23-1-2587. D.F.S. is an Investigator of the Howard Hughes Medical Institute, M.L. is a Life Science Research Fellow funded by Howard Hughes Medical Institute. 
For their assistance with Illumina sequencing we thank the Innovative Genomics Institute NGS facility, Netravathi Krishnappa and for nanopore and Sanger sequencing we thank the UC Berkeley DNA Sequencing Facility. We thank Hanlun Jiang, Antoine Koehl, Jianan Canal Li, and Flora Zhiqi Wang for helpful discussions.


\clearpage


\renewcommand{\thefigure}{S\arabic{figure}}
\renewcommand{\thetable}{S\arabic{table}}
\renewcommand{\theequation}{S\arabic{equation}}
\renewcommand{\thepage}{S\arabic{page}}
\setcounter{figure}{0}
\setcounter{table}{0}
\setcounter{equation}{0}
\setcounter{page}{1} 


\begin{center}
\section*{Supplementary Materials for\\ \scititle}


Junhao~Xiong$^{\ast}$,
Ishan~Gaur$^{\ast}$,
Maria~Lukarska$^{\ast}$,
Hunter~Nisonoff$^{\ast}$,\\
Luke~M.~Oltrogge,
David~F.~Savage$^{\dagger}$,
Jennifer~Listgarten$^{\dagger}$\\ 

\small$^{\dagger}$Corresponding authors. Email: \url{savage@berkeley.edu}, \url{jennl@berkeley.edu}\\
\small$^{\ast}$These authors contributed equally to this work.
\end{center}

\subsubsection*{This PDF file includes:}
Materials and Methods\\
Supplementary Text\\
Supplementary Tables\\
Supplementary Figures\\

\tableofcontents

\newpage


\section{Materials and Methods}
The Materials and Methods are organized as follows. We first provide an overview of the technical underpinning of \ourmethod (Section~\ref{sec:method-overview}), followed by more detailed technical background (Section~\ref{sec:method-background}). We proceed to show the equivalence of training objectives (Section~\ref{sec:method-training}) and sampling algorithms (Section~\ref{sec:method-sampling}) for a broad class of models, including masking-based diffusion/flow-matching models (MDMs/MFMs), generative Masked Language Models (MLMs), and \aoamlongs (\aoamshorts). The sections afterwards provide full experimental details for all experiments, including an overview for the \textit{in silico} experiments (Section~\ref{sec:method-insilico}), followed by dedicated sections for each experiment (Section~\ref{sec:method-stability}-\ref{sec:method-tadA}).

\subsection{Overview of \ourmethod}
\label{sec:method-overview}

Conceptually, the problem of guidance involves taking a pre-trained generative model, $p(x)$, and a classifier- or regression-derived likelihood, $p(y | x)$, to obtain \mbox{$p(x | y) = p(y|x) p(x) / p(y)$}, without re-training the underlying generative model. At bottom, designing guidance strategies amounts to exploiting local structure in the generative process so that this reweighting can be implemented tractably and efficiently during generation, while producing samples from the correct conditional distribution at the end of the process. As an example, diffusion models on continuous state-spaces generate data by learning to reverse a ``noising'' process, most commonly instantiated as gradually corrupting the data towards pure Gaussian noise~\citep{sohl2015deep,song2020score,ho2020denoising}. Intuitively, guidance in diffusion models is achieved by leveraging a key property of the underlying generative process---namely, that the model learns to map from a simple distribution (\eg, a standard normal distribution) to a complex data distribution through a series of infinitesimal steps. Each step ``points'' towards directions with higher likelihood under the data distribution, and can be ``nudged'', at generation time, towards directions that better satisfy the user-specified likelihood, so that the process ends with samples from the desired conditionals. These infinitesimal steps are formalized as gradients of noise-dependent distributions, known as \textit{score functions}, and Bayes' Rule can be simplified considerably when considering only these gradients, enabling guidance for these models~\citep{sohl2015deep,song2020score,dhariwal2021diffusion}.

For discrete-space generative models, however, these gradients are not defined, and such an approach cannot be readily adopted. Nevertheless, discrete-space diffusion and flow-based models also admit a natural local structure, but in a different form. These models learn to reverse a user-defined discrete noising process, such as introducing random mutations, or accumulating mask tokens~\citep{austin2021structured,hoogeboom2021autoregressive}. Such process can be formalized with continuous-time Markov chains (CTMCs), which can be described as a generative process wherein a state remains unchanged over some amount of holding time, then changes (jumps) to a new state. In practice, these two processes of holding, and jumping, are all jointly encoded in an object known as the (time-dependent) \textit{rate matrix}, which describes the time derivative of the probability of transitioning from any state to any other state at a given time. These rate matrices play a similar role to the score functions in continuous-space diffusion models, and sampling corresponds to integrating the CTMCs, for example, by straightforward Euler integration~\citep{campbell2024generative} or with more involved methods~\citep{gillespie1977exact, gillespie2001approximate, campbell2022continuous}. To make learning tractable, these models typically assume that the noising processes in each dimension are independent of one another (but not necessarily identical), with the important consequence that at any given time in the continuous-time noising process, the probability that two or more dimensions of the current state jump (transition) at exactly the same time is zero~\citep{campbell2022continuous,campbell2024generative}. Consequently, at any given time, the number of entries in the rate matrix that are non-zero is linear, rather than exponential, in the input space, making such models tractable.

It turns out that the same assumption which makes training these models tractable also makes guidance tractable for them. By reformulating Bayes' Rules in terms of the rates, we find that the \textit{conditional} rates at each sampling step can be obtained by modulating the unguided rates with a likelihood ratio describing how much the guidance signal increases (or decreases) in an infinitesimal time step. Since at any given time step, only transitions with a single mutation have non-zero rates, one can tractably compute the normalizing constant for any given time step~\citep{nisonoff2025unlocking}. Since this procedure affects only sampling and not training, one can sample from any desired conditional distribution for which a user-specified likelihood is available, without having to re-train the underlying pre-trained generative model. We refer to this work as \ourmethodICLR and the reader to~\citet{nisonoff2025unlocking} for more details.

While the framework just presented applies generally to discrete-space diffusion and flow-based models under a broad set of noising processes, one popular instantiation worth considering is a noising process based on masking, where tokens in a sequence are progressively replaced with mask tokens during the noising process and the model learns to recover the original tokens during the denoising process. These models share deep connections to two other popular classes of discrete-space generative models, Masked Language Models (MLMs) with masking rates covering the range $0\%-100\%$,, sometimes known as generative MLMs, and any-order autoregressive models (\aoamshorts). Although conceptually masked diffusion/flow-matching models (MDMs/MFMs), MLMs, and AO-ARMs seem to be quite different, it turns out that the loss functions used for training them are equivalent. While connections between these training objectives has been previously noted~\citep{austin2021structured,hoogeboom2021autoregressive,shi2024simplified,sahoo2024simple,ou2025your}, the practical utility of this observation has been unclear. Herein, we show that this connection enables a general framework for guiding all of these model classes. Moreover, we additionally show for the first time that sampling from a masked diffusion/flow-matching model can be accomplished more simply, yet exactly, by sampling from a corresponding, trivial-to-construct \aoamshort, which leads to a more efficient, exact guidance algorithm for MDMs, MFMs, MLMs, and \aoamshorts. Altogether, our unifying view of training and sampling of all four of these model classes under a single framework, enables \ourmethod to guide a broad class of discrete-space generative models, making it readily applicable to many popular protein sequence generative models commonly used by practitioners.

\subsection{Technical background }
\label{sec:method-background}

Here we give a more mathematical treatment of the overview above, providing the technical background important for understanding the results shown in later sections. 

\subsubsection{Discrete-space diffusion and flow models (DDMs and DFMs)}
While diffusion models were initially developed for continuous state-spaces~\citep{sohl2015deep,song2020score,ho2020denoising}, recent work has extended these frameworks to discrete domains such as text, molecular graphs, and protein sequences. These extensions enable generative modeling of inherently discrete data without requiring continuous relaxations. There are a number of works defining discrete-time diffusion processes in discrete spaces~\citep{austin2021structured, hoogeboom2021argmax}. They were later extended to continuous-time~\citep{campbell2022continuous,sun2022score, lou2023discrete} leveraging the formulation of continuous-time Markov chains (CTMCs), which describe stochastic processes where a system transitions between discrete states over continuous time. The dynamics of a CTMC are fully determined by an initial distribution $p(x_0)$ and transition rates $R_t(x, \tilde{x})$ that specify the instantaneous probability of transitioning from state $x$ to state $\tilde{x}$ at time $t$:
\begin{equation}
    p(X_{t+dt} = \tilde{x}|X_t = x) = \delta_{x,\tilde{x}} + R_t(x, \tilde{x})dt + O(dt^{1+\epsilon}).
\end{equation}
As an example, \citet{campbell2022continuous} defines a forward noising process by specifying a time-dependent ``forward'' rate matrix that gradually transforms the (unknown) data distribution into a reference distribution, such as a uniform distribution over amino acid sequences. A denoising model is then trained to reverse this process, learning rates that can transform the corrupted samples back to samples from the data distribution.

There are also a number of works that extend flow matching, which were originally developed for continuous state-spaces~\citep{albergo2022building, lipman2022flow, liu2022flow}, to discrete state-spaces~\citep{campbell2024generative, gat2024discrete}, also leveraging CTMCs, but differ from diffusion models in how the CTMCs are defined, which also induce different training objectives. These models specify a conditional flow that defines how probability mass should move from a noise distribution to the data distribution. The model learns to generate samples by following the flow trajectories. \citet{campbell2024generative} show that the rates used in generative sampling can be obtained by training a neural network with parameters $\theta$, $p_\theta(x_1|x_t, t)$, to approximate the true denoising distribution using the standard cross-entropy loss,
\begin{align}\label{eq:fmloss}
\mathcal{L}_{\text{FM}} = \mathbb{E}_{x_1 \sim p_{\text{data}}(x_1),\, t\sim p(t),\, x_t \sim p_{t|1}(x_t|x_1)}\left[\log p_\theta(x_1|x_t, t)\right],
\end{align}
where $x_1$ is the clean data, $p(t)$, referred to as the noise schedule, is a distribution with full support on the interval $[0,1]$, and $x_t$ is the noised data sample from the forward noising process $p_{t|1}(x_t | x_1)$.\footnote{For all models, we follow the convention of flow-based models, where $t=1$ is the target (training) distribution, and $t=0$ is the noise distribution.} As shown in \citet{campbell2024generative}, the cross entropy loss can be understood as a simplification to the ELBO used to train continuous-time discrete diffusion models~\citep{campbell2022continuous}. Given such a trained denoising model $p_\theta(x_1|x_t, t)$, the (unconditional) rate matrix used in sampling is defined as
\begin{equation}\label{eq:dfm_rates}
    R^{\theta}_t(x_t, x_{t+dt}) = \mathbb{E}_{x_1 \sim p_\theta(x_1 | x_t)} \Big[\overrightarrow{R_t}(x_{t+dt}, x_{t} | x_1) \Big],
\end{equation} where $\overrightarrow{R_t}(x_{t+dt}, x_{t} | x_1)$ is the forward noising rate matrix induced by $p(x_t|x_1)$.

In practice, for computational tractability, an independent forward noising process is used for each dimension,  \mbox{$i \in \{1, 2, ..., D\}$}, of $x_1$. \citet{campbell2022continuous,campbell2024generative}~show that in this setting, the reverse process also factorizes per dimension. Thus, the denoising model, $p_\theta(x_1|x_t, t)$, need only learn a factorized distribution of the clean data and the loss becomes
\begin{align}
\mathcal{L}_{\text{FM}} &= \mathbb{E}_{x_1 \sim p_{\text{data}}(x_1), t\sim p(t),, x_t \sim p_{t|1}(x_t|x_1)}\left[\log\prod_{i=1}^D  p_\theta(x_1^i|x_t, t)\right].
\end{align}
Similarly, the rate matrix can be expressed per dimension as
\begin{equation}\label{eq:dfm_rates_per_dim}
    R^{\theta,i}_t(x_t, x_{t+dt}^i) = \mathbb{E}_{x_1^i \sim p_\theta(x_1^i | x_t, t)} \Big[\overrightarrow{R_t^i}(x_{t+dt}^i, x_{t}^i | x_1^i) \Big],
\end{equation} where $\overrightarrow{R_t^i}(x_{t+dt}^i, x_{t}^i | x_1^i)$ is the forward noising rate matrix applied to dimension $i$.

There are many methods for sampling from discrete state-space diffusion and flow models by integrating the learned rate matrices with numerical integration schemes~\citep{campbell2022continuous,campbell2024generative,gat2024discrete,peng2025path}. One simple approach is Euler integration with a discrete time step $\Delta t$, where transition probabilities are computed as $p(x_{t+\Delta t} = \tilde{x}|x_t = x) = \delta_{x,\tilde{x}} + R_t(x,\tilde{x})\Delta t$ and new states are sampled categorically at each step. Alternative methods include Gillespie's algorithm~\citep{gillespie1977exact,gillespie2001approximate} for exact simulation and $\tau$-leaping~\citep{campbell2022continuous} for approximate sampling that allows multiple dimension to transition simultaneously. The flow matching formulation also introduces a stochasticity hyperparameter $\eta$ that can be adjusted at inference time to control the randomness of the sampling path, which \citet{campbell2024generative} showed could improve sample quality.

A popular instantiation of both discrete state-spaces diffusion and flow models uses masking as the per-dimension noising process, where tokens in a sequence are progressively replaced with mask tokens during the forward process and the model learns to recover the original tokens during the reverse process. For example, the masking conditional flow can be defined as $p^{\text{mask}}(x_t^i|x_1^i) = \text{Cat}(t\delta\{x_1^i,x_t^i\} + (1-t)\delta\{?,x_t^i\})$, where $\delta\{i,j\}$ is the Kronecker delta which is $1$ when $i=j$, $0$ otherwise, and ``?'' denotes a mask token.
\citet{campbell2024generative}~showed that under the masking noise process, the unconditional rate matrix used in sampling defined in Equation~\eqref{eq:dfm_rates_per_dim} simplifies to

\begin{align}\label{eq:dfm_rates_mask}
    R^{\theta, i}_t(x_t, x_{t+dt}^i) = \frac{p_\theta(x_1^i \, | \, x_t, t)}{1-t}\delta\{x_t^i,?\}.
\end{align}

This formulation has connections to Masked Language Models and \aoamlong, which we will describe in detail next.

\subsubsection{Masked language models (MLMs)}

Masked Language Models (MLMs) loosely refer to a broad class of models that involves training a model to recover sequences of discrete tokens given the input where some of the tokens have been replaced by a special token known as the ``mask'' token. Herein we focus on a class of Masked Language Models (MLMs), that we will refer to as ``generative Masked Language Models''~\citep{ghazvininejad2019mask, chang2022maskgit}. The key distinction between generative Masked Language Models and their non-generative counterparts~\citep{devlin2019bert}, is that during training, generative MLMs mask tokens with a masking probability, $t \in [0,1]$, sampled from a distribution $p(t)$ that has full support on the interval $[0,1]$. This requirement is met by MLMs such as ESM3~\citep{hayes2025simulating}, but, importantly, is not met by BERT-style MLMs such as ESM2~\citep{lin2023evolutionary}, since these models use a fixed percentage of tokens that are masked at each step. Letting $x_t$ refer to the masked sequence with masking proportion denoted by $t$, we have that the generative masked language modeling objective function is

\begin{align}\label{eq:mlmloss}
    \mathcal{L}_{\text{MLM}} &= \mathbb{E}_{x_1 \sim p_{\text{data}}(x_1), t\sim p(t),\, x_t \sim p_{t|1}(x_t|x_1)}\left[\log\prod_{i=1}^D  p_\theta(x_1^i|x_t)\right],
\end{align}

There are various approaches that are used to sample from a generative MLM. Some common approaches include directly decoding all the tokens from the fully masked sequence, or decoding the sequence iteratively~\citep{hayes2025simulating}. For iteratively decoding, a commonly used heuristic to choose the decoding order which prioritizes the token to be unmasked according to some confidence~\citep{chang2022maskgit}. 

\subsubsection{Any-order autoregressive models (\aoamshorts)}
Any-order autoregressive models (\aoamshorts) differ from standard autoregressive models by training on arbitrary permutations of the sequence ordering~\citep{uria2014deep,hoogeboom2021autoregressive}. Consider a sequence $x = (x^1, x^2, ..., x^D)$ from our data distribution. In an \aoamlong, we sample a permutation $\sigma$ of indices $\{1, 2, ..., D\}$ and factorize the joint distribution as:
\begin{align}
p_\theta(x) = \prod_{i=1}^D p_\theta(x^{\sigma(i)} | x_1^{\sigma(<i)})
\end{align}
where $x^{\sigma(<i)}$ denotes $(x^{\sigma(1)}, x^{\sigma(2)}, ..., x^{\sigma(i-1)})$.
The \aoamshort\ objective is to maximize the expected log-likelihood over all possible permutations:

\begin{align}\label{eq:aoarm_loss}
    \mathcal{L}_{\text{\aoamshort}} &= \mathbb{E}_{x \sim p_{\text{data}}(x), \sigma \sim \text{Unif}(\text{Perm}(D))}\left[\sum_{i=1}^D \log p_\theta(x^{\sigma(i)} | x^{\sigma(<i)})\right]
\end{align}

Sampling from an \aoamshort~\cite{uria2014deep, yang2019xlnet, strauss2021arbitrary, hoogeboom2021autoregressive} involves iteratively constructing a sample, $x=(x^1, \ldots, x^D)$ of length $D$, starting from a fully masked sequence, by first sampling an ordering of positions uniformly at random, then sample iteratively from the conditional distribution of the model given this ordering.

\subsubsection{Conditional generation with guidance}
\label{sec:guidance}

Guidance is a useful technique that enables conditional generation with diffusion and flow models~\citep{sohl2015deep,song2020score,dhariwal2021diffusion}. Originally developed for continuous state-space models, guidance allows steering the generative process toward samples with desired properties without requiring conditional training. In continuous state-space diffusion models, the generation process involves sampling from a reverse stochastic differential equation (SDE) that gradually transforms noise into data~\citep{song2020score}. To condition this process on a desired property $y$, it has been shown that one can leverage Bayes' theorem to obtain the conditional score function~\citep{song2020score}:
\begin{equation}
    \nabla_{x_t}\log p_t(x_t|y) = \nabla_{x_t}\log p_t(x_t) + \nabla_{x_t}\log p_t(y|x_t)
\end{equation}

This formulation, termed \textit{classifier guidance}, combines the unconditional score $\nabla_{x_t}\log p_t(x_t)$ with the gradient of a predictor model $\nabla_{x_t}\log p_t(y|x_t)$. The strength of guidance can be controlled by introducing a \textit{guidance strength} parameter $\gamma$~\citep{dhariwal2021diffusion}:
\begin{equation}
    \nabla_{x_t}\log p^{(\gamma)}_t(x_t|y) = \nabla_{x_t}\log p_t(x_t) + \gamma \nabla_{x_t}\log p_t(y|x_t)
\end{equation}

This approach provides several advantages: (ii) it enables conditioning an unconditional model without retraining, (ii) the guidance strength $\gamma$ can be adjusted at inference time, and (iii) it allows composition of multiple guidance signals. \citet{ho2022classifier} later introduced \textit{classifier-free guidance}, which achieves similar results without an explicit classifier by combining a conditionally trained model and an unconditional model, which in practice are typically trained jointly. Classifier-free guidance has been shown to improve sample quality~\citep{ho2022classifier} compared to using only the conditionally trained model, and recent work on \textit{autoguidance} showed further improved sample quality by combining a trained model with a more inferior version of itself~\citep{karras2024guiding}. 

While guidance, either classifier-based, classifier-free or autoguidance, has proven extremely valuable for continuous state-space models, extending these concepts to discrete state-spaces poses unique challenges, since the score functions are not defined and these formulations cannot be directly applied. One can sidestep the need for a score function by instead returning to Bayes' theorem, which, in the context of diffusion on discrete state-spaces, dictates that we must effectively compute
\begin{equation}
\label{eq:transition_bayes}
p(x_{t+dt}|x_t, y) 
= 
\frac{
p(y|x_{t+dt}, x_t)
p(x_{t+dt}|x_t)
}{
\sum\limits_{x^{\prime}_{t+dt}}
p(y|x^{\prime}_{t+dt}, x_t)
p(x^{\prime}_{t+dt}|x_t)
},
\end{equation} where $x_{t+dt}$ and $x_{t}$ are separated by a infinitesimal time step $dt$, and $y$ is the desired property we wish to condition on. In a $D$-dimensional discrete state-space where each dimension has cardinality $S$, there are $S^D$ terms in the normalizing constant of Equation \ref{eq:transition_bayes}, rendering it generally intractable. Each term arises from the fact that, without constraints, any state can be reached from any other state. Intuitively, tractability can only be achieved if the number of terms is not exponential, thereby implying some constraints on the allowed state transitions. Next, we shall see how the continuous-time formulation of discrete state-space diffusion and flow models enables us to tractably compute the normalizing constant by imposing such constraints, without losing any model expressibility.

\subsubsection{Guidance in discrete state-spaces}
Here we provide an overview for \ourmethodICLR, a general method for guidance in discrete-space diffusion and flow models and refer the readers to~\citet{nisonoff2025unlocking} for detailed derivations and additional results. To enable guidance for discrete state-space models, we turn to continuous-time diffusion models and flow models in discrete state-spaces. As described above, these models are based on continuous-time Markov chains (CTMCs) and therefore parameterize transition rates $R_t(x, \tilde{x})$ rather than score functions. These models also assume that the noising processes in each dimension are independent of one another (but not necessarily identical), which has the important consequence that the
probability that two or more dimensions of the current state jump (transition) at exactly the same
time is zero, which makes these models tractable~\citep{campbell2022continuous,campbell2024generative}. It turns out that the same assumption also makes guidance tractable for these models~\citep{nisonoff2025unlocking}.

To perform predictor guidance in this setting, we derived a principled approach to adjust these rates based on a predictor model $p(y|x_t)$ that evaluates how likely a state $x_t$ is to have the desired property $y$~\citep{nisonoff2025unlocking}. Specifically, we showed that the conditional rates can be obtained as:
\begin{equation}\label{eq:conditional_rates}
    R_t(x, \tilde{x}|y) = \frac{p(y|\tilde{x}, t)}{p(y|x, t)} R_t(x, \tilde{x})
\end{equation}
for any state $\tilde{x}$ that differs from $x$ in at most one dimension. This formulation directly implements Bayes' theorem at the level of transition rates, modulating transitions toward states that have higher likelihood of possessing the desired property.

Similar to continuous state-space guidance, we can control the strength of guidance with a guidance strength parameter $\gamma$:
\begin{equation}\label{eq:conditional_rates_with_gamma}
    R_t^{(\gamma)}(x, \tilde{x}|y) = \left[\frac{p(y|\tilde{x}, t)}{p(y|x, t)}\right]^{\gamma} R_t(x, \tilde{x})
\end{equation} or derive predictor-free guidance discrete state-spaces:
\begin{equation}
    R_t^{(\gamma)}(x, \tilde{x}|y) = R_t(x, \tilde{x}|y)^{\gamma}R_t(x, \tilde{x})^{1-\gamma}
\end{equation}

For computational efficiency, especially with large state-spaces, we introduced Taylor-approximated guidance (TAG)~\citep{nisonoff2025unlocking}, which approximates the likelihood ratio using a first-order Taylor expansion inspired by \citet{grathwohl2021oops}:
\begin{equation}
    \log \frac{p_{\phi}(y|\tilde{x}, t)}{p_{\phi}(y|x, t)} \approx (\tilde{x} - x)^T \nabla_x \log p_{\phi}(y|x, t)
\end{equation}
requiring only one forward and one backward pass of the predictor model instead of $D \times (S - 1) + 1$ forward passes, making estimation of the guide-adjusted rates $O(1)$ rather than $O(D \times S)$. With multiple particles, TAG can be further improved by acting as a proposal distribution in a Sequential Monte Carlo (SMC) algorithm similar to~\citet{wu2023practical}.  While there are other recently proposed approaches for guiding discrete diffusion models~\citep{li2024derivative,schiff2025simple,singhal2025a}, \ourmethod~has the benefit of being both exact and general, applicable to both diffusion and flow-based models, making it a general and flexible framework for guiding a broad class of discrete state-space generative models (for discussion on related work, see Section~\ref{sec:si-related-work}).

\subsection{Unified connections of training objectives for masked-based generative models}
\label{sec:method-training}

Connections among masked discrete diffusion models, Masked Language Models (MLM) and \aoamlongs (\aoamshorts) were initially pointed out by \citet{austin2021structured} and \citet{hoogeboom2021autoregressive}. Recent works~\citep{shi2024simplified,sahoo2024simple,ou2025your,campbell2024generative,gat2024discrete} further clarified these connections, including the connections to CTMC-based formulation of diffusion and flow models~\citep{campbell2022continuous,lou2023discrete,campbell2024generative}, and showed that the continuous-time ELBO used to train masked diffusion models is essentially a weighted sum of cross entropy losses predicting the masked states from unmasked positions, where the proportion of masked positions ranges from 0 (complete masking) to 1 (no masking). For particular choices of the noise schedules, these training objectives can be shown to be equivalent to the training objective of \aoamlong s~\citep{uria2014deep}, (generative) Masked Language Models~\citep{austin2021structured}, and discrete flow-matching models~\citep{campbell2024generative,gat2024discrete} under a masking process. Since ESM3~\citep{hayes2025simulating} and ProteinMPNN~\citep{dauparas2022robust} were trained under the MLM and \aoamshort objectives respectively, and owing to the sampling equivalency between \aoamshorts and diffusion models, we can view ESM3 and ProteinMPNN as discrete state-space diffusion and flow models, and sample from them as such, leveraging the generative rate matrices from Equation~\eqref{eq:dfm_rates} induced by the trained denoising model, $p_\theta(x_1 | x_t)$. This formulation not only provides a principled way to sample from these models, but also offers additional sampling flexibility, including, but not limited to, the usage of different noise schedules during sampling from those used during training, and corrector-sampling~\citep{campbell2024generative,gat2024discrete}, which have been shown to sometimes improve sampling results compared to prior heuristics~\citep{chang2022maskgit} for sampling from generative Masked Language Models~\citep{gat2024discrete}. In addition, this formulation unlocks the capacity to condition the generative process via guidance, which we introduce in a later section.

Here, we provide a self-contained derivation of the equivalence between the loss functions of masked flow matching (MFMs), Masked Language Models (MLMs), and \aoamlongs (\aoamshorts).

\subsubsection{Equivalence between MFMs and MLMs}

Recall from Equation~\eqref{eq:fmloss} that the flow matching loss is
\begin{align*}
\mathcal{L}_{\text{FM}} &= \mathbb{E}_{x_1 \sim p_{\text{data}}(x_1), t\sim p(t),\, x_t \sim p_{t|1}(x_t|x_1)}\left[\log\prod_{i=1}^D  p_\theta(x_1^i|x_t, t)\right].
\end{align*}
Our goal is to demonstrate that this loss function is equivalent to the loss function of generative Masked Language Models (MLMs), which, recalling from Equation~\eqref{eq:mlmloss}, can be written as:
\begin{align*}
\mathcal{L}_{\text{MLM}} &= \mathbb{E}_{x_1 \sim p_{\text{data}}(x_1), t\sim p(t),\, x_t \sim p_{t|1}(x_t|x_1)}\left[\log\prod_{i=1}^D  p_\theta(x_1^i|x_t)\right],
\end{align*}
where we can see that the key difference between Equation~\eqref{eq:mlmloss} and Equation~\eqref{eq:fmloss} is that the latter has a denoising neural network with a dependence on $t$. We will now show that when a masking noise process is used for flow matching, the denoiser's dependency on $t$ can be dropped, making the two loss functions equivalent.

Let \mbox{$x_1 = (x_1^1, x_1^2, ..., x_1^D)$} be our original sequence from the data distribution. For a masking noise process we define binary masking variables $m_t^i \in \{0,1\}$ for each position $i$ at time $t$, where $m_t^i = 1$ indicates token $i$ is masked at time $t$ and $m_t^i = 0$ indicates token $i$ is not masked at time $t$.
Using `?' to denote a masked state, the state $x_t$ is then defined as:
   \begin{align}
   x_t^i = \begin{cases} 
   x_1^i & \text{if } m_t^i = 0 \\
   \text{?} & \text{if } m_t^i = 1.
   \end{cases}
   \end{align}
Let us additionally define:
\begin{enumerate}
\item $M_t = \{i : x_t^i = \text{?}\}$ as the set of masked positions at time $t$.
\item $U_t = \{i : x_t^i \neq \text{?}\}$ as the set of unmasked positions.
\end{enumerate}
For any position $i \in U_t$, we know with certainty that $x_1^i = x_t^i$ by the definition of our forward process.
%
%
The key observation is that the time variable $t$ only influences which tokens are masked (the masking pattern), but once we observe $x_t$, we have complete information about which positions are masked and which are not. The time variable, therefore, provides no additional information needed to predict the clean data.
Using a mathematical argument first proposed by~\citet{gat2024discrete}, we can prove this as follows:
\begin{align}
p(x_1 \mid x_t, t) &= \frac{p(x_t, t \mid x_1)p(x_1)}{\sum_{\tilde{x}_1}p(x_t, t \mid \tilde{x}_1) p(\tilde{x}_1)}\\
&= \frac{\cancel{\big[\prod_{i\in M_t} (1-t)\prod_{i\in U_t}t\big]}\big[\prod_{i\in U_t} \delta\{x_t^i, x_1^i\}\big] p(t) p(x_1)}{\sum_{\tilde{x}_1}\cancel{\big[\prod_{i\in M_t} (1-t)\prod_{i\in U_t}t\big]}\big[\prod_{i\in U_t} \delta\{x_t^i, \tilde{x}_1^i\}\big] p(t) p(\tilde{x}_1)} 
 \label{eq:mlm_time_indept_line2}\\
&= \frac{\big[\prod_{i\in U_t} \delta\{x_t^i, x_1^i\}\big] p(x_1)}{\sum_{\tilde{x}_1}\big[\prod_{i\in U_t} \delta\{x_t^i, \tilde{x}_1^i\}\big] p(\tilde{x}_1)} \label{eq:mlm_time_indept_line3}\\
&= p(x_1 \mid x_t) \label{eq:mlm_time_indept_line4}.
\end{align}
where Equation~\eqref{eq:mlm_time_indept_line2} uses the factorization over the dimensions and analyze separately the two cases of whether a position is masked or unmasked, and Equation~\eqref{eq:mlm_time_indept_line3} shows that $p(x_1 \, | \, x_t, t)$ does not depend on $t$, giving us Equation~\eqref{eq:mlm_time_indept_line4}. Thus, we can rewrite the masking flow matching loss function as:
\begin{align}
\mathcal{L}_{\text{FM}} &= \mathbb{E}_{x_1 \sim p_{\text{data}}(x_1),\, t\sim p(t),\, x_t \sim p_{t|1}(x_t|x_1)}\left[\log \prod_{i=1}^D p_\theta(x_1^i|x_t,t)\right]\\*
&= \mathbb{E}_{x_1 \sim p_{\text{data}}(x_1),\, t\sim p(t),\, x_t \sim p_{t|1}(x_t|x_1)}\left[\log \prod_{i=1}^D p_\theta(x_1^i|x_t)\right]\\*
&= \mathcal{L}_{\text{MLM}}.
\end{align}

\subsubsection{Equivalence between MFMs and \aoamshorts}

Recall that from Equation~\eqref{eq:aoarm_loss}
The \aoamshort\ objective is to maximize the expected log-likelihood over all possible permutations:
\begin{align*}
\mathcal{L}_{\text{\aoamshort}} &= \mathbb{E}_{x \sim p_{\text{data}}(x), \sigma \sim \text{Unif}(\text{Perm}(D))}\left[\sum_{i=1}^D \log p_\theta(x^{\sigma(i)} | x^{\sigma(<i)})\right]
\end{align*}

We will show that this objective can be equivalently expressed as a masking discrete flow matching loss:
\begin{align}
\mathcal{L}_{\text{\aoamshort}} &= \mathbb{E}_{x_1 \sim p_{\text{data}}(x_1), \sigma \sim \text{Unif}(\text{Perm}(D))}\left[\sum_{i=1}^D \log p_\theta(x_1^{\sigma(i)} | x_1^{\sigma(<i)})\right]\\
    &= D \mathbb{E}_{x_1 \sim p_{\text{data}}(x_1), \sigma \sim \text{Unif}(\text{Perm}(D)), i \sim \text{Unif}(\{1, ..., D\}) }\left[\log p_\theta(x_1^{\sigma(i)} | x_1^{\sigma(<i)})\right]\\
    &= D \mathbb{E}_{x_1 \sim p_{\text{data}}(x_1), i \sim \text{Unif}(\{1, ..., D\}),\sigma \sim \text{Unif}(\text{Perm}(D))}\left[\log p_\theta(x_1^{\sigma(i)} | x_1^{\sigma(<i)})\right]\label{eq:perm}\\
    &= D \mathbb{E}_{x_1 \sim p_{\text{data}}(x_1), t \sim \text{Unif}([0,1]),x_t \sim p_{t|1}(x_t|x_1)}\left[\log p_\theta(x_1 \mid x_t)\right]\label{eq:permtodfm}\\
    &= D \mathcal{L}_{\text{FM}},
\end{align}
where we go from Equation~\eqref{eq:perm} to Equation~\eqref{eq:permtodfm} by recognizing that randomly picking the number of positions to mask and sampling a random permutation is equivalent to picking a random time and sampling the forward noising process for that time. Thus, we have shown that the \aoarlong\ modeling loss is equivalent, up to a constant, to a masking discrete flow matching loss with a uniform noise schedule.

\subsection{Simplified and faster sampling for flow matching and diffusion models with masked noise processes}
\label{sec:method-sampling}


In previous sections, we showed an equivalence between the loss functions used to train discrete flow-matching models (DFMs), discrete diffusion models (DDMs), \aoamlongs (\aoamshorts) and masked language models (MLMs), specifically when the DFMs and DDMs used a masked noise process, which we referred to as masked flow matching models (MFMs) and masked diffusion models (MDMs). In this section, we show a relationship between sampling algorithms used for these models.

\subsubsection{Intuitive overview}
We first provide an intuitive overview of the main technical result. We will show that instead of sampling from a CTMC---the required operation for sampling from a CTMC-based DFM---by way of integration in time using algorithms such as Euler, Gillespie, or $\tau$-Leaping (of which only Gillespie is exact), that one can instead use a much simpler algorithm to exactly sample from the CTMC. Specifically, one can use a simpler sampling algorithm that is normally used to sample from an \aoamshort. In particular, we will be able exactly sample from a CTMC underlying a DFM by independently sampling all transition times up-front---meaning before any state transitions have been sampled---and only then sampling all the state transitions. This lies in stark contrast to general CTMC sampling in which sampling time and state transitions must be interleaved. However, it is exactly how one would sample from an \aoamshort, where one first samples a "decoding order``, and then autoregressively unmasks each position in turn according to this order, conditioning on all previously unmasked positions. Consequently, there is a constructive proof below for which \aoamshort \ to sample from so as to sample from the desired CTMC associated with a DFM. As a corollary, we also show that one only needs the \emph{ordering} of the transition times to sample the state transitions. 

In order to achieve correctness of this simplified sampling algorithm, we will need to make use of commonly used modeling choices made for CTMC-based DFMs, such as that these CTMCs follow an interpolation schedule as prescribed by the flow matching set-up. These choices enable us to derive a joint probability distribution for the CTMC over time and sequences that factorizes in the necessary way, namely into a term containing solely transition (holding) times, and another containing solely state transitions.

So far we have discussed only DFMs, but through other work that connects DFMs as a generalization of DDMs (see section below on the related work specific to sampling equivalances), these diffusion models inherit all the work of sampling from a DFM. We will discuss related work for DFMs below. 
Because we showed earlier that the training objective for MLMs is the same as for \aoamshort \ and MDMs, we can similarly appeal to our earlier reasoning of interpreting an MLM as either an \aoamshort \ or DFM, and correspondingly also sample from it in the same as described herein.

In addition to showing simplified and faster sampling for CTMC-based DFMs, we also, to the best of our knowledge, show for the first time how to use this new viewpoint to provide an alternative derivation for the guidance results for CTMC-based diffusion and flow matching models presented in~\citet{nisonoff2025unlocking}. 

\subsubsection{Related work on sampling algorithms}

A number of existing works have discussed the topics herein, but these works are for the specific case of diffusion models, namely CTMC-based MDMs. We prove similar results for the case of flow matching models, namely CTMC-based MFMs.
We also synthesize these results from prior work to give a coherent, constructive proof for the equivalence of paths sampled by MFMs and \aoamshort.

As mentioned above, it has been previously shown that sampling exactly from the CTMC of a MDM is equivalent to sampling unmasking times for each position upfront and then using this to define the decoding order to sample from an \aoamshort~\cite{peng2025path, chen2024fast, hoogeboom2021autoregressive, zheng2024masked, ou2025your, shi2024simplified, sahoo2024simple}. The discussions in \citet{ou2025your} and \citet{peng2025path} are most similar to this section in spirit, but again, analyze the equivalence for MDMs specifically. Furthermore, neither works give a unified proof of the equivalence in sampling which allows complete removal of the notion of time when sampling from an MFM, showing equivalence for the full path and not just marginal distributions. Appendix H of \citet{campbell2024generative} further shows a correspondence between DDMs and DFMs where DFMs define a large class of rate matrices for which DDMs correspond to a specific choice. We believe this holds in the general case, such that our results for MFMs should be reducible to the previously cited results for MDMs.

Previous works by \citet{gat2024discrete, campbell2024generative} on MFMs have mentioned that the conditional distribution parameterized by the neural network is independent of time (see \cite{sahoo2024simple, shi2024simplified, peng2025path} for the analogous MDM results). 
This is a key insight that we use in this section as well. However, as used by prior work, we do not believe this insight suffices to show the AO-AR sampling method for MDMs samples the desired CTMC, nor does it directly extend the MDM proofs to MFMs. Herein, we provide a proof of the exact equivalence, which additionally gives a method to sample the transition times. Integrating these findings, Proposition~\ref{prop:main} gives us a constructive proof to sample from the desired CTMC using either an MFM or an AO-ARM.

Our proofs hold for the common MFM formulation from \citet{campbell2024generative, gat2024discrete}, which we define again in the section below. Many of the results can readily be extended to other noise distributions, such as uniform substitutions, or to the use of stochasticity and corrector sampling. However, in those cases, a full reduction to AO-ARM sampling is not possible. Instead, the calculations yield an efficient Gillespie sampling algorithm, which we do not cover in detail.

\subsubsection{Background and notation}
\label{sec:sampling-background}

In this section, we briefly review the relevant concepts for \aoamshort s, CTMCs, and flow-matching models. Although some of these were covered earlier, here we quickly remind the reader so as to also introduce notation convenient specifically to this section.


Sampling from an \aoamshort~\cite{uria2014deep, yang2019xlnet, strauss2021arbitrary, hoogeboom2021autoregressive} involves iteratively constructing a length-$D$ sample $X=(X^1, \ldots, X^D)$ starting from a fully masked sequence, in two steps. First, a permutation, $\sigma$, of positions, \mbox{$1, 2, \ldots, D$} is sampled uniformly at random. This permutation dictates the order of positions to unmask as the sample is constructed over $D$ iterations (one for each position), sometimes called the ``decoding order". Given the permutation ordering, $\sigma(1,2, \dots, D)$, the model progressively unmasks each position according to this ordering, one at a time, autoregressively. 
That is, at each step, the model samples from the conditional distribution, \mbox{$X^{\sigma(i)} \sim p(X^{\sigma(i)} | x^{\sigma(<i)})$}. These distributions are the core object learned when training an \aoamshort, typically by a neural network. We denote the set of progressively unmasked sequences generated during the autoregressive sampling as $\bm{X}=(X_1, \ldots, X_D)$, where bold indicates a ordered set of random variables, and subscript denotes the $i^\text{th}$ unmasking iteration. Note that both the dimension of $X$ is $D$, and that there are $D$ unmasking operations to generate it, which can be confusing. We sometimes refer to $\bm{X}$ as the ``path''. 
We can also represent the path compactly as a tuple of the decoding order and the final sequence, denoted  $\bm{X}=(\sigma, X)$.

Formally, we write the joint probability of the permutation ordering, $\sigma$, and the fully unmasked sequence, $X$  (together sometimes called the ``decoding path'')  as
\begin{equation}
    \Po(\sigma, X)=U[S_D](\sigma) \prod_{i=1}^D p \left( X^{\sigma(i)} | x^{\sigma(<i)} \right), \label{eq:p-oa}
\end{equation}
 where $S_D$ is the group of permutations over sequences of length $D$, $U[S_D](\sigma)$ indicates $\sigma$ is sampled uniformly at random from $S_D$, $X^{\sigma(i)}$ is the random variable for the value of position $\sigma(i)$, and $x^{\sigma(<i)}$ is the set of $i-1$ observed positions that are unmasked at the $i$-th decoding step.


Continuous time Markov chains (CTMCs) describe a sequence of Markovian transitions between discrete states as a continuous time process. For a detailed introduction, including path probability calculations like the ones done here, see \citet{del2017stochastic}. When a CTMC ``jumps'' from a state $x$ to state $\tilde{x}$ at time $\tau$, we say that the CTMC experienced a state transition from $x$ to $\tilde{x}$ at jump time $\tau$. Since the transition is instantaneous, we define the new state to be the state at time $\tau$: $X_\tau=\tilde{x}$. Accordingly, we say the previous state $x$ is the left limit of the CTMC at time $\tau$, meaning it is the value of the CTMC going right up to the jump time. We denote the left limit $X_{\tau_i}^{-}=\lim_{t\nearrow \tau_i} X_t = X_{\tau_{i-1}}$. 

We will often omit the random variables when symbol for the observed value is the corresponding lower-case letter. For example, $\Pm(x_t)=\Pm(X_t=x_t)$ refers to the probability of observing the random variable $X_t$ taking on value $x_t$. Sometimes, when the variable for the observed value does not correspond to the random variable it is assigned to, we will write both explicitly. For example, in writing $\Pm(x_t | X_t^-=x_s)$ we refer to the probability that the masked flow matching model (MFM) transitions from $x_s$, the state it reached at a previous time $s$, to $x_t$ at time $t$. Recall that $X_t^-=x_s$ indicates that the state $x_s$ is the left limit $X_t^{-}$ of the state at $t$.

Finally, the rate matrix from $x$ to $\tilde{x}$ at time $t$ is denoted $R_t(x, \tilde{x}$). The rate matrix is generally time-dependent even when conditioned on the state. In flow-matching models, transition rates between states generally accelerate in time to ensure the process terminates by time $t=1$. Accordingly, the CDF of the jump time, is an integral of time-dependent rates and does not always have a closed-form expression. It is written as
\begin{equation}
\mathbb{P}\left(T_k<t \mid T_{k-1}\right)=1-\exp \left(\int_{s=T_{k-1}}^{s=t} R_s\left(X_s^{-}, X_s^{-}\right) \mathrm{d} s\right). \label{eq:holding-time}
\end{equation}
As one can see, this integrates the rate of the CTMC not undergoing a state transition at time $s$.


Discrete flow matching models (DFMs) were formulated by \citet{campbell2024generative,gat2024discrete} to define a CTMC whose marginal distribution over states interpolates between a noise distribution $p_\text{noise}$ at time $0$ and a data distribution $p_\text{data}$ at time $1$.

Formally, this requires that the rates learned by our DFMs are solutions to the Kolmogorov Forward Equation (KFE) that satisfy the interpolation $X_t=\kappa_t X_1 + (1-\kappa_t)X_0$ subject to $X_1 \sim p_\text{data}$ and $X_0 \sim p_\text{noise}$. The \textit{interpolation schedule} $\kappa_t$ sets the probability of finding $X_t$ in the denoised state $X_1$ at each time $t$. The interpolation schedule can be freely defined by the user so long as $\kappa_0=0$ and it monotonically increases to $\kappa_1=1$. Note, for the MFMs analyzed in this section, this means we can interpret $\kappa_t$ as the CDF of the jump time for a position, $\kappa_t=\Prob(\tau<t)$. This, in turn, implies that the time derivative is the PDF of the jump time, $\dot{\kappa_t}=\Prob(\tau=t)$.

The CTMC underlying a DFM is used to sample transport paths to map samples from $p_\text{noise}$ to final samples from $p_\text{data}$, the distribution the DFM seeks to model. The transport is simulated by sampling a sequence $X_0 \sim p_\text{noise}$ (all masked in the case of a masked flow-matching model, MFM) and numerically integrating the CTMC by sampling
\begin{equation*}
X_{t+\Delta t} \sim p_{t+dt|dt}(\cdot|x_t) \approx \delta_{x_t}(\cdot) + R(x_t, \cdot) \Delta t    
\end{equation*}
until we reach $X_1\sim p_\text{data}$.

The formulation of DFMs from \citet{campbell2024generative, gat2024discrete} which was described above, makes three important assumptions:
\begin{enumerate}
    \item the conditional rates of the CTMC factorize per position, when conditioned on the denoised sequence $X_1$,
    \item the rates of multiple positions changing in the CTMC simultaneously is $0$,
    \item the interpolation schedule $\kappa_t$ is independent of the partially denoised sequence $X_t$.
\end{enumerate}
Given these conditions for the DFM, the rates for each position $i$ can be found analytically to be
\begin{equation}
    R_t^{i}(x_t^{i}, \tilde{x})=\frac{\dot{\kappa}_t}{1-\kappa_t} (p_{1|t}(X_1^{i}=\tilde{x}|X_t=x_t)-\delta_{x_t^{i}}(\tilde{x}^{i})), \label{eq:rates-param}
\end{equation}
where $p_{1|t}(X_1^{i}=\tilde{x}|X_t=x_t)$ is the key object learned by a machine learning model. Note that this expression for the rates factors into two terms. The first is the probability of a transition occurring at time $t$, \mbox{$\Pm(\tau=t|\tau \geq t)=\frac{\dot{\kappa}_t}{1-\kappa_t}$}. The second, $p_{1|t}(X_1^{i}=\tilde{x}|X_t=x_t)-\delta_{x_t^{i}}(\tilde{x}^{i})$, ensures that if there is a jump (when $\delta_{x_t^{i}}(\tilde{x}^{i})=0$), the state will be sampled from $p_{1|t}$. This will be discussed in more detail in the proofs of the lemmas.

Similar to AO-ARMs, where we represent the sequence of partially unmasked states as the tuple $(\sigma, X)$, we represent the sequence of transitions in a sampled MFM path as the tuple $(\bm{\tau}, \bm{X})$. Here, $\bm{\tau}=(\tau_0=0, \tau_1, \ldots, \tau_D<1)$ is the sequence of transition times sampled by the CTMC, in increasing order, from $0$ to $1$. As before, $\bm{X}=(X_0, X_{\tau_1}, \ldots, X_{\tau_D})$ is the sequence of states at each of those times. Note that $X_0$ is the all masked state and the fully unmasked state is $X_1=X_D$. 

As in the AO-ARM case, we say that $\sigma$ is the permuation that maps from transition times to the position of the sequence that changed. Concretely, the $i$-th transition occurs at time $\tau_i$ and causes the $\sigma(i)$-th position to be unmasked. Like AO-ARMs, this means we can also represent $\bm{X}$ as the tuple $(\sigma, X_{\tau_D})$. We can also use $\sigma$ to sort the transition times of each position $\bm{T}=(T_1, \ldots, T_D)$ into the transition times of the CTMC: $\sigma(\bm{T})=\bm{\tau}$.

Below, we have included an expression for the probability of a CTMC path sampled using an MFM. This particular formula applies to all CTMCs and does not readily admit an algorithmic implementation, we just include it here for completeness:
\begin{align}
&\Pm\left(\bm{\tau}, \bm{X}\right) \notag \\
= &\Pm(\tau_0, x_{\tau_0})
    \times \prod_{i=1}^D \Pm(\tau_i, x_{\tau_i} | \tau_{i-1}, x_{\tau_{i-1}}) 
    \times \Pm(\tau_{D+1} > 1 | \tau_D, x_{\tau_D}). \notag
\end{align}

\subsubsection{MFM CTMCs use the minimal number of transitions}
\citet{campbell2024generative, gat2024discrete} shows that the rates defined above (Eq.~\ref{eq:rates-param}) make the minimal number of jumps needed to denoise the sequence. In the case of an MFM, this means we have exactly $D$ transitions, one to unmask each position. However, there is an infinite class of rate matrices which also solve the KFE and DFM boundary conditions, which do not have this minimal jumps property. 

Non-minimal rate matrices can be formulated as those with non-zero stochasticity as in \citet{campbell2024generative}, and can be equivalently formulated through corrector sampling \citet{gat2024discrete}. Our proofs explicitly handle the minimal case, but the calculations can be readily extended to the case of non-zero stochasticity. We do note, however, that when doing \textit{exact} sampling, rate matrices with non-zero stochasticity yield the same time-marginals as the minimal rates (by construction). Generally, non-zero stochasticity is only useful when $\Delta t$ is large enough during numerical integration that multiple positions are sampled at once. In such settings, stochasticity is believed to correspond to implicit ``equilibration'' steps that reduce drift in the simulated CTMC.

\subsubsection{Statement of exact MFM sampling using \aoamshorts}
We show that paths from a masked flow matching model's (MFM's) CTMC can be sampled \textit{exactly} by sampling transition times uniformly at random and sampling state transitions from an \aoamshort\ independently. The key fact underlying our proof of Prop.~\ref{prop:main} is that the transition times and states for an MFM are conditionally independent of each other given a decoding order. In particular, the distribution over paths factorizes into two terms. The jump time component reduces to a product of $D$ independent uniform distributions, one for each position; the state transition component yields a product of denoising terms that correspond to the conditional distributions sampled by an \aoamshort . These preceding two statements are proved in Lemmas~\ref{lemma:transition-time-decomposition} and~\ref{lemma:transitions-are-same}, respectively, in the section below. Finally, Cor.~\ref{corollary:sample-permutation} extends the claim of Prop.~\ref{prop:main} to shows how to construct a sampler where the decoding order is sampled upfront and the transition times and states are truly sampled independently. The detailed derivations and proofs are included in Supplementary Information section~\ref{sec:si-proofs} ``Detailed derivations and proofs''. Below we state the main results.

\begin{proposition} \label{prop:main}
The probability of sampling a path $(\bm{\tau}, \bm{X})$ from an MFM is equivalent to the probability of
\begin{enumerate}
    \item sampling transition times for each position in the sequence $\bm{T}=(T^1, \ldots, T^D)$, uniformly at random (u.a.r), from the interval [0,1],
    \item sorting $\bm{T}$ in increasing order to obtain a permutation $\sigma$, which maps the unmasking steps to the position to be unmasked at that step,
    
    \item setting $\bm{\tau}=\sigma(\bm{T})$, where $\sigma$ acts as $\tau_i=T^{\sigma(i)}$
    
    \item sampling the sequence of states $\bm{X}$ from a \aoamshort\ using order $\sigma$ with the MFM denoiser: \mbox{$p(X^{\sigma(i)}=\tilde{x}|X^{\sigma(<i)}=x) :=p_{1|t}(X_1=\tilde{x}|X_t^-=x)$}.
\end{enumerate}
Below, $\Po$ refers to the distribution induced by the resulting \aoamshort\ in point (4) above. We call this resultant model the ``equivalent'' \aoamshort\ for our MFM. Note that we can also obtain the equivalent MFM for an \aoamshort\ by reversing the assignment of conditional distributions.

Altogether, these four points correspond to following equation:
\begin{equation}
\mathbb{P}_{\mathrm{MFM}}\left(\bm{\tau}, \bm{X}=(\sigma, X_{\tau_D})\right) \\
=U^D_{[0, 1]}(\bm{T}=\sigma^{-1}(\bm{\tau}))\mathbb{P}_{\mathrm{OA}}\left(X=X_{\tau_D} | \sigma\right),  \notag
\end{equation}
where $\sigma^{-1}$ acts as $T^{j}=\tau_{\sigma^{-1}(j)}$ and $U^D_{[0, 1]}$ is the product of $D$ independent uniform distributions on the interval $[0,1]$.
\end{proposition}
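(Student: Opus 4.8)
The plan is to start from the general CTMC path-probability formula stated at the end of Section~\ref{sec:sampling-background} and to factor it, jump by jump, into a piece that depends only on the transition times and a piece that depends only on the state transitions given the decoding order. Each factor $\Pm(\tau_i, x_{\tau_i} \mid \tau_{i-1}, x_{\tau_{i-1}})$ in that product is the product of a holding-time contribution and a jump contribution. Substituting the MFM rate from Equation~\eqref{eq:rates-param}, I would exploit the fact that the rate splits as $\frac{\dot\kappa_t}{1-\kappa_t}$ (a time-only factor, identical for every still-masked position) times the denoiser term $p_{1|t}(X_1^i=\tilde x \mid X_t=x_t) - \delta_{x_t^i}(\tilde x^i)$ (a state-only factor). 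This split is exactly what lets the whole path probability separate into a time part and a state part, so that the two supporting lemmas can be established independently and then recombined.

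For the time part (Lemma~\ref{lemma:transition-time-decomposition}), the key computation is the holding-time integral from Equation~\eqref{eq:holding-time}. Since the per-position rates are identical and independent, and each still-masked position contributes the same $\frac{\dot\kappa_s}{1-\kappa_s}$ to the total exit rate, I would compute $\int_0^t \frac{\dot\kappa_s}{1-\kappa_s}\,ds = -\log(1-\kappa_t)$ (using $\kappa_0=0$), so that the survival probability of any single position telescopes to $1-\kappa_t$ and its jump time therefore has CDF $\kappa_t$. Viewing the CTMC as $D$ independent ``alarm clocks,'' each with this law, the interleaved holding-time-and-jump product reassembles into the product of per-position jump-time densities. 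The probability-integral transform $U^i=\kappa(T^i)$ reparametrizes these into i.i.d.\ uniforms on $[0,1]$; since $\kappa$ is monotone, sorting $\bm T$ and sorting the $U^i$ induce the same permutation $\sigma$, which by exchangeability is uniform over $S_D$, matching the $U[S_D](\sigma)$ factor in $\Po$. Conditioned on $\sigma$, the relabeling $\tau_i = T^{\sigma(i)}$ is a coordinate permutation of unit Jacobian, so the density of the ordered times equals $U^D_{[0,1]}(\bm T)$ with no leftover combinatorial constant.

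For the state part (Lemma~\ref{lemma:transitions-are-same}), I would invoke the time-independence of the MFM denoiser established in Section~\ref{sec:method-training}, namely $p(x_1\mid x_t,t)=p(x_1\mid x_t)$ from Equation~\eqref{eq:mlm_time_indept_line4}. At the $i$-th jump, normalizing the off-diagonal rate (after removing the $-\delta$ diagonal correction) assigns to the newly unmasked position $\sigma(i)$ exactly the denoiser distribution over data tokens conditioned on the currently unmasked context $x^{\sigma(<i)}$; the masking structure guarantees this distribution places no mass on the mask token, so the jump always unmasks a single position. Because the distribution is time-independent, the $i$-th transition coincides with the AO-ARM conditional $p(X^{\sigma(i)}\mid x^{\sigma(<i)})$, and the product over $i$ reproduces $\Po(X\mid\sigma)$. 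Combining the two lemmas yields the claimed factorization, and Corollary~\ref{corollary:sample-permutation} then follows by observing that only $\sigma$ enters the state part, so one may draw $\sigma$ uniformly upfront and sample times and states genuinely independently.

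The main obstacle I anticipate is the bookkeeping in the time part: carefully tracking which positions remain masked across each successive holding interval (so the exit rate is the sum over only the remaining masked positions), and then proving rigorously that the interleaved holding-time integrals recombine into a clean product of per-position survival functions, i.e.\ the superposition-of-independent-clocks identity specialized to this rate. The $-\delta$ correction in Equation~\eqref{eq:rates-param} must be handled precisely so that a jump never maps a position to itself or to another mask and so the normalization introduces no spurious factor. Once these are dispatched, verifying that conditioning on $\sigma$ removes the $D!$ order-statistics factor—leaving the unit-density match to $U^D_{[0,1]}(\bm T)$—is the last place a constant could slip, so I would check that change of variables explicitly.
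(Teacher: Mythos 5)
Your proposal is correct and follows the paper's top-level skeleton---the memoryless CTMC path factorization, the split of each jump into a time factor and a state factor, and the two supporting lemmas---but your handling of the time part takes a genuinely different route. The paper's Lemma~\ref{lemma:transition-time-decomposition} computes the \emph{sequential} conditional density of the next jump time, which carries an explicit combinatorial prefactor $(D-(i-1))$ (the total exit rate of the remaining masked positions), while its Lemma~\ref{lemma:transitions-are-same} carries the matching factor $1/(D-(i-1))$ (the probability that position $\sigma(i)$ in particular is the one that jumps); the proof of Proposition~\ref{prop:main} then multiplies the two closed forms, cancels these factors, and telescopes $\prod_{i}\frac{\dot{\kappa}_{\tau_i}}{1-\kappa_{\tau_{i-1}}}\bigl(\frac{1-\kappa_{\tau_i}}{1-\kappa_{\tau_{i-1}}}\bigr)^{D-i}$ down to $\prod_i \dot{\kappa}_{\tau_i}=1$. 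You instead derive per-position jump-time laws directly (survival $1-\kappa_t$, hence CDF $\kappa_t$), view the CTMC as $D$ independent clocks, and recover $\sigma$ and the ordered times by sorting; in this bookkeeping the ``which position jumps'' probability is carried by the race between clocks, so neither combinatorial factor ever appears and the paper's cancellation becomes structural rather than computational. What your route buys is conceptual clarity (the conditional independence of times and states given $\sigma$ is manifest, and the probability-integral transform handles general $\kappa_t$); what it costs is that the superposition-of-inhomogeneous-clocks identity you flag as your ``main obstacle'' must itself be proved, and that is precisely the interleaved holding-time computation the paper carries out explicitly, using, as you anticipate, that unmasked positions have zero exit rate and that the denoiser places no mass on the mask token. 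Two small cautions: the $U[S_D](\sigma)$ factor you say you would ``match'' belongs to $\Po(\sigma,X)$ and to Corollary~\ref{corollary:sample-permutation}, not to the right-hand side of Proposition~\ref{prop:main}, which is conditioned on $\sigma$; and your worry about a leftover $D!$ is resolved exactly as you suspect, by the unit-Jacobian change of variables from unordered times to (permutation, ordered times) coordinates, which is implicitly what the paper's cancellation of the $(D-(i-1))$ factors accomplishes.
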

We use this notation to show that the density of sampling $\bm{\tau}$ and $\bm{X}$ from an MFM is the same as the probability of (i) sampling $\bm{T}$ u.a.r, inducing a associated decoding order $\sigma$ and $\bm{\tau}$ as a result, and (ii) using that decoding order to recover $\bm{X}$ from an \aoamshort, as described at the beginning of the proposition.

\begin{corollary}
\label{corollary:sample-permutation}
An MFM can be exactly sampled by first sampling a permutation $\sigma \sim U[S_D]$ and then sampling the sequence of states for each transition from an \aoamshort. The transition times can then be sampled uniformly at random and assigned to positions according to the sampled permutation.
\end{corollary}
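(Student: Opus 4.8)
The plan is to derive the corollary directly from Proposition~\ref{prop:main} by reordering the sampling of its three ingredients---the decoding permutation $\sigma$, the states $\bm{X}$, and the transition times $\bm{\tau}$---using the classical independence between the order statistics of i.i.d.\ uniform variables and the permutation that sorts them. Proposition~\ref{prop:main} already expresses the MFM path density as $\Pm(\bm{\tau}, \bm{X}) = U^D_{[0,1]}(\sigma^{-1}(\bm{\tau}))\,\Po(X \mid \sigma)$ with $\bm{X}=(\sigma, X_{\tau_D})$, in which the times are sampled first (as i.i.d.\ uniforms $\bm{T}$), sorted to induce $\sigma$ and $\bm{\tau}$, and only then are the states drawn via the equivalent \aoamshort. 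The goal is to show this is the same distribution as the permutation-first procedure stated in the corollary.

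First I would record the single probabilistic fact that does all the work. If $T^1, \ldots, T^D \stackrel{\text{iid}}{\sim} U[0,1]$, then the sorting permutation $\sigma$ (defined by $\tau_i = T^{\sigma(i)}$ with $\tau_1 < \cdots < \tau_D$) is uniform on $S_D$ and is \emph{independent} of the sorted values $\bm{\tau}$; moreover $\bm{\tau}$ has density $D!$ on the ordered simplex $\Delta_D = \{0 < \tau_1 < \cdots < \tau_D < 1\}$. Concretely, the change of variables from $\bm{T} \in [0,1]^D$ to $(\sigma, \bm{\tau}) \in S_D \times \Delta_D$ is a piecewise coordinate permutation and hence has unit Jacobian, so the product density $U^D_{[0,1]} \equiv 1$ factors as $\tfrac{1}{D!}\cdot D!$, i.e.\ as $\bigl(\text{uniform on } S_D\bigr)\times\bigl(\text{uniform order statistics on }\Delta_D\bigr)$. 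I would verify the normalization by noting that $\Delta_D$ has volume $1/D!$ and there are $D!$ permutations, so each factor integrates/sums to one.

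Second, I would combine this factorization with the structure of Proposition~\ref{prop:main}. The crucial point---already guaranteed by the time-independence of the MFM denoiser established in Section~\ref{sec:method-training}---is that the state factor $\Po(X \mid \sigma)$ depends on the path only through the decoding order $\sigma$ and \emph{not} through the numerical times $\bm{\tau}$. Substituting the factorization into the proposition's density yields $\Pm(\bm{\tau}, \sigma, X) = \bigl[\tfrac{1}{D!}\bigr]\,\Po(X \mid \sigma)\,\bigl[D!\,\mathbbm{1}\{\bm{\tau}\in\Delta_D\}\bigr]$, which is precisely the joint density of the three-stage generative process in the corollary: draw $\sigma \sim U[S_D]$, draw $X \sim \Po(\cdot \mid \sigma)$, and draw the sorted times $\bm{\tau}$ as uniform order statistics independently of $\sigma$, assigning the $i$-th transition time $\tau_i$ to position $\sigma(i)$.

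There is no deep obstacle here; the only place demanding care is the measure-theoretic bookkeeping, namely keeping the continuous density $D!$ of the order statistics and the discrete mass $1/D!$ of the uniform permutation straight so that they cancel against the $U^D_{[0,1]}$ term of Proposition~\ref{prop:main}. The one substantive hypothesis I would invoke explicitly is that the \aoamshort\ conditionals are time-free, since this is what licenses sampling all states before any times and is exactly why the permutation, rather than the raw transition times, is the only information the state sampler requires.
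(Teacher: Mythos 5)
Your proposal is correct and takes essentially the same route as the paper's proof: both read the corollary directly off Proposition~\ref{prop:main} by refactoring the i.i.d.-uniform density over transition times into a permutation-first form. Your version is in fact tighter on the one delicate point: where the paper inserts the factor $U[S_D](\sigma)$ together with an auxiliary sorting permutation $\pi$ and leaves the compensating multiplicity implicit, you make the cancellation explicit via the classical order-statistics factorization $U^D_{[0,1]} \equiv 1 = \tfrac{1}{D!}\cdot D!$ (uniform sorting permutation independent of the order statistics, which have density $D!$ on the ordered simplex), which is exactly the bookkeeping needed for the paper's middle step to hold as a genuine equality of densities.
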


Together with Prop.~\ref{prop:main} this completes our claim of conditional independence of the times and states given the unmasking ordering. In Cor.~\ref{corollary:sample-permutation} we can see plainly that the density of paths under the MFM CTMC is the same as sampling from the times and states independently once we've sampled the permutation upfront. Furthermore, the corollary shows that to get the likelihood of an output sequence under the CTMC's final distribution, the specific jump times can be marginalized out easily, but the decoding order still makes getting likelihoods intractable.

\subsubsection{Implications for guidance}\label{si_sec:exact_guidance_derivation}
In this section, we will use the equivalence between \aoamshort\ and MFMs to show a simplified proof of our guidance method in \citet{nisonoff2025unlocking}. The calculation done here pertains specifically to MFMs, but can be readily extended to DFMs more broadly. The core underlying intuition is that the jump times and state transitions depend on the current sequence only through $p_{1|t}^\theta$. This is because the interpolation schedule is independent of the state. Therefore, when using guidance, only this distribution needs to be modified.

When guiding an MFM, we condition the CTMC such that it samples from $p_\text{data}(x|y)$ instead of $p_\text{data}(x)$. This corresponds to setting $\Pm'(X_1)=\Pm(X_1|y)$ and solving for the appropriate rate matrices $R'_t$. Due to Prop.~\ref{prop:main}, we can instead analyze how to condition the equivalent \aoamshort\ and then push this change back into our MFM.

\begin{proof}[Proof Sketch]
The equivalent \aoamshort\ samples the unmasked values of positions using $p^\theta(x^{\sigma(i)}|x^{\sigma(<i)}):=p_{1|t}^\theta \left(X_1^{\sigma(i)}=x^{\sigma(i)}|X_t^{-}=x^{\sigma(<i)} \right)$. To condition, we desire to sample from $p(x^{\sigma(i)}|x^{\sigma(<i)}, y)$ instead. All we require is an expression for this distribution using our existing conditional model, which we get using Bayes rule as follows:
\begin{align*}
p'^\theta(x^{\sigma(i)}|x^{\sigma(<i)}) &=p^\theta(x^{\sigma(i)}|x^{\sigma(<i)}, y) \\
&= \frac{p(y|x^{\sigma(i)}, x^{\sigma(<i)}) p^\theta(x^{\sigma(i)}|x^{\sigma(<i)})}{p(y|x^{\sigma(<i)})} \\
&= \frac{p(y|x^{\sigma(i)}, x^{\sigma(<i)})}{p(y|x^{\sigma(<i)})} p^\theta_{1|t}(X_1^{i}=x^{\sigma(i)}|X_t=x^{\sigma(<i)})
\end{align*}
Above, $p(y|x^{\sigma(i)}, x^{\sigma(<i)})$ is identical to the noisy predictor trained in regular classifier guidance for an MFM. This suffices to recover the exact guidance methodology in \citet{nisonoff2025unlocking}. Using the conditioned \aoamshort, we not write the equivalent MFM. There is one subtlety in this translation. Because we are now finding the general form of the rates absent an already sampled set of transition times, and therefore decoding order, we must choose an appropriate order to use the guided \aoamshort's conditional distribution as the guided MFM's probability denoiser. 

Say we are looking at the rate for the position $j$ at a time leading up to the $i$-th jump, such that $\tau_{i-1}<t\leq \tau_i$. We choose $\sigma$ such that $j=\sigma(i)$ and $\sigma(<i)$ corresponds to the positions currently unmasked in $x_t$. With this construction, we find
\begin{align*}
{R'}_t^{j}(x_t^j, \tilde{x}^j) = \frac{\dot{\kappa}_t}{1-\kappa_t} \bigg(&p'^\theta_{1|t}(X_1^{j}=\tilde{x}^j|X_t=x_t) - \delta_{x_t^{j}}(\tilde{x}^{j})\bigg) \\
= \frac{\dot{\kappa}_t}{1-\kappa_t} \bigg(&p'^\theta(X^{\sigma(i)}=\tilde{x}^j|X^{\sigma(<i)}=x_t) - \delta_{x_t^j}(\tilde{x}^j)\bigg) \\
=\frac{\dot{\kappa}_t}{1-\kappa_t} \bigg(&\frac{p(y|X^{\sigma(i)}=\tilde{x}^j, X^{\sigma(<i)}=x_t)}{p(y|X^{\sigma(<i)}=x_t)} p^\theta_{1|t}(X_1^{i}=\tilde{x}|X_t=x_t) \\
&-\delta_{x_t^j}(\tilde{x}^j)\bigg). \\
\intertext{In the case that $x_t^i = \tilde{x}^i$, $\frac{p(y|X_1^{i}=\tilde{x}, X_t=x_t)}{p(y|X_t=x_t)}$ simplifies to $1$, and we can factor out the added ratio of conditional probabilities to get }
=\frac{\dot{\kappa}_t}{1-\kappa_t} &\frac{p(y|X^{\sigma(i)}=\tilde{x}^j, X^{\sigma(<i)}=x_t)}{p(y|X^{\sigma(<i)}=x_t)} \\
&\times \bigg(p^\theta_{1|t}(X_1^{i}=\tilde{x}|X_t=x_t) -\delta_{x_t^j} (\tilde{x}^j)\bigg). \\
\end{align*}
Altogether, we have
\begin{equation*}
{R'}_t^{j}(x_t^j, \tilde{x}^j) = {R}_t^{j}(x_t^j, \tilde{x}^j|y) = \frac{p(y|X^{\sigma(i)}=\tilde{x}^j, X^{\sigma(<i)}=x_t)}{p(y|X^{\sigma(<i)}=x_t)} R_t^{j}(x_t^{j}, \tilde{x}^j).
\end{equation*}
\end{proof}

This result is exactly the one derived in \citet{nisonoff2025unlocking} and has two important properties we'd like to point out here. 

First, this conditional rate is zero for any transition involving the change of two or more positions, just like the unconditional rates. Therefore, exactly $D\times (S-1)+1$ rates need to be computed, where $D$ is the number of positions, $S$ is the size of the alphabet, $D\times (S-1)$ are all the 1-Hamming distance jumps for each position in the sequence, and the extra ``$+1$'' at the end comes from the rate of the CTMC staying at the current state.

Second, this shows that we can sample from the guided \aoamshort\ to get exact predictor guidance, and therefore only need to evaluate the classifier $S$ times: once for each of the possible transitions at the position being decoded. When $S$ is small enough, this lets us sidestep the need for the Taylor-approximate guidance, which was derived by \citet{nisonoff2025unlocking}. They developed TAG due to the intractability of evaluating a classifier $D\times (S-1)+1$ times to run exact guidance when sampling by numerical integration. Using \aoamshort\ sampling, we show exact guidance can be done with only a constant dependence in the dimension and linear dependence in the alphabet size. Therefore, running TAG with the \aoamshort\ sampling approach only adds a constant time cost for using guidance.

\subsection{Common set-up across \textit{in silico} experiments}
\label{sec:method-insilico}

We now provide an overview for the Method sections that are dedicated to each \textit{in silico} experiment. Since the main text presents a high-level overview, for each \textit{in silico} experiment, we include a dedicated section to provide the full experimental details, including datasets, predictive models, baseline configurations, and sampling settings for all methods, as well as additional results. Here, we will first address several considerations that are shared across different experiments, including baseline configurations and hyperparameter selections.

\medskip\noindent
\textbf{Post hoc filtering.} 
In all experiments, we compared to generation from the pre-trained model \textit{unguided}, with and without post hoc filtering. To ensure a meaningful comparison, we allowed the pre-trained model to generate unguided for the same wall clock time as with guidance, then post hoc filter to only the top-$k$ samples using the predictor, where $k$ is the number of samples allocated to \ourmethod. In other words, all methods get to generate sequences for the same amount of time and they are all scored according to their top-$k$ sequences, as judged by the predictor. 

\medskip\noindent
\textbf{Fine-tuning.}\label{si_sec:lora}
Although this work centers on on-the-fly guidance, and fine-tuning is not on-the-fly, we nonetheless
included a comparison to fine-tuning (\sftshort) as it is a widely used technique for adapting pre-trained generative models. Note that FT is sometimes referred to as ``supervised fine-tuning'' because one accesses labels in order to curate the set---however, since the procedure does not use the labels, we choose not to invoke this terminology.

For each task, we performed fine-tuning by continuing training the pre-trained model under its original training objective on a task-specific set of selected
sequences thought to most exhibit the desired property~\citep{ziegler2019fine,ouyang2022training,nijkamp2023progen2,stocco2025guiding}. When the pre-trained models are very large (\eg, ESM3 and ESM C), we performed fine-tuning with LoRA, a standard parameter-efficient fine-tuning method that injects low-rank trainable matrices into selected linear layers while freezing most pretrained weights~\citep{hu2022lora}. Unless otherwise stated, we used a commonly-used LoRA configuration with low-rank adapters of rank 16 (scaling factor 32; dropout 0.05) applied to the model’s attention transformations and the sequence output head, training only these added adapter parameters while keeping the pretrained weights frozen. These choices are consistent with prior large-scale LoRA studies that explicitly sweep ranks including 16 and identify 0.05 as a useful dropout value in smaller-model regimes (the $\sim$1 billion parameters of the public ESM3 model is comparatively small when compared to large language models where fine-tuning has been more extensively studied)~\citep{hu2022lora,dettmers2023qlora}. We optimized the adapters with a learning rate of $10^{-4}$ (no weight decay) using a linear warmup over the first 10\% of updates. We found the number of necessary training steps to be task-dependent and typically trained for at least 500-1000 steps (with effective batch size 64-128) to ensure the model is well-trained. Checkpoints were typically selected based on the model loss on a small hold-out set. We did not further tune these optimization hyperparameters beyond this fixed configuration; task-specific tuning and model selection were performed only through the experiment-specific data selection rule and hold-out validation protocol (detailed below). For smaller model (\eg, ProteinMPNN), we performed full fine-tuning of all trainable parameters. 

In practice, \sftshort also requires the user to make several additional design choices beyond specifying the target property, including (i) which subset of sequences should constitute the fine-tuning set, and (ii) how to perform model selection and hyperparameter tuning, which typically necessitates holding out data for validation and choosing optimization settings (\eg, learning rate schedules, weight decay, early stopping, and the number of epochs) based on some validation signal. Empirically, even in well-studied large-model settings, fine-tuning model selection is performed using held-out validation criteria and can exhibit rapid overfitting~\citep{ouyang2022training}; more broadly, fine-tuning performance can be sensitive to optimization hyperparameters and is commonly selected using validation set performance~\citep{devlin2019bert,jiang2020smart, dodge2020fine, mosbach2021on}. Accordingly, for each experiment we specified a task-appropriate data selection rule and validation protocol (described in experiment-specific sections), and used these held-out signals to tune hyperparameters for the \sftshort baselines.

\medskip\noindent
\textbf{Guidance strength.}
The guidance strength $\gamma$ in Equation~\eqref{eq:conditional_rates_with_gamma} is a hyperparameter that can affect the quality of generation. Conceptually, Bayes Rule implies $\gamma = 1$, but one might desire to bias the generation more towards the conditioning signal, and others have found $\gamma > 1$ to result in higher quality samples for continuous-space diffusion models~\citep{dhariwal2021diffusion, ho2022classifier}.  In our experiments, we tested $\gamma \in \{1, 10, 100\}$, and didn't systematically tune it further, although more extensive tuning might improve generation further. We have listed the final value of $\gamma$ for each experiment in the sections below. In practice, we recommend the user to start with $\gamma = 1$ and monitor whether $p(y | x_t)$ increases over the generation trajectory. Depending on where the unconditional model started the generation trajectory, and the distribution of $p(y | x_t)$ over the state spaces, $\gamma = 1$ might be not sufficient to increase the guidance signals. In such cases, one can increase the guidance strength to either $\gamma = 10$ or $\gamma = 100$, or to perform more systematic tuning on the metrics on generation quality one cares about.

\medskip\noindent
\textbf{Sampling speed.}
For both unguided and guided generation, the primary factor affecting the speed of generation is the number of sampling steps taken. In \aoamshort sampling, the number of steps corresponds to the number of positions in the sequence. In flow-matching sampling with Euler integration, the number of sampling steps correspond to the amount of discretization, with fewer number of steps (\ie~larger $\Delta t$) typically resulting in faster generation at the cost of lower sample quality. For MLMs, before its connection to discrete-space diffusion and flow models were made, practitioners used a variety of sampling methods. Some examples include directly decoding all the tokens from the fully masked sequence, or decoding the sequence iteratively, with a commonly used heuristic to choose the decoding order which prioritizes the token to be unmasked according to some confidence~\citep{hayes2025simulating}. To illustrate the benefit of our unification of training and sampling across multiple model classes, in each experiment, we used both the default sampling methods of the pre-trained model, and when it is slower than guidance, we leveraged the unified view to use other more efficient sampler. For example, in the stability-guided inverse folding experiment, we showed that one can leverage the unified perspective and perform flow-matching sampling with ProteinMPNN, an~\aoamshort, to produce similar quality samples with twice the speed. Since guidance can be perform with either ``continuous-time'' (\eg, Euler integration) or ``discrete-time'' sampling methods (\eg, any-order autoregressive sampling) (\fref{fig:schematic}b), we used the number of sampling steps that roughly corresponds the number of steps taken by unguided generation (which might depend on the length of the generated protein). 

In addition to the number of sampling steps, when performing guidance, the sampling speed is additionally affected by whether exact or approximate guidance is used, and the cost of performing forward passes on the predictor (and backward passes in the case of TAG). We also illustrate the effectiveness of both our exact and approximate guidance (TAG) methods under different settings. We used exact guidance for the more challenging multi-property, Pareto-extrapolative design and the \textit{in vivo} experiment, where performing forward passes with the predictive model is also relative cheap. We also used exact guidance for the enzyme guidance experiment, where we found TAG to struggle. In other experiments, we used TAG to show that despite being approximate, it can perform quite well with high efficiency on a variety of tasks, including for fold class guided generation, where the large size of the ESM3 structure token alphabet ($>$4k) makes exact guidance fairly inefficient.

\subsection{Stability-guided inverse folding}
\label{sec:method-stability}

We demonstrated how assay-labeled experimental data could be used to condition generative models for protein sequences, improving the success rates for such models in generating sequences with the desired function. To illustrate this use case, we guided ProteinMPNN with a large dataset of experimental folding stability measurements on mini-proteins~\citep{tsuboyama2023mega} to generate sequences with improved stability. Stability is reported in real-valued units of $\ddg$, which quantifies the difference in stability from the wild-type sequence. A protein is considered stable if it is at least as stable as the wild-type ($\ddg \leq 0$). To generate sequences with improved stability, we followed the experimental set-up from~\citet{nisonoff2025unlocking}. Specifically, we used ProteinMPNN~\citep{dauparas2022robust} as the inverse folding model (as opposed to the FMIF model used in~\citet{nisonoff2025unlocking}). As described above, the \aoamshort\ nature of ProteinMPNN means we can also treat the pre-trained model as a masked flow matching / diffusion model. Then we apply guidance with the stability oracle from~\citet{nisonoff2025unlocking} to obtain a stability-conditioned inverse folding model. For full details on the dataset used, the training procedure of the stability oracle, and the stability classifiers used for guidance, we refer the readers to \citet{nisonoff2025unlocking} (Appendix section F.4).  

\subsubsection{Set-up and baselines}
We applied this set-up on eight randomly chosen proteins for which the generated sequences from ProteinMPNN are not already on average more stable than the wild-type, some of which therefore differ from the ones used in~\citet{nisonoff2025unlocking}. For each generated sequence, we define success based on two criteria: (i) the generated sequence should fold into the desired structure, $\mathbf{x}$, and (ii) the generated sequence should be at least as stable as the wild-type sequence, \mbox{$\ddg \leq 0$}. We used AlphaFold 2~\citep{jumper2021highly} to predict the structure of the generated sequences and compare the root mean square deviation (RMSD) between the predicted structure and the desired structure $\mathbf{x}$. We consider a sequence to achieve the desired fold if the RMSD is less than or equal to $2$\r{A} following previous evaluation criteria~\citep{campbell2024generative}. To evaluate the folding stability of our sequences, we use the $\ddg$ regression model (\ie~the stability oracle) from \citet{nisonoff2025unlocking}.

For each protein, we compared the generations from all methods on 100 sequences. As baselines, we compared to the default sampling strategy of ProteinMPNN, which is autoregressive. However, the AO-ARM nature of ProteinMPNN means we can also treat it as a masked flow matching model, and use flow-matching sampling with Euler integration. Using Euler integration with $\Delta t = 0.01$, unguided generation was twice as fast as guided generation, so we also compared to unguided generation with the flow-matching sampler both with and without post hoc filtering.

To compare to fine-tuning, we followed a similar procedure to \citet{widatalla2024aligning} by training ProteinMPNN further on its original objective on all the sequences which are more stable than wild-type in the stability dataset, which are around 100,000 sequences out of around 669,000 sequences. Since ProteinMPNN only has modest number of trainable parameters, we performed full fine-tuning. To select hyperparameters for fine-tuning, we used the same validation split from~\citet{nisonoff2025unlocking} constructed by clustering examples by wild-type backbone (WT cluster) and holding out 20\% of backbones by cluster. We kept all other ProteinMPNN training settings consistent with the original recipe and tuned only the learning rate and weight decay of the AdamW optimizer~\citep{loshchilov2018decoupled}, with learning rate in $\{10^{-4},\, 5\times10^{-5},\, 2\times10^{-5},\, 10^{-5},\, 5\times10^{-6},\, 10^{-6},\, 5\times10^{-7}\}$, and weight decay in $\{10^{-2}, 10^{-3}, 10^{-4}\}$. For each pair of learning rate and weight decay value, we trained on the training clusters and selected the checkpoint with the lowest validation negative log-likelihood / perplexity on the held-out WT clusters. This protocol is designed to reduce overfitting to specific protein families and is possible since the stability dataset from \citet{tsuboyama2023mega} is very unique and contains the same type of measurements across many diverse proteins (in contrast to single-family landscapes where protein-level holdout is not possible). The chosen setting was learning rate of $10^{-6}$ and weight decay of $10^{-2}$ trained for around 25 epochs. After selecting hyperparameters on the validation split, we re-trained the model using the selected hyperparameters on all clusters for the chosen stopping point to produce the final FT baseline used in evaluations. 

For unguided generation and FT, sampling was performed with three different temperatures: $1.0$, $0.1$, and $0.01$ and we report the results for the temperature that achieved the highest success rate. For guided generation, we used a single temperature of 0.1 and Euler integration with a step size $\Delta t=0.01$. Guidance was performed with TAG with a single guidance strength $\gamma=100$. We did not specifically tune the sampling temperature or guidance strength. 

\subsubsection{Implementation note on decoding ordering in \aoamshorts}
While the mathematical equivalence between \aoarlong models (\aoamshorts) and discrete flow matching is exact in theory, a practical subtlety arises when implementing \aoamshorts are implemented with causal attention mechanisms. In practice, many \aoamshorts, including ProteinMPNN~\citep{dauparas2022robust}, are implemented using standard autoregressive architectures with causal attention masks. When computing the probability $p_\theta(x_1^i | x_t)$ for a masked position $i$ given a partially masked sequence $x_t$, the model must process the unmasked positions in $x_t$ in some specific order due to the causal nature of the attention mechanism.
Formally, for a position $i \in M_t$ (the set of masked positions), the probability $p_\theta(x_1^i | x_t)$ should theoretically be independent of how the unmasked positions in $U_t = {j : x_t^j \neq \text{?}}$ are ordered. However, in practice, the causal attention mechanism means that:
\begin{equation*}
p_\theta(x_1^i | x_t) = p_\theta(x_1^i | x_t^{\sigma(1)}, x_t^{\sigma(2)}, \ldots, x_t^{\sigma(|U_t|)})
\end{equation*}
where $\sigma$ is some permutation of the unmasked positions in $U_t$, and the probability depends on this specific ordering $\sigma$.
During training, the \aoarlong objective encourages the model to produce consistent predictions regardless of the permutation $\sigma$. The expectation over all possible permutations in the training objective should, in principle, ensure that $p_\theta(x_1^i | x_t)$ is invariant to the ordering of unmasked positions. However, in practice, this invariance may not be perfectly achieved due to imperfect optimization and finite-sample training.
To address this subtlety and ensure consistent behavior during guided sampling, for the protein stability experiment that guides ProteinMPNN, we fix the decoding ordering for $x_t$ based on the residue positions of the unmasked tokens. Specifically, we sort the unmasked positions in $U_t$ in ascending order of their sequence positions:
\begin{equation*}
\sigma(j) = \text{sort}(U_t)[j]
\end{equation*}
where $\text{sort}(U_t)$ returns the unmasked positions ordered by their indices in the original sequence. This approach provides a deterministic and reproducible ordering for any given masked state $x_t$, respects the natural left-to-right order of residues in the protein sequence, and requires no additional hyperparameters or complex ordering strategies. By using this fixed position-based ordering, we ensure that the conditional probabilities $p_\theta(x_1^i | x_t)$ are well-defined and consistent across different sampling runs, enabling reliable application of the guidance method to \aoamshorts trained with causal attention.

\subsection{Guiding ESM3 with assay-informed predictive models}
\label{sec:method-esm3-fitness}

For the extrapolative design task, we evaluated the ability of \ourmethod to generate diverse proteins with property values that exceeded any seen in the supervised training data. We used assay-labeled datasets for three diverse proteins which include many variants with more than single mutations: the CreiLOV fluorescent protein~\citep{chen2023deep}, the ubiquitination factor E4B (UBE4B)~\citep{starita2013activity}, and the poly(A)- binding protein (PABP)~\citep{melamed2013deep}. Below we provide details for the dataset used, the oracle model used for evaluation, the set-up for baselines and guidance, and additional results.

\subsubsection{Datasets}
We used assay-labeled datasets for three diverse proteins which include many variants with more than single mutations: the CreiLOV fluorescent protein \citep{chen2023deep}, the ubiquitination factor E4B (UBE4B) \citep{starita2013activity}, and the poly(A)- binding protein (PABP) \citep{melamed2013deep}~(\suppfref{si_fig:fitness_dataset}a) that have previously used in other works for evaluating protein fitness prediction and optimization~\citep{hsu2022learningfitness, emami2023plug, blalock2025functional, yang2025steeringgenerativemodelsexperimental}. 

\medskip\noindent
\textbf{CreiLOV.}
CreiLOV is a flavin-binding fluorescent protein (FbFP) derived from LOV photoreceptors; LOV-FPs were developed as genetically encoded reporters that work without molecular oxygen, making them attractive alternatives to GFP in hypoxic/anoxic (anaerobic) contexts. We used the dataset from the study of~\citet{chen2023deep} processed by \citet{yang2025steeringgenerativemodelsexperimental}, where higher fitness refers to brighter florescence. The dataset contains 167,530 variants, including (i) site-saturated mutagenesis covering all 118 residues (i.e., all single substitutions), and (ii) a higher-order mutations at 15 selected positions with beneficial single mutations, with up to 15 mutations from wild-type (\suppfref{si_fig:fitness_dataset}a, left).

\medskip\noindent
\textbf{UBE4B.}
UBE4B is an E3 ubiquitin ligase in the ubiquitin–proteasome system, where modulating ligase activity can change protein ubiquitination and downstream regulation; it’s a useful engineering target because small sequence changes can strongly tune enzymatic activity. We used the assay-labeled dataset from~\citet{starita2013activity}, where fitness refers to E3 auto-ubiquitination activity (quantified by high-throughput selection/enrichment of variants with higher activity). Specifically, we used the version of the dataset processed by~\citet{hsu2022learningfitness}, which includes 32,290 variants with mutations in the 102 residues in the C-terminal region of the proteins, with up to 6 mutations from wild-type (\suppfref{si_fig:fitness_dataset}b, right).  

\medskip\noindent
\textbf{PABP.}
PABP is a poly(A)-binding protein that regulates mRNA fate, and its RNA-recognition motifs (RRMs) are a classic system for studying/engineering RNA-binding function. We used the assay-labeled dataset from~\citet{melamed2013deep}, where fitness refers to in vivo functional complementation / growth-based enrichment of RRM2 variants (variants that better support cellular function are enriched) for the yeast version of PABP, usually referred to as Pab1. Specifically, we used the version of the dataset processed by~\citet{hsu2022learningfitness}, which includes 37,711 variants with single and double mutations in a 96 residues region of the protein (\suppfref{si_fig:fitness_dataset}b, middle).

\subsubsection{Oracle models}
Following similar experimental set-up to prior work~\citep{blalock2025functional, yang2025steeringgenerativemodelsexperimental, emami2023plug}, we used an ``oracle'' model trained on the full assay-labeled dataset as an \textit{in silico} proxy to measure the fitness of the generated sequences. For CreiLOV, we used the published fitness oracle from~\citet{yang2025steeringgenerativemodelsexperimental}, which is an ensemble of 10 two-layer MLP regressors trained on one-hot sequence encodings. \citet{yang2025steeringgenerativemodelsexperimental}~used all of the single, double, and triple mutants in the library
for training, with 10\% of the quadruple mutants for validation. The oracle achieves Spearman correlation of 0.93 on its validation set (\suppfref{si_fig:fitness_dataset}b, left).

For UBE4B and PABP, we trained oracle models with similar architectures to above. Each model is an ensemble of 5 two-layer MLPs regressors (hidden dimension size 400) trained on one-hot encoded sequences. For each dataset, we trained the model on 90\% of the sequences and validated on the remaining 10\% sequences, with learning rate of $10^{-3}$ (weight decay of $10^{-2}$) using linear warmup of 10\% of updates. Since the UBE4B and PABP datasets have fewer number of variants and lower-order mutants compared to the CreiLOV dataset, we did not specifically holdout higher-order mutants for testing so as to maximize the information the oracle could extract from the assay-labeled data. The oracle for PABP achieves Spearman correlation 0.92 on its validation set, and oracle for UBE4B achieves Spearman correlation of 0.59 on its validation set (\suppfref{si_fig:fitness_dataset}b, middle and right).

\subsubsection{Set-up and baselines}
We used ESM3 as the pre-trained model. We compared guided generation to unguided generation, and fine-tuning ESM3 with LoRA~\citep{hu2022lora}. For sampling from ESM3, either unguided or after fine-tuning, we used the default temperature of 0.7 and $L$ sampling steps where $L$ is the number of residues we are designing. Since the predictor used for guidance is lightweight, guided generation runs at roughly the same speed as unguided generation. Nevertheless, to be conservative, we generated twice as many samples for both unguided generation and \sftshort when comparing to post hoc filtering. We performed generation starting from fully masked sequences so as not to bias the generation towards the wild-type sequence, while note that conditioning on additional information, such as partially masked wild-type sequence or predicted backbone structure, can make the generation task substantially easier. 

For fine-tuning ESM3 with LoRA, we used the hyperparameter settings detailed in Section~\ref{si_sec:lora} ``Common set-up across \textit{in silico} experiments''. For validation, we holdout $\sim100$ sequences with the highest property value in the labeled dataset (roughly correspond to the top $0.1\%$ of CreiLOV dataset, and the top $0.5\%$ of the PABP and UBE4B datasets). We then experimented with performing fine-tuning on the sequences in the top $q \in \{50\%, 20\%, 10\%, 5\%, 2\%\}$ percentile of the labeled dataset. For each hyperparameter setting, we trained the model with a minimum of 1,000 steps with an effective batch size of 128. We selected the threshold and checkpoint with the lowest perplexity on the holdout sequences to report generation results. We found the perplexity is typically lowest for the models trained on the top $2-10\%$, for which the model typically converged after 1,000 training steps and the differences in holdout perplexities with the different thresholds are not significant. 

\subsubsection{Guidance}
To perform guidance, we trained predictors directly on partially masked sequences to align with the guided sampling distribution. For each dataset, we used the same architecture to the oracle for the predictor used for guidance, and trained them on 2,000 randomly sampled sequences from the labeled dataset to match the experimental budget in our later \textit{in vivo} experiment, with the same set of variants holdout as fine-tuning. Positions in each sequence are masked at random with a masking rate uniformly sampled from $(0, 1]$. We used the same architecture and hyperparameter as the oracle model for these predictive models. The predictors are very lightweight and took less than 5 minutes to train for each dataset.

\medskip\noindent
\textbf{Joint likelihood guidance.}
Because ESM3 is a pan-protein model, we additionally guided sampling to remain within the desired protein family. We therefore guided with a joint likelihood that combines (i) a property term encouraging high predicted property value and (ii) an in-family term encouraging membership in the target family:
\begin{equation}
p(y \ge y^*, \mathrm{fam}=1 \mid x_t)
=
p(y \ge y^* \mid \mathrm{fam}=1, x_t)\,p(\mathrm{fam}=1 \mid x_t).
\end{equation}
We parameterized $p(y \ge y^* \mid \mathrm{fam}=1, x_t)$ using an ensemble regressor: letting $\mu(x_t)$ and $\sigma(x_t)$ denote the mean and standard deviation of the ensemble predictions, we defined
\begin{equation}
p(y \ge y^* \mid \mathrm{fam}=1, x_t)
=
1-\Phi\!\left(\frac{y^* - \mu(x_t)}{\sigma(x_t)}\right),
\end{equation}
where $\Phi(\cdot)$ is the standard Gaussian CDF.
We parameterized $p(\mathrm{fam}=1 \mid x_t)$ with a Bernoulli likelihood whose probability is given by a binary classifier head. The classifier was trained to discriminate between (positive) sequences from the assay-labeled dataset and (negative) sequences sampled from unguided ESM3. For each protein, we trained the model on 500 sequences from ESM3's unguided generation in addition to the 2,000 randomly sampled sequences from labeled dataset. Regression and classification losses were weighted equally.

\medskip\noindent
\textbf{Choosing the property threshold $y^*$.}
Although one could set $y^*=y_{\max}$ (the best observed labeled variant) to directly target extrapolation, we found this yields a weak early guidance signal because predictors tend to output values near the training-set mean when the sequence is heavily masked. To keep the guidance signal non-degenerate throughout the trajectory, we set $y^*$ to be at or below the mean label value of the predictor training set (CreiLOV: $3.8$; PABP: $-1$; UBE4B: $-1.5$). A natural extension is to anneal $y^*$ during sampling---starting near the training-set mean early in decoding and increasing toward a more stringent threshold later---analogous in spirit to the progressively tightened conditioning distributions used in CbAS~\citep{brookes2019conditioning}. While such annealing could plausibly improve extrapolation further, we found the fixed-threshold heuristic to be sufficient and did not explore annealing here.

\medskip\noindent
\textbf{Guided sampling.}
For guided generation, we used TAG with $\gamma = 10$ and $\Delta t = 0.01$. We used the same temperature of 0.7 as with unguided generation. The joint likelihood provides two guidance terms that can compete early in generation, particularly before the partially unmasked sequence has entered the in-family region of ESM3's distribution. To stabilize generation, we optionally applied staged guidance: for PABP and UBE4B we first guided using only the in-family term $p(\mathrm{fam}=1 \mid x_t)$ and then switched to the full joint likelihood. Specifically, with $t=0$ denoting a fully masked sequence, we began joint guidance at $t=0.1$ for PABP and $t=0.5$ for UBE4B, and used joint guidance throughout the entire trajectory for CreiLOV. An alternative would be to use separate guidance strengths for the two terms; we did not pursue this to keep hyperparameters consistent across experiments.

\subsubsection{Additional results}
In addition to the main-text results (\fref{fig:fitness_single}), we report structure-based metrics computed from ESMFold predictions for generated sequences, including the distribution of pLDDT and the refolded RMSD to the predicted wild-type structure (\suppfref{si_fig:fitness_structure}). Unguided ESM3 generations frequently exhibit low pLDDT and large refolded RMSD; consequently, a fraction of guided samples also fall into this low-confidence range. Importantly, however, \ourmethod also produces sequences that simultaneously achieve high predicted property values (\fref{fig:fitness_single}d), remain diverse and substantially different from wild-type (\fref{fig:fitness_single}c), and exhibit favorable structure-based metrics, suggesting that the best guided generations are not merely exploiting the fitness oracle. In contrast, fine-tuned (FT) generations tend to remain very close to the wild-type sequence (\fref{fig:fitness_single}c), and accordingly often yield higher pLDDT and lower refolded RMSD.

\subsection{Multi-property guidance with \ourmethod}
\label{sec:method-multiproperty}

\subsubsection{Dataset}

In this experiment we use the data from the first library tested by~\citet{wang2025active}. This library comprised of the wild-type sequence (WT) and 1,098 variants arising from an alanine scan of the protein, a site saturation mutagenesis (SSM) at 49 positions, and a modest number of higher order mutations (up to degree 5) sampled from 7 positions. Because the assay values saturate to finite values for non-functional variants, we use an inverse soft-plus transformation on the data and transform predictive model outputs back with a softplus so that they saturate at the same non-functional level as the real data.

\subsubsection{Oracle and predictive models}
Both the oracle and predictive models in this experiment are regression models that use the kernel trick. We use a kernel engineered to allow meaningful use of mutational data even when the mutation being test on is not necessarily the same as the mutation observed in the data. Formally, for one-hot encoded variants $X_1, X_2$ and the one-hot encoded wild-type $X_\text{WT}$, the kernel for capturing first-order epistasis (point mutations in isolation) is $K_1(X_1, X_2)=(X_1-X_\text{WT})^T(X_2-X_\text{WT})$. Note that this regressor only weighs the effects of different mutations through the variants, not at the level of individual epistatic terms. Due to this structure, higher order kernels for $n$-th order simply need to count the $n$-th order terms that are shared between two variants. This is purely a function of the first-order matches, \ie{ }$K_n(X_1, X_2)=$ ${K_1(X_1, X_2)}\choose{1}$. The predictive model is second order, $K_1+K_2$, and the oracle is third order, $K_1+K_2+K_3$.

For a partially masked sequence $x_t$, predictive model of Pb and Zn activity $f(x)=(\hat{y}_\text{Pb},\hat{y}_\text{Zn})$, and a target region $T$, the noisy classifier was defined as $p(y|x_t)=\mathbb{E}_{p_\theta^*(x|x_t)}[\mathbf{1}[f(x)\in T]]$. Here $p_\theta^*(x|x_t)$ is the product-of-marginals distribution for the pretrained generative model $p_\theta(x|x_t)$. We also estimate the expectation with 50 samples from $p_\theta^*$.

Finally, due to mismatches in the predictor and oracle, samples that look good under the predictor might fail when scored by the oracle. In a real design setting, there is no oracle access, so we aimed to somehow estimate this mismatch. We reasoned that, especially with our regression models that used the kernel trick, mutations for outlier datapoints would cause extremely large fluctuations in the predicted values when seen in new mutational context. This is because the optimization problem when fitting the regression parameters is ill-conditioned. This can result in the regression coefficients for a few data points relevant for the prediction of the outlier being perfectly balanced against each other, with large values canceling each other out to match the true label. When a new variant is scored, a different subset of sequences may have high inner-product scores with it. Among this new subset, the unstable, large regression coefficients for an outlier sequence may suddenly dominate. 

To quantify this bias for variants far form the data, we generated new variants with the non-SSM positions fixed and up to 30 mutations selected uniformly at random. We then scored them with the trained predictor and additionally scored them with an ensemble of 10 predictors each fit on the training data with 20\% of the examples randomly dropped out. We observed that Zn binding affinity was typically underestimated by our model and that Pb predictions tended to not have a large systematic bias (\suppfref{si_fig:multi_main}e). We adjusted the definitions of the target regions for all experiments according to these heuristically observed residuals.

This method helps for our kernel predictors, but does not address the difference in epistatic degree modeled by the predictive model and the oracle. Accordingly, we do observe several outliers that were classifier as being in the target region by the 

The oracle model was trained on the full dataset whereas the predictor was only trained on a subset of 1060 sequences that excluded some high performing and higher mutational depth sequences. The excluded sequences were used for the validation set (\suppfref{si_fig:multi_data}a, left and right panels).

\subsubsection{Experimental set-up and baselines}

For all models we only designed the positions for which there was SSM data available.
All results were compute matched. The unguided model generated 1000 sequences, the finetuned model generated 500, and \ourmethod got 100. All models were evaluated on their top 100 sequences. We used the exact guidance algorithm for \ourmethod (Section~\ref{si_sec:exact_guidance_derivation}).

The finetuned model trained on the subset of the predictor training data that was in the more Pareto-optimal arm of the experimental data distribution (\suppfref{si_fig:multi_data}a, center panel, points above the dashed line). It was trained until the loss converged. We did not observed any overfitting on training data. The validation loss converged at a similar time as well. 

The original PbrR study fine-tuned a protein langauge model to iteratively redesign the best sequences in the dataset for the Pareto extrapolation task~\citep{wang2025active}. They call this method Multi-objective Conservative Extrapolation, which we refer to herein as MOCE. In addition to the baselines mentioned above, we also compare to this method and find that it does not generate any successes (\suppfref{si_fig:multi_main}d).
This is likely because their method is designed to conservatively limit extrapolation and does not move far from the training data.

For MOCE, we used the training and generation scripts in the project Github repository. We trained the model on the same data as the predictor, since MOCE uses pairs of data to learn the directional impact of mutations on sequences. We use the 30 sequences closest to the target region as the starting points for sampling. We also allow MOCE to generate 600 sequences of which there are 299 unique variants and 100 of these were used for evaluation.

\subsection{Enzyme class guided sequence generation}
\label{sec:method-enzyme}

In this experiment, we demonstrated how guidance can generate sequences from target Enzyme Commission numbers (ECs). ECs are hierarchical four-level codes describing different enzyme classes. We trained an enzyme class predictor that classifies protein sequences into their ECs based on their sequence and used it to guide the protein language model ESM3 \cite{hayes2025simulating}. We present results for guiding sequence generation towards EC numbers of varying prevalence in SwissProt. Our results focus on the top-10 most common ECs (\eg, Figure~\ref{si_fig:ec}c) but we include data for 10 random and rare EC classes (\eg, \tref{tab:enzyme_guidance_results_ec_sets_combined}) as well.


\subsubsection{Enzyme class predictor}

Our enzyme class predictor was a 3-layer multilayer perceptron (MLP) that took mean-pooled ESM3 sequence embeddings as input. The embeddings are of dimension 1536 and we used an MLP with hidden dimension ($2 \times 1536 = 3072$). The predictor was trained on the SwissProt dataset used by ProteInfer \cite{sanderson2023proteinfer}. Our model is trained on all SwissProt sequences, including non-enzymes. Sequences for each EC were randomly divided into training and test sets, with the test set additionally including the test sequences for CLEAN~\cite{yu2023enzyme} which we used for benchmarking our predictor. The test set contained sequences from ECs with at least 10 samples. Those with at least 30 were used for training. Finally, 15\% of the non-enzymes were included in the test set. The rest were used as training data. The training data contained 315,833 samples and the testing data contained 36,354 samples. The noisy predictor's dataset contained 10 randomly sampled noising times for each sequence.

The model was trained with a standard negative log-likelihood objective and an AdamW optimizer (learning rate of 0.001, weight decay coefficient of 0.01). We used a batch size of 15k and trained for two epochs on the full training data. The accuracy curves for each mask-level bin are plotted over the test set of the 100 most common level-4 EC numbers (\fref{si_fig:ec}b).

\subsubsection{\ourmethod set-up and baselines}

In this experiment, we implemented exact classifier guidance for discrete flow matching as described in \citet{nisonoff2025unlocking} using the exact sampling algorithm described in Section~\ref{si_sec:exact_guidance_derivation}. As mentioned in that section, this is similar to the Gillespie algorithm described in \citet{peng2025path}. The key difference is that their Gillespie algorithm is for unconditional discrete diffusion, whereas our proof gives a general method for converting any CTMC with state- and position-wise-independent transition probabilities. We used a guidance strength $\gamma=10$ on the classifier outputs and ESM3 sampling temperature of 0.7. We sampled all positions one at a time.

This experiment includes three baselines: post hoc filtering, a LoRA fine-tuned model, and ESM3 using natural language prompting. For post hoc filtering, we take 210 unconditional samples for each Enzyme Class so that the comparison is compute-matched between the two models. Exact guidance requires 21 unconditional model evaluations and the cost of the classifier ends up being negligible by comparison. We finetune ESM3 using LoRA according to setup described in Section~\ref{si_sec:lora}. A separate LoRA model is trained for each EC. Due to the additional training cost of fine-tuning of this experiment, we only allowed it the same sampling budget as guidance. We use an 80/20 split between training and testing. We train each LoRA model to convergence and take the checkpoint with lowest test loss (average cross entropy for masked positions). For natural language prompting, we curated a set of the keywords from the ESM3 IntroPro keyword vocabulary that were most similar to the official EC name using fuzzy string matching (see \tref{tab:enzyme_keywords} for full list). We then sample from ESM3 with these keyword prompts as additional conditioning information during sequence generation.

\subsubsection{Evaluation metrics}

We considered the generated sequences a success if i) the predicted class with the highest probability was the target EC and ii) if ESMFold predicted a median pLDDT across residues of at least 0.8. The ``oracle'' model for this experiment was trained the same way as the predictor, just on all of SwissProt.


\subsubsection{Detailed results}
In our main experiment, we had each method (unguided with and without post hoc filtering, finetuning, natural language prompting, and \ourmethod) generate 10 sequences for each of the 10 most common enzyme classes (\fref{si_fig:ec}). We did not observe a significant difference in the success rate between the finetuned model and \ourmethod, as was expected for an interpolation task. We did not observe any successes from the unguided model or the natural language prompting.

We additionally ran an experiment where we compared unguided sampling and \ourmethod's abilities to redesign partially masked enzyme sequences to recover another sequence of the same EC (\fref{si_fig:ec}d). We used several masking levels (50\%, 60\%, 70\%, 80\%, 90\%, and 100\%). We find that \ourmethod displays a consistent improvement over the unguided model. At high masking levels this is a substantial relative improvement over the unguided model, but a small relative improvement at low masking levels. It is worth noting that we find no detectable degradation in the pLDDTs of designed sequences due to the guidance process.

Finally, we report results for 10 randomly selected and the 10 most rare ECs (\tref{tab:enzyme_guidance_results_ec_sets_combined}). For these experiments, we start generation from real enzyme sequences with $80$\% of positions masked. We find that the \ourmethod performs similarly on the top 10 classes and 10 random classes. However, the 10 least represented classes are not in the training data for the predictor. Accordingly accuracy suffers. \ourmethod is still able to achieve success about one-third of the time. We believe this is because of its cross-entropy training objective, allowing it to guide to the target classes essentially by ``process of elimination'', in that guidance is really just avoiding the classes for which the predictor does have trained logits.

\subsection{Fold class guided backbone structure generation}
\label{sec:method-fold-class}

We first provide an overview of the experimental set-up for guiding ESM3 backbone structure generation with fold class labels from CATH~\citep{dawson2017cath}, a hierarchical protein structure classification scheme. Concretely, we trained a fold classifier, $p_\phi(\mathbf{c} \, | \, \mathbf{x})$ that predict the CATH labels, $\mathbf{c}$, from the discrete structure tokens outputted by the structure encoder of ESM3, $\mathbf{x}$, on the set of CATH-labeled protein structures from the Protein Data Bank (PDB). We combined this fold class predictor with the structure generation track of ESM3, $p_\theta(\mathbf{x})$, to obtain a fold class conditioned backbone structure generative model, $p_{\theta, \phi}(\mathbf{x} \, | \, \mathbf{c})$ (\suppfref{si_fig:fold_class}a). We designed this experiment to demonstrate that \ourmethod can be used to guide a large multimodal protein language model to generate structures. 

\subsubsection{Fold classifier}
Now we describe additional details of the fold class oracle and the fold classifier used in guidance. Since ESM3 operates on discrete structure tokens, we need a predictor which predicts the fold class label of a protein from its structure token representation. To this end, we first trained a fold class predictor that takes in discrete structure tokens as inputs. The model architecture consists of an encoder-only Transformer trunk. The final per-position embedding is mean-pooled, then fed into three parallel linear layers, each projecting the embedding to a vector with the dimension equal to the number of classes for each of the three levels of CATH hierarchies: C (``Class''), A (``Architecture'') and T (``Topology/Fold''). The C level describes the overall secondary structure-content of a structure (\eg, ``mainly $\alpha$'', ``mainly $\beta$'', and ``mixed $\alpha/\beta$), the A level roughly clusters structures by the overall arrangement of secondary structure in space (\eg ``beta barrel'', ``roll''), and the T level assigns structures by how the secondary structure elements are connected and arranged (\eg ``immunoglobulin-like'', ``TIM barrel'')~\citep{dawson2017cath}. The model was trained on the PDB dataset following the same pre-processing procedure as in \citet{geffner2025proteina} with a total of 214,564 structures, categorized into 5 C classes, 43 A classes, and 1,336 T classes. We fed each structure through the structure encoder of ESM3 to obtain its structure token representation. The dataset is randomly divided into training, validation, and test sets at a ratio of 8:1:1, ensuring that at least one protein from each class is included in the test set whenever possible. 

The model was trained with a standard cross-entropy loss applied to each of the CATH level. Some proteins have multiple CATH labels due to having multiple domains and CATH being a domain-level annotation. For such proteins, we followed~\citet{geffner2025proteina} and randomly sampled one domain label as the ground truth for each training iteration, encouraging the model to predict equal probabilities for all domain labels of a protein. During evaluation, if the model predicts any of the correct labels, the prediction is considered correct. For each protein, the loss is calculated for each level of its available CATH label. The transformer trunk of the model has a hidden dimension size of 512, 16 attention heads and 5 encoder layers. In total, the model has around 19 million parameters. The model is trained using the AdamW optimizer~\citep{kingma2015adam,loshchilov2018decoupled} with a learning rate of 0.0001, weight decay coefficient of 0.01, a dropout rate of 0.2, a batch size of 128 and a gradient accumulation step of 4. The model is trained for around 26,000 training steps on a single NVIDIA RTX 6000 GPU. We did not systematically tune the hyperparameters of the model. On its test set, the model achieves an aggregated Micro Accuracy of 0.9678 (A level) and 0.9490 (T level) and Macro Accuracy of 0.9702 (A level) and 0.9212 (T level)~\citep{grandini2020metrics}. 

The fold classifier used for guidance takes in partially masked inputs. We initialized the model weights of this ``noisy'' classifier with those of the unnoised classifier, and trained it on the same training dataset as the unnoised classifier. We trained the noisy classifier for around 28,000 additional training steps, with the same hyperparameters as the unnoised classifier.

\subsubsection{Set-up and baselines}
We used ESM3 as the pre-trained model (with its structure token track) and performed fold class conditioning on the A (``Architecture'') and T (``Topology'') levels of CATH, which provide a practically informative middle ground between the coarser C level (Class), which is often too broad to be challenging or discriminative, and the finer-grained H level (Homology), which is highly specific and can be overly restrictive for controlled generation. The A level roughly clusters structures by the overall arrangement of secondary structure in space (\eg ``beta barrel'', ``roll''), and the T level is a more fine-grained assignment based on how the secondary structure elements are connected and arranged (\eg ``immunoglobulin-like'', ``TIM barrel'')~\citep{dawson2017cath}. We compared the methods on 10 most common classes and 10 randomly selected classes for both A and T level (\supptref{tab:cath_labels_a_t_common_random}), evaluating on $k=100$ samples for each class. The generated structures are evaluated on two criteria: (i) whether the fold class oracle predicts the desired class as the most likely class for the generated structure, and (ii) whether the structures are considered designable, where designability is calculated following standard procedure~\citep{pmlr-v202-yim23a}: for each generated backbone, eight sequences are sampled using ProteinMPNN~\citep{dauparas2022robust} with a temperature of 0.1, then ESMFold~\citep{lin2023evolutionary} is used to predicted the structure of each generated sequence. The RMSD is calculated between the generated structure and the predicted structure, and a generated backbone is considered designable if the lowest RMSD---referred to as the self-consistency RMSD (scRMSD)---among the eight sampled sequences is $\le2$\r{A}.

We compared \ourmethod to unguided ESM3 sampling (with and without post hoc filtering). We additionally compared \ourmethod to ESM3’s built-in prompt conditioning, which was trained to accept keyword-style ``function tokens'' as conditioning signals. Following the procedure in~\citet{hayes2025simulating} and the ESM3 GitHub repository, we converted each target CATH label to a set of ESM3-compatible keywords via an intermediate mapping through InterPro. Specifically, for Architecture (A; two-part codes) and Topology (T; three-part codes), we first mapped the CATH code to one or more associated InterPro entries, and then converted each InterPro identifier to a (possibly multi-keyword) list using ESM3’s provided InterPro-to-keyword dictionary (e.g., tokens such as \texttt{beta}, \texttt{barrel}, \texttt{helix}). Each keyword was provided to ESM3 as a function annotation spanning the full sequence (positions 1 through the target length); when multiple InterPro entries mapped to multiple keywords, we supplied the union of the resulting annotations.

As for the comparison to fine-tuning, we performed LoRA-based fine-tuning of ESM3 on proteins from the desired classes. Because the targets here are discrete labels, we performed fine-tuning separately for each class, substantially increasing the overall time and compute cost for fine-tuning compared to guidance. We used the same LoRA setting as previously described, with the differences that we fine-tuned the structure output head instead of the sequence output head. For each class, we performed fine-tuning on 90\% of the examples, with 10\% holdout for validation, training for a minimum of 500 steps and a maximum of 5000 steps, with an effective batch size of 64 and early stopping imposed if validation loss stops decreasing for more than 5 epochs. We found setting a sufficient large minimum number of training steps to be important for fine-tuning, especially for the classes with smaller number of examples. Depending on the number of samples and how quickly it converges, fine-tuning on a single class can take between several hours on the small classes to more than half a day on the larger classes on a single GPU. Due to the additional training cost of fine-tuning of this experiment, we only allowed it the same sampling budget as guidance.

Following evaluation procedures in prior work~\citep{geffner2025proteina}, for each class, for every sample, we sampled its length uniformly among the lengths present for the proteins of that class in the AFDB, with a minimum length of 50 and maximum length of 275. For unguided generation, keyword prompting and fine-tuning, we used the default sampling setting of ESM3 with a sampling temperature of 0.7 and $L$ sampling steps, where $L$ is the length of the protein sequence~\citep{geffner2025proteina}. We also experimented with using a lower sampling temperature and found that it results in ESM3 producing predominantly mainly~$\alpha$ proteins. For guided generation, we used a sampling temperature of 0.3 and guidance strength $\gamma=100$. We used Euler integration with TAG with a step size $\Delta t=0.005$. We found using a sampling stochasticity $\eta=20$ to improve guided generation, but did not improve unguided generation.  Unguided sampling was around 1.2 times faster than guided sampling.

\subsubsection{Detailed results}
Since the goal of this task is to generate more samples similar to those observed in the training distribution, and fine-tuning separately for each class allows each fine-tuned model to fully concentrate its distribution to the target class, we expect fine-tuning to be a very strong baseline. Indeed, we observed that fine-tuning to perform quite well across different levels and classes (\suppfref{si_fig:fold_class}b,c,d). The \ourmethod generation also substantially enrich for the desired annotations compared to unguided generation and keyword conditioning, with success rates comparable to the strong baseline of class-specific fine-tuning (\suppfref{si_fig:fold_class}b,c,d). We observed keyword conditioning to be helpful for some classes but not the majority of cases. Also as expected, generation for more rare classes is more challenging (\suppfref{si_fig:fold_class}c,d). We also observed that across all generation methods, among the samples that were classified correctly, only a modest fraction is designable. This aligns with previous observations that ESM3 tends to produce backbones that are less designable, possibly due to the composition of its training set~\citep{geffner2025proteina}. We expect the quality of generated structures should improve concomitantly with improved backbone generative models, independent of guidance. Examples of the generated backbone structures by guiding ESM3 with some example of CATH classes are shown (\suppfref{si_fig:fold_class}e).


\subsection{\textit{In vivo} experiment: guiding base editor for improved editing activity}
\label{sec:method-tadA}

Base editors are engineered CRISPR effectors that can make transition mutations in the DNA by directly deaminating cytosine (causing C-T mutations) or adenine (A-G mutations) in the single-stranded DNA exposed during target recognition by catalytically inactivated Cas9 (dCas9). Adenine base editors (ABEs) can fix about half of the pathogenic point mutations leading to genetic diseases \citep{rees2018base}. They have been evolved from an \emph{E.coli} tRNA-deaminase TadA through multiple rounds of directed evolution \citep{gaudelli2017programmable, gaudelli2020directed, richter2020phage}. ABEs are compact single domain proteins (167 amino acids) and it has been shown that they can be evolved through at least two different evolutionary trajectories, leading to high DNA deamination activity \citep{gaudelli2017programmable, richter2020phage, xiao2024adenine}. Furthermore, both evolutionary paths have led to a high concentrations of mutations in the C-terminal region, suggesting that this region can be a promising target small enough to be chemically synthesized as an oligo pool, but functionally important for evolving DNA base editing activity.

We focused on ABEs because of their importance as an engineering target,  their small size and the availability of existing methods for measuring activity based on reversion of an inactivated antibiotic resistance \citep{gaudelli2017programmable, xiao2024adenine} that are suitable for high-throughput screening and detection. Additionally, ABEs (EC 3.5.4.33) are an example of deaminases, a large class of enzymes with important biomedical and biotechnological applications, including other base editors, RNA deaminases and purine and pyrimidine deaminases \citep{BUDZKO2023102062, DELARCO2024108473}. 

We explored how experimental data for base editor activity could be used to condition pre-trained generative models to produce sequences with improved activity. As an overview, we first designed a first-round library with pre-trained generative models. We collected experimental assay-labeled data on each of the designed sequences measuring their activities. Then, we trained a predictor for activity based on the first-round sequences, which we then used to guide the design of a second-round library with~\ourmethod. Our goal was to generate sequences in the second round with improved activities compared to those in the first round. 

\subsubsection{ABE Library cloning}
ABE libraries contained a diversified region of 86 amino acids (position 82-167) that was synthetically generated as an oligo pool and purchased from Twist Bioscience. The pool was amplified with KAPA polymerase (KAPA HiFi HotStart polymerase, Roche) using the following conditions: 1x KAPA HiFi 2x ReadyMix, 300 nM forward and reverse primer, 5 ng template DNA; 3 min initial denaturation at 95 \textdegree C,  20 cycles of 98 \textdegree C for 20 sec, 64 \textdegree C for 15 sec, 72 \textdegree C for 1 min and a final extension of 2 min at 72 \textdegree C. The resulting product was loaded on 2\% aragose gel and gel-extracted using DNA Clean-Up \& Concentration kit (Zymo Research). This pool was then cloned into a vector carrying the rest of the TadA sequence fused to dSpRYCas9 with an XTEN linker following the architecture of ABE0.1 (Supplementary sequence 1) ~\citep{gaudelli2017programmable}. The library was cloned via Golden Gate assembly using BsaI restriction enzyme and T7 ligase (New England Biolabs). A total of 0.45 pmol library was mixed with 0.15 pmol vector and cycled for 5 min at 37\textdegree C and 5 min at 16\textdegree C for 30 cycles, followed by 30 min at 37\textdegree C and 5 min at 65\textdegree C. The assembled library was purified using DNA Clean-Up \& Concentration kit (Zymo Research) and transformed in One Shot Top10 competent cells (Thermo Fisher Scientific) by electroporation. 950 $\mu$L SOC media were added and cells were incubated at 37\textdegree C in a shaking incubator at 200 rpm for 1 h. Samples were taken for titer plates to estimate transformation efficiency and the rest of the recovery was inoculated in 2XYT media supplemented with 36 $\mu$g/mL chloramphenicol and grown for 8h. Plasmid DNA was purified with QIAprep Spin Miniprep Kit (Qiagen) and used for selection experiments.
\subsubsection{Base editing selection assays}
To measure adenine base editing we generated a reporter carrying a carbenicillin resistance gene with a stop codon mutation (Supplementary sequence 2) and a corresponding guide RNA and kanamycin resistance gene for plasmid maintainance. The selection strain was produced by transforming Top10 \emph{E.coli} cells with the reporter plasmid and preparing electrocompetent cells. Briefly, the plasmid was transformed in One Shot Top10 (Thermo Fisher Scientific) cells and a single colony was inoculated in liquid culture with 2XYT media and 50 $\mu$g/mL kanamycin. Cells were grown to mid-log phase and spun down at 3500 g, washed twice with ice-cold MiliQ water and twice with 10\% glycerol (v/v). Each spin was performed for 10 min at 3500 g and after a final spin of 13 min the cell pellet was resuspended with 2 mL of 10\% glycerol for 1 L of starting culture, aliquoted and flash-frozen in liquid nitrogen and stored at -80\textdegree C. For the selection experiments 100 $\mu$L of cells were transformed with 200 ng of library plasmid, electroporated and 900 $\mu$L of SOC media was added. After 1 h recovery the cells were inoculated in 4 mL 2XYT media supplemented with 50 $\mu$g/mL kanamycin (non-selective antibiotic for the reporter plasmid), 36 $\mu$g/mL chloramphenicol (for the base editor plasmid) and 0.1 \% arabinose for induction of the base editor and cells were induced for 20 h (for library 1 experiments) and 16 hours (for library 2). After that cells were washed twice with 2XYT to remove inducer and split in two. Half were inoculated in 30 mL 2XYT with 50 $\mu$g/mL kanamycin and 36 $\mu$g/mL chloramphenicol (non-selective) and the other half inoculated 30 mL 2XYT with 400 $\mu$g/mL carbenicillin and 36 $\mu$g/mL chloramphenicol (selective). Cells were grown to mid to late log phase and spun down, plasmid DNA was isolated using QIAprep Spin Miniprep Kit (Qiagen). 100 ng of the plasmid DNA was used as template for a PCR1 reaction amplifying the variable base editor region and adding adapters for Illumina sequencing (Table~\ref{tab:illumina_primers}). The base editor portion was amplified using Q5 polymerase as follows: 100 ng plasmid DNA was used as input, 200 nM forward and reverse primer, 1x Q5 MasterMix. The reaction was carried at 98 \textdegree C for 30 sec and 20 cycles of 98 \textdegree C for 10 sec, 63 \textdegree C for 10 sec and 72 \textdegree C for 1 min, with a final extension of 5 min at 72 \textdegree C. The DNA was purified with DNA Clean-Up \& Concentration kit (Zymo Research) and sent to Innovative Genomics Institute NGS core facility for PCR2 amplification and sequencing. The library was sequenced with NextSeq 1000/2000 P1 Reagents (300 Cycles) or MiSeq Reagent Kit v2 (600 cycles).
Selection experiments for the comparison with ABE1.1, ABE7.10 and ABE8 were performed in the same selection strain as the library experiments. For the titer plate assays 25 uL of competent cells were transformed with 50 ng of each plasmid, cells were electroporated and incubated at 37\textdegree C for recovery in 475 $\mu$L SOC. They were added to 2 mL 2XYT media supplemented with 50 $\mu$g/mL kanamycin kanamycin, 36 $\mu$g/mL chloramphenicol and 0.1 \% arabinose and induced for 6h. Serial 10-fold dilutions were made on selective (400 $\mu$g/mL carbenicillin and 36 $\mu$g/mL chloramphenicol) and non-selective antibiotics (50 $\mu$g/mL kanamycin and 36 $\mu$g/mL chloramphenicol) and the ratio of colonies on selective over non-selective condition was caluclated.
For the pooled experiment with ABE1.1, ABE7.10 and ABE8 all plasmids were mixed in a 1:1 ratio and 100 ng of the plasmid mix was transformed in the selection strain. Transformation and induction were performed identically to the library experiments except that cells were induced for 6h. After the selection the plasmid DNA was extracted, the ABE region was amplified by PCR and sent for nanopore sequencing. Log enrichment was calculated identically as in the library experiments.  

\subsubsection{Processing sequencing data}
The paired-end $2\times 150$ base pair sequencing reads exhibited variable-length overlapping regions resulting from additional nucleotides incorporated between the Illumina adapters and the primer annealing regions. To process and merge these reads, we first aligned the paired reads to the primer binding sites at both the 5' and 3' termini. In the event of nucleotide call disagreement between the forward and reverse reads in the overlapping region of the two reads, the nucleotide call with the higher Phred score was retained. These merged reads were then matched against the designed library of DNA sequences, keeping only reads with exact matches. To calculate log-enrichments between the selective and non-selective conditions, we added a pseudocount of 1 to all read counts for both conditions. The log-enrichment, $\log e_i$, for each library member indexed by $i$ was computed as:

\begin{align}
\log e_i &= \log \left(\frac{n_i^{\text{s}}/{N^{\text{s}}}}{n_i^{\text{ns}} / N^{\text{ns}}}\right),
\end{align}
where $n_i^{\text{s}}$ and $n_i^{\text{ns}}$ represent the read counts for library member $i$ in the selective and non-selective conditions, respectively, and $N^{\text{s}}$ and $N^{\text{ns}}$ denote the total number of read counts in the selective and non-selective conditions, respectively.

\subsubsection{Pre-trained sequence generative models}
We hypothesized that the structure of the wild-type TadA protein contains important constraints for maintaining a baseline editing activity, from which we could seek to improve. Therefore, we chose to use sequence generative models that we are able to explicitly condition on the wild-type TadA backbone structure. Specifically, we explored the use of two different pre-trained sequence generative models, ESM3 and FMIF (Flow-Matching Inverse Folding), an inverse folding model which we trained with very similar training data and architecture as ProteinMPNN, but with the explicit flow-matching training objective. We selected FMIF over ProteinMPNN to demonstrate the applicability of our guidance methodology to flow-matching models, thereby establishing the broader generalizability of our approach. For ESM3, we used it as an inverse-folding model by providing the backbone structure as conditioning inputs to the model when generating sequences.

We trained the FMIF model using the same dataset and training protocol described in~\citet{nisonoff2025unlocking} with two exceptions: (i) we used 3 encoder and decoder layers instead of 4, as both hyperparameter settings performed equally well on a held-out test set and (ii) we removed all examples of engineered ABE variants from the 2021 multi-chain training dataset provided by~\citet{dauparas2022robust}. This was done in order to ensure that FMIF does not leverage helpful information for engineering ABEs that is generally not available when engineering a new protein. Specifically, we removed PDB entry 6VPC chains E and F from the training set.

\subsubsection{First-round library design}
We built the first-round library by designing two sets of sequences with ESM3 and FMIF separately, then combined the sequences into a pooled library. The wild-type TadA is a homo-dimer, and we sought to adhere to this constraint when generating the sequences. To this end, at each sampling step, we obtained the output logits of the models when conditioning on the structure of homo-dimer. That is, if the protein is of length $L$, the output logits is of size $2L \times S$, where $S$ is the alphabet size of the model. We then took the average between the logits which correspond to the same position on sequence of the monomer following~\citet{dauparas2022robust}. We used the averaged logits to parameterize the rate matrix used for sampling. As a result, at each sampling step, we only generate $L$ residues, one for each position on the monomer, while respecting the structural constraints of the homo-dimer.

In many practical protein engineering applications, it is often desirable to have a control over the number of mutations introduced to the wild-type. In our case, we hoped to have a range of mutation counts in the first-round library, since we do not know a priori how tolerant TadA is to mutations. To this end, we used an additional sampling hyperparameter used in ProteinMPNN~\citep{dauparas2022robust} which we called the \textit{wild-type weight}, $w$. At each sampling step, the output logits of the model is biased by the wild-type weight with $l^w_{ds} = l_{ds} + w$ for every position $d$ of the sequence, where $l_{ds}$ is the original output logits of the model for position $d$ and state $s$, with $s$ being the state of the wild-type sequence at position $d$. Setting $w=0$ recovers sampling from the original model, and increasing $w$ allows one to gradually decrease the number of mutations made to the wild-type sequence, where $w \rightarrow \infty$ will result in only the wild-type sequence being sampled. The biasing of the logits by the wild-type weight can be interpreted as sampling from a weighted mixture between the original distribution parameterized by the model, and a delta distribution centered on the wild-type sequence. We note that in principle, constraining the number of mutations can also be imposed by introducing a guidance term that encourages the desired number of mutations, but doing so in a computationally tractable way might be challenging, so we leave this investigation to future work.

In our first-round library, we designed around 1,000 sequences with ESM3 and FMIF, respectively, for a total library size of around 2,000 sequences. In addition, we kept three copies of the wild-type TadA sequences in the library as controls. We chose to only design the region of 86 residues on the C-terminal region of the TadA protein due to the cost constraints of gene synthesis, and we hypothesized this region might be enriched for mutations important for increasing editing activity based on prior studies~\citep{gaudelli2017programmable,richter2020phage,xiao2024adenine}. We fixed active site residues Pro86, Cys87, and Cys90 and allowed the model to design the rest of the region. For both ESM3 and FMIF, we used 8 different values of wild-type weight $w$ linearly spaced between 0 and 3.5. 
We sampled from both ESM3 and FMIF using Euler integration with a timestep $\Delta t = 0.002$, with a temperature of 0.5 and sampling stochasticity $\eta = 20$.

\subsubsection{Predicting ABE activity from the first-round library}
The log-enrichment data collected from the first round of designs was used to design a second library enriched in highly-active base editors. Specifically, we sought to generate sequences with an average first-round log-enrichment score of at least $0.0$ (\suppfref{si_fig:round1scatter}). To do this, we trained a classifier to predict, given a sequence, whether the sequence's log-enrichment is greater than zero or not. Guidance requires a classifier trained not only on ``clean'' / ``un-noised'' sequence-label pairs, $\{x_1^i, y^i\}$, but also trained on ``noised'' sequence-label pairs $\{x_t^i, y^i\}$, $t\in [0,1]$. This was done in two steps: first, a classifier was trained on just the ``un-noised'' data and then the classifier was further trained on the ``noised'' data.

First, the data was pre-processed by discarding sequences with highly-variable log-enrichments between the two replicates. Specifically, any sequences for which the absolute value of the difference in log-enrichments between the two replicates exceeded $0.25$ were discarded. This step filtered $114$ out of $2,000$ sequences. Next, to further mitigate the influence of experimental noise, we discarded sequences with average log-enrichment scores very close to the class boundary. Specifically, any sequences for which the average log-enrichment was between $-0.25$ and $0.25$ were discarded. This step filtered an additional $348$ sequences, resulting in a dataset of $132$ positive sequences (\ie~mean log-enrichment $\ge 0.25$) and $1,406$ negative sequences (\ie~mean log-enrichment $\le -0.25$).

Given the relatively small dataset size and significant class imbalance, we implemented a linear classifier augmented with pairwise interaction terms. This approach draws inspiration from Potts models, which are exponential family models that effectively capture pairwise dependencies between positions and have demonstrated robust empirical performance in protein sequence modeling~\citep{hopf2017mutation}. We trained the model using PyTorch~\citep{paszke2019pytorch,ansel2024pytorch}, minimizing the binary cross-entropy loss with the AdamW optimizer~\citep{kingma2015adam,loshchilov2018decoupled} with a regularization parameter of $10.0$ applied specifically to the pairwise interaction terms. Training proceeded for $400$ epochs using full-batch gradient descent until convergence of the loss function was observed. Across $10$ class-stratified 80\%/20\% train/validation splits, the model achieved an average validation AUROC of $0.92$.

Next, to adapt the classifier to also predict class labels for ``noisy'' data, we froze all single-site terms involving non-mask tokens as well as any pairwise interaction terms between two non-mask tokens (\ie~we only further trained single site parameters corresponding to mask tokens and trained pairwise interaction terms involving at least one mask token). For each gradient step, we first sampled the entire un-noised dataset, then for each sequence in the dataset we randomly sampled a time between $0$ and $1$, and masked the sequence by sampling from the forward noising process up to that time. The model was again trained for $400$ epochs without any regularization, at which point the loss function converged. Training the model in this way, wherein the parameters for the non-mask terms are frozen, has the nice property that the noisy model is gauranted to be equivalent to the non-noisy model at $t=1$.

\subsubsection{Second-round guided ABE library design}
We generated the second-round library by guiding ESM3 and FMIF respectively with the classifier we trained based on the experimental data collected on sequences from the first round. We used the same sampling hyperparameters for each model as we did in the first round. We used exact guidance with guidance strength $\gamma=100$. We designed 944 sequences with each model. All of the generated sequences had a predicted probability of being highly active greater than 0.5. We also included 112 sequences from the first-round library as controls so we could calibrate the log-enrichment measurements between the two rounds. We selected these 112 control sequences with by sampling uniformly across the range of observed log enrichment in the first round.

The 1,888 designed sequences and 112 control sequences were assayed and sequenced with the same protocols as were done in the first round (\suppfref{si_fig:round2scatter}). To enable direct comparison between the first- and second-round libraries, we included 112 sequences from the first round as controls in the second-round library (\suppfref{si_fig:tadA_mapping}a). These control sequences were selected to span the full range of observed log enrichment values from the first round, providing calibration points across the entire activity spectrum. This design allowed us to quantify and correct for batch effects between the two experimental rounds.
We used these control sequences to learn a linear transformation that maps log enrichment values from the second round to their corresponding values in the first round. Specifically, we fit a linear regression model:
\begin{equation}
\log e_{\text{round 1}} = \alpha \cdot \log e_{\text{round 2}} + \beta
\end{equation}
where $\log e_{\text{round 1}}$ and $\log e_{\text{round 2}}$ represent the log enrichment values for the same sequence in rounds 1 and 2, respectively, and $\alpha$ (slope) and $\beta$ (intercept) are the parameters of the linear transformation.

To assess the uncertainty in this calibration procedure, we employed a bootstrap approach using the 112 control sequences. Specifically, we performed 1,000 bootstrap iterations, each time randomly sampling 112 sequences with replacement from our control set and fitting a linear regression model. For each bootstrap sample, we derived a unique set of mapping parameters ($\alpha_i$, $\beta_i$) and used these to transform the entire second-round dataset. We then calculated the median log enrichment value of the transformed second-round sequences for each bootstrap iteration, generating a distribution of potential outcomes that reflects the inherent variability in our experimental measurements.
From this distribution of bootstrap outcomes, we identified the 5th and 95th percentile mappings based on the median log enrichment values. These percentiles represent conservative and optimistic estimates, respectively, of the transformation from round 2 to round 1 log enrichment values (\suppfref{si_fig:tadA_mapping}b).


\newpage

\section{Supplementary Text}

\subsection{Detailed discussion on related work}
\label{sec:si-related-work}

Here we provide a more in-depth discussion on the related work to this work. Since \ourmethod brings technical innovation that are broadly applicable to generative models to the problems in protein engineering, we broadly categorized the related literature into two categories: (i) those that are more related to the technical underpinning of guidance (see subsection ``Related work to guidance''), and (ii) those that are more related to the use of pre-trained models and experimental data in protein engineering (see subsection ``Related work in protein engineering''). For related work that are more specific to the results showing the equivalence of sampling methods used for discrete flow matching and \aoamshorts, see the Methods section ``Simplified and faster sampling for flow matching and diffusion models with masked noise processes''.

\subsubsection{Related work to guidance}

Several recent work have proposed related but different techniques from this work for steering discrete state-space diffusion models. These approaches can be broadly categorized into on-the-fly approaches (which modify the sampling process at inference time) and training-based approaches (which update the parameters of the pre-trained model).

\medskip\noindent
\textbf{On-the-fly approaches.} 
Historically, the general problem of sampling from challenging distributions has also been tackled through Markov chain Monte Carlo (MCMC) methods, and some MCMC-based methods have also been proposed for guiding discrete-space generative models. \citet{emami2023plug} proposed PPDE, which uses the Metropolis--Hastings (MH) algorithm to sample from product-of-experts (PoE) distributions on protein sequence space, where the target is defined via an (unnormalized) sequence-level score combining an ``evolutionary density'' expert and one or more property experts. The key distinction between that approach to \ourmethod are that PPDE requires taking gradients through the pre-trained model, which may be prohibitively large; it requires computing (unnormalized) likelihood from the generative model, which is very expensive for most PLMs; and, as it can be quite sensitive to its initialization, potentially having a mixing time that scales exponentially in the sequence length. In contrast, \ourmethod does not require gradients (or optionally through the predictor, does not require likelihood computations, runs in linear time in sequence length, and does not require a wild-type sequence to initialize sampling. We do note that PPDE can be a useful technique to \emph{refine} generations from \ourmethod, particularly when one desires additional local improvements from a good starting point (e.g., a wild-type or a strong candidate) after generating more diverse candidates with \ourmethod.

Some recent works~\citep{li2024derivative, singhal2025a} have also proposed techniques for guiding discrete-space diffusion models using Sequential Monte Carlo (SMC), building upon works first proposed for continuous-space diffusion models~\citep{wu2023practical}. Importantly, \ourmethod is also complementary to SMC-based approaches as these techniques can be combined with~\ourmethod to improve generation with some additional inference-time costs, such as by way of using multiple particles with proposal distributions defined via TAG. Indeed, through the lens of MCMC, one can also view~\ourmethod as a way to design proposal distributions with minimal rejections by exploiting the structures of the underlying generative process, in a similar spirit to other discrete-space MCMC methods~\citep{grathwohl2021oops}. \citet{lee2025debiasing} observed that when applying stronger guidance, sampling can also be improved with SMC, aligning with challenges in low-temperature sampling noted for continuous-space diffusion models~\citep{du2023reduce,ingraham2023illuminating}. For autoregressive (AR) models, \citet{zhao2024probabilistic} proposed an SMC-based steering method that learns intermediate twist functions via contrastive learning, similar to learning time-dependent predictors in diffusion guidance.

\citet{vignac2022digress} propose DiGress, approximate guidance approach in a discrete-time,
discrete state-space diffusion (D3PM)~\citep{austin2021structured} framework, which \citet{wang2024diffusion} (DPLM) apply to generate protein sequences. In our earlier work~\citep{nisonoff2025unlocking}, we proposed methods for both exact and Taylor-approximate Guidance (TAG), the second of which bears resemblance to DiGress due to both of them using a Taylor series approximation. In our earlier work~\citep{nisonoff2025unlocking}, we discussed the difference between DiGress/DPLM and \ourmethodICLR in detail, and performed comprehensive comparisons to DiGress in three experiments across different domains, generally showing \ourmethodICLR to be superior. Here we briefly discuss the distinction and refer the readers to~\citet{nisonoff2025unlocking} (\eg, Appendix section E.2) for more details. DiGress is fundamentally a method for discrete-time, discrete-space diffusion models. It does not specify how to guide either continuous-time, discrete-space diffusion models or discrete-space flow-matching models. \ourmethod~naturally encompasses these model classes, and through our equivalence can be applied to any-order autoregressive models (\aoamshorts). Both theoretically and empirically, the continuous time training objective has been demonstrated to perform better~\citep{campbell2022continuous,lou2023discrete,campbell2024generative,gat2024discrete}. In fact, one could view DiGress as a special case of our general framework by taking a specific discrete time approximation to the continuous time process. We note that this approximate imposes a uniform time grid and is therefore distinct from our construction of \textit{equivalent} discrete time processes for continuous time models. Accordingly, their formulation would not directly be applicable to an \aoamshort without additional assumptions. Orthogonally, \citet{yang2021fudge} proposed FUDGE for guiding autoregressive models, which can be viewed as a special case of our exact guidance method in the case of AR model, with \aoamshorts generalizing AR models. 

As noted in \citet{schiff2025simple}, which also presented a discrete-time analog of the continuous-time guidance method in \citet{nisonoff2025unlocking}, in discrete time, the normalizing constant in Equation~\ref{eq:transition_bayes} is still intractable, necessitating additional assumptions to achieve tractability, whereas \citet{nisonoff2025unlocking} showed TAG does not require this assumption. In addition, the continuous-time formulation of discrete diffusion models provides exactly the tractability necessary to perform \emph{exact} guidance, whereas DiGress is fundamentally approximate. \ourmethod enables exact guidance both for general noising processes~\citep{nisonoff2025unlocking} and for masking noise process. In particular, we showed that the model classes discussed---masked diffusion/flow models, masked language models (MLMs), and any-order autoregressive models (\aoamshorts)---are not only equivalent in their training objective, but also the ways they can be sampled (see Methods section ``Simplified and faster sampling for flow matching and diffusion models with masked noise processes''). Through this same equivalence, we develop a simpler and faster algorithm for sampling sequences from diffusion and flow-matching models that use masking noise processes, leading to a more computationally efficient guidance algorithm than the one originally proposed in~\citet{nisonoff2025unlocking}, which we called \textit{discrete-time exact guidance (\ournewsamplingmethod)}. The original method was already shown to at times improve performance over TAG, but we further employed \ournewsamplingmethod to our \textit{in silico} experiments to illustrate its effectiveness (as seen in the enzyme and multi-property guidance experiments, where TAG was less effective). 

There is another line of work that model discrete data by embedding the data into continuous state-spaces, either on a probabilistic simplex~\citep{avdeyev2023dirichlet, stark2024dirichlet} or on a continuous latent space~\citep{li2022diffusion, dieleman2022continuous}. Some of the approaches proposed to apply guidance to diffusion models on the continuous state-space representations of discrete state-space objects in order to utilize the guidance approaches for continuous state-space diffusion. We note that approaches which perform guidance in continuous spaces lose the discrete structure of the data during generation, which can be important for settings where the discrete structures contain information that are useful for guidance~\citep{campbell2024generative}. For more detailed discussion for other inference-time techniques, we refer the readers to a recent review article by~\citet{uehara2025inference}.

\medskip\noindent
\textbf{Training-based approaches.} 
Recent works have also explored updating model parameters to incorporate conditioning rather than modifying the sampling process. \citet{wang2025finetuning} proposed to use the Gumbel-Softmax trick to make discrete diffusion trajectories differentiable and enable backpropagation of rewards through entire sampling paths. Concurrently, \citet{rector-brooks2025steering} proposed to frame the steering problem as sampling from a Bayesian posterior where the pre-trained model serves as the prior and the reward acts as the likelihood. Both fine-tuning approaches established connections to our earlier work~\citep{nisonoff2025unlocking} while offering complementary perspectives on solving the steering problem. Direct Preference Optimization (DPO~\citep{rafailov2023direct}) and related work~\citep{widatalla2024aligning,chennakesavalu2025aligning} enable updating of pre-trained models, although up until recently, these methods have been applicable only to models with tractable likelihoods, thereby restricting them largely to autoregressive models. Further research is needed to understand and improve the quality of such approximations.

Reinforcement Learning from Human Feedback (RLHF) offers an alternative approach to model alignment through methods like Proximal Policy Optimization (PPO)~\citep{schulman2017proximal}. PPO does not technically require the retrained model to have tractable likelihoods, but it requires an explicit reward model and on-policy training, making it often more complicated to train than more direct approaches like DPO~\citep{rafailov2023direct}. In addition, the standard RLHF pipeline typically begins with fine-tuning, for which the statistical outcome is not well-defined, and for which important information from the already-trained model may be forgotten. For a more detailed review of training-based methods, specifically those anchored in reinforcement learning and diffusion models, we refer the readers to a review article by~\citet{uehara2024understanding}.

\subsubsection{Related work in protein engineering}
With the increasing interests of using generative models---both for structures~\citep{watson2023novo, ingraham2023illuminating} and for sequences~\citep{dauparas2022robust, hayes2025simulating, alamdari2023protein, madani2023large, hsu2022learning}---in protein engineering, many of the methods mentioned above have seen adaptations to protein engineering problems, some of which are summarized in a recent review article by~\citet{stocco2025guiding}. Below we categorized these applications by the types of methods they employed. 

Recent work by \citet{yang2025steeringgenerativemodelsexperimental} benchmarked \ourmethodICLR~\citep{nisonoff2025unlocking} when used with family-specific pre-trained models on a set of protein fitness optimization tasks. They found that \ourmethodICLR~outperforms several fine-tuning and other ``plug-and-play'' guidance methods (\eg, methods that perform guidance in continuous latent spaces). In addition, they found that guidance-based approaches require less hyperparameter tuning and have lower computational costs, making them more practically accessible to everyday users. However, these earlier work did not address how to perform guidance for models other than diffusion and flow matching, thereby limiting its utility to the field of protein engineering and beyond. In contrast, by unifying a broad class of discrete-space generative models under a single framework, we are able to directly apply guidance to popular pre-trained models commonly used by practitioners. Accordingly, we do not compare to many of the methods benchmarked in~\citet{yang2025steeringgenerativemodelsexperimental} again, but instead focus on demonstrating the efficacy of \ourmethod on a wider breadth of challenging design scenarios and on a real design campaign.

\medskip\noindent
\textbf{On-the-fly approaches.}
In the context of protein sequence generative models, there are prior work that embed protein sequences into continuous state-spaces and thereby directly leveraging guidance techniques developed for continuous-space diffusion models. For example, \citet{gruver2024protein} propose to perform guidance in the hidden layers of the unconditional diffusion model by jointly training the unconditional denoising model and the classifier on one hidden layer of the denoising model. In another example, \citet{lisanza2024multistate} propose to fine-tune RoseTTAFold with Gaussian diffusion on the one-hot encoded sequence space in order to perform guidance in this continuous state-space. 

Another style of approaches that have been very recently developed in the context of large language models involve directly manipulating the intermediate outputs of the model (\eg, activations)~\citep{turner2023steering}, and it has only very recently been adapted to protein language models (PLMs) to steer generations toward targeted protein properties~\citep{huang2025steering}. Concretely, these methods compute a ``steering direction'' in representation space (\eg, from differences in hidden activations associated with an attribute) and then inject this direction into selected layers during the forward pass to bias the model’s next-token (or sequence) distribution, without updating model parameters. Because this line of work is quite recent and protein properties can depend on highly nonlocal and context-dependent constraints, its robustness and generality across objectives, proteins, and model families remains an active area of research.

\medskip\noindent
\textbf{Training-based approaches.}
DPO has been applied to a few autoregressive protein generative models. For example, \citet{widatalla2024aligning} applied DPO to ESM-IF1, an autoregressive inverse folding model, to incorporate experimental stability data. In another example, \citet{stocco2024guiding} applied DPO to an autoregressive model conditionally trained on enzyme classes to guide the generation towards various enzyme properties. As for RLHF-based approach, \citet{blalock2025functional} recently applied PPO to align protein language models (\eg, ESM2) with experimental measurements. Notably, this approach does not directly sample from generative models in the conventional sense; instead, it employs a position-wise scoring strategy where models evaluate mutation probabilities at individual positions sequentially, followed by an iterative selection process that combines high-scoring mutations. 

\medskip\noindent
\textbf{Zero-shot scoring-based approaches.}
A complementary line of work uses pre-trained protein language models primarily as \emph{scoring} functions to guide iterative mutation and selection, rather than sampling from an explicit conditional generative model. For example, EVOLVEpro~\citep{jiang2024rapid} and related methods use language-model-derived scores within iterative optimization or active-learning-style loops to propose and evaluate candidate mutations.
Similarly, \citet{hie2024efficient}~demonstrate efficient evolution of human antibodies using general protein language models, emphasizing iterative selection guided by model scores rather than direct conditional sampling from a guided generative process. These approaches are highly practical for local improvement starting from strong initial sequences, but are conceptually distinct from plug-and-play conditional sampling methods that directly reweigh the generative trajectory.

\medskip\noindent
\textbf{Supervised fitness optimization.}
A large body of work in machine-learning-assisted directed evolution trains supervised predictors \(p(y\mid x)\) on experimentally measured variants and then uses these predictors to select new variants to test, often in active learning or Bayesian optimization style loops. Examples include ML-assisted directed evolution with combinatorial libraries~\citep{wu2019machine} and subsequent work emphasizing the importance of training set design and ``holey'' fitness landscapes~\citep{wittmann2021informed}, and has explored in more data-efficient regime~\citep{biswas2021low}. These methods can be viewed as attempting to invert a learned predictor to propose improved sequences, but unlike generative-model-based approaches they do not provide an explicit conditional sequence distribution and typically rely on local search or proposal mechanisms.

\newpage

\subsection{Detailed derivations and proofs}
\label{sec:si-proofs}

Below we included the detailed derivations and proofs for the statements and lemmas in the Methods section ``Statement of exact MFM sampling using AO-ARMs''. 

\subsubsection{Proofs of exact MFM sampling using AO-ARMs}

\begin{proof}[Proof of Proposition~\ref{prop:main}]
Since CTMCs are, by definition memoryless (depend only on the previous time and state), the path probability density decomposes into a product of densities for each transition conditioned on the last. This can be written as 
\begin{align}
&\Pm\left(\bm{\tau}, \bm{X}\right) \notag \\
    = &\Pm(\tau_0, x_{\tau_0})
    \times \prod_{i=1}^D \Pm(\tau_i, x_{\tau_i} | \tau_{i-1}, x_{\tau_{i-1}}) 
    \times \Pm(\tau_{D+1} > 1 | \tau_D, x_{\tau_D}). \notag \\
    \intertext{The MFM always starts at time $\tau_0=0$ with the fully masked sequence \mbox{$x_0=\{M\}^D$} so $\Pm(\tau_0, x_{\tau_0})=1$. There are also exactly $D$ transitions because we assume $0$ stochasticity and no corrector sampling. This means $\Pm(\tau_{D+1} > 1 | \tau_D, x_{\tau_D})=1$ as well. The above now simplifies to}
    = &\prod_{i=1}^D \Pm(\tau_i, x_{\tau_i} | \tau_{i-1}, x_{\tau_{i-1}}) \notag \\
    = &\prod_{i=1}^D \Pm(\tau_i | \tau_{i-1}, x_{\tau_{i-1}}) \Pm(x_{\tau_i} | \tau_i, x_{\tau_{i-1}}). \notag \\
    \intertext{We can apply Lemma~\ref{lemma:transition-time-decomposition} to remove the state dependence from the conditional distribution over jump times, $\Pm(\tau_i | \tau_{i-1}, x_{\tau_{i-1}})$. Similarly, we can apply Lemma~\ref{lemma:transitions-are-same} to remove the time dependence from the conditional distribution over jump states, $\Pm(x_{\tau_i} | \tau_i, x_{\tau_{i-1}})$. This is the key step of the proof, allowing us to separate the time- and state-dependent distributions into separate terms as follows}
    = &\prod_{i=1}^D \Pm(\tau_i | \tau_{i-1}) \Pm(x_{\tau_i} | x_{\tau_{i-1}}) \notag \\
    = &\left(\prod_{i=1}^D \Pm(\tau_i | \tau_{i-1})\right) \left(\prod_{i=1}^D \Pm(x_{\tau_i} | x_{\tau_{i-1}})\right). \notag \\
    \intertext{In Lemmas~\ref{lemma:transition-time-decomposition} and~\ref{lemma:transitions-are-same} we also calculate closed-form expressions 
    for $\Pm(\tau_i | \tau_{i-1})$ and $\Pm(x_{\tau_i} | x_{\tau_{i-1}})$, which can be substituted in and  simplified as follows
    }
    = &\prod_{i=1}^D \dot{\kappa}_{\tau_i} \times \prod_{i=1}^D p_{1|\tau_i}(X_1=x_{\tau_i} | X_{\tau_i}^{-}=x_{\tau_{i-1}}). \notag \\
    \intertext{Now, we will rewrite these products to directly match the proposition statement. First, the interpolation schedule $\kappa_t$ is the CDF of the jump time under the assumption of minimal transitions. This means the PDF of the jump time for a given position is $\Pm(\tau=t)=\frac{d}{dt}\Pm(\tau<t)=\dot{\kappa}_t$. Generally, we choose $\kappa_t=t$ so the PDF of the jump times is uniform $\Pm(t=\tau)=\dot{\kappa}_t=1=U_{[0,1]}(t=\tau)$:}
    = &\prod_{i=1}^D U[0,1](\tau_i) \times \prod_{i=1}^D p_{1|\tau_i}(X_1=x_{\tau_i} | X_{\tau_i}^{-}=x_{\tau_{i-1}}). \notag \\
    \intertext{Finally, we recognize that the \aoamshort\ constructed for the sampling algorithm uses the MFM denoiser as its conditional distribution. This means the second product above is just the conditional probability of the sample under the \aoamshort\, and the expression simplifies to }
= &\prod_{i=1}^D U[0,1](\tau_i) \times \prod_{i=1}^D p(X^{\sigma(i)}=x_{\tau_i} | X^{\sigma(<i)}=x_{\tau_{i-1}}) \notag \\
= &U^D[0,1](\bm{T}=\sigma^{-1}(\bm{\tau})) \times \Po(X=X_{\tau_D} | \sigma) \notag
\end{align}
as claimed.
\end{proof}

\begin{proof}[Proof of Corollary~\ref{corollary:sample-permutation}]
This follows directly from Proposition~\ref{prop:main}. Originally, we found
\begin{align*}
\Pm(\tau, \bm{X}) &= \Po(X_{\tau_D} | \sigma) U^D[0, 1](\bm{T}=\sigma^{-1}(\bm{\tau})) \\
\intertext{To be able to sample both the times and permutations independently, upfront, our sampler must find the permutation $\pi$ that maps the sampled transition times $\bm{T}$ to sampled positions. Then applying $\sigma$ to $\pi(T)$ sorts the times to be $\bm{\tau}$. In practice, we would just assign the sampled times to positions in the order specified by $\sigma$. For the purposes of this proof, however, $\pi$ lets us rewrite this expression as:}
&= U[S_D](\sigma) \Po(X_{\tau_D} | \sigma) U^D[0, 1](\bm{T}=\pi^{-1}(\sigma^{-1}(\bm{\tau})))  \\
\intertext{Finally, we apply the definition of OA-AR sampling}
&= \Po(\bm{X}) U^D[0, 1](\bm{T})
\end{align*}
\end{proof}

\subsubsection{Statements and proofs of lemmas}
\label{sec:lemmas}

\begin{lemma}\label{lemma:transition-time-decomposition}
The conditional probability of the next jump time for an MFM depends only on the previous jump time and can be expressed in closed form as:
\begin{align}
\mathbb{P}_{\mathrm{MFM}}\left(\tau_i | \tau_{i-1}, x_{\tau_{i-1}}\right) 
&= \mathbb{P}_{\mathrm{MFM}}\left(\tau_i | \tau_{i-1} \right) \notag \\ 
&=(D-(i-1)) \frac{\dot{\kappa}_{\tau_{i}}}{1-\kappa_{\tau_{i-1}}} \left(\frac{1-\kappa_{\tau_{i}}}{1-\kappa_{\tau_{i-1}}} \right)^{D-i}. \label{eq:jump-time}   
\end{align}
\end{lemma}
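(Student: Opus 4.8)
The plan is to obtain the closed form directly from the CTMC holding-time formula (Equation~\eqref{eq:holding-time}) by computing the total exit rate of the current state and then differentiating the resulting survival probability. Before that, I first need to justify the state-independence claim $\Pm(\tau_i \mid \tau_{i-1}, x_{\tau_{i-1}}) = \Pm(\tau_i \mid \tau_{i-1})$. This rests on two structural facts already available in the excerpt. By the masked rate parameterization, only currently-masked positions carry nonzero rate: the factor $\delta\{x_t^i, ?\}$ in Equation~\eqref{eq:dfm_rates_mask} (equivalently, consistency of the denoiser $p_{1\mid t}$ with the revealed tokens in Equation~\eqref{eq:rates-param}) forces the exit rate of any unmasked position to vanish, so unmasked positions are frozen. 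Combined with the minimal-transitions property, which unmasks exactly one position per jump, this means that immediately after the $(i-1)$-th jump there are precisely $D-(i-1)$ masked positions, independent of which positions or values are involved. Since the holding-time law depends on the state only through this count, and the count is pinned by the index $i$, the dependence on $x_{\tau_{i-1}}$ drops out.

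Next I would compute the total exit rate. For a single masked position $j$, summing the rate in Equation~\eqref{eq:rates-param} over all target tokens $\tilde x \neq {?}$ and using that $p_{1\mid t}$ is supported on non-mask tokens (so $\sum_{\tilde x} p_{1\mid t}(X_1^{j}=\tilde x \mid X_t) = 1$), the per-position exit rate collapses to $\dot\kappa_t/(1-\kappa_t)$. Summing over the $D-(i-1)$ masked positions gives total exit rate $(D-(i-1))\,\dot\kappa_t/(1-\kappa_t)$, i.e. the diagonal entry $R_s(X_s^-, X_s^-) = -(D-(i-1))\,\dot\kappa_s/(1-\kappa_s)$.

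I would then substitute this into Equation~\eqref{eq:holding-time}. Evaluating $\int_{\tau_{i-1}}^{t} \dot\kappa_s/(1-\kappa_s)\,\mathrm{d}s = \ln\!\big((1-\kappa_{\tau_{i-1}})/(1-\kappa_t)\big)$, the probability of remaining in the current state up to time $t$ is the survival function $\big((1-\kappa_t)/(1-\kappa_{\tau_{i-1}})\big)^{D-(i-1)}$. Differentiating the negative of this survival function in $t$ by the chain rule, with $n := D-(i-1)$ and $n-1 = D-i$, produces exactly $n\,\dot\kappa_{\tau_i}/(1-\kappa_{\tau_{i-1}})\,\big((1-\kappa_{\tau_i})/(1-\kappa_{\tau_{i-1}})\big)^{D-i}$, which is Equation~\eqref{eq:jump-time}. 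An equivalent and perhaps more transparent derivation uses the background interpretation of $\kappa_t$ as the CDF of a single position's unmasking time: the $D-(i-1)$ still-masked positions have i.i.d. unmasking times each conditioned to exceed $\tau_{i-1}$, with conditional survival $(1-\kappa_t)/(1-\kappa_{\tau_{i-1}})$, and $\tau_i$ is their minimum; the order-statistic density of that minimum reproduces the same expression.

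The main obstacle is the bookkeeping in the first paragraph rather than any hard computation: I must argue cleanly that the exit rate depends on the state only through the masked-position count, which requires combining the vanishing of rates on unmasked positions with the minimal-transitions assumption so that the count equals $D-(i-1)$ at step $i$. Once that reduction is established, the integral and the final differentiation are routine.
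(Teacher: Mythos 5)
Your proposal is correct and follows essentially the same route as the paper's proof: both compute the exit rate of the current state from the masked rate parameterization (unmasked positions frozen, each masked position contributing $\dot\kappa_t/(1-\kappa_t)$ since the denoiser places no mass on the mask token), substitute into the CTMC survival formula, integrate to get $\bigl((1-\kappa_{\tau_i})/(1-\kappa_{\tau_{i-1}})\bigr)^{D-(i-1)}$, and differentiate. The only cosmetic difference is that you sum the per-position rates into a single diagonal entry before exponentiating, while the paper factorizes the exponential per position and simplifies each factor; your closing order-statistics remark also matches the paper's final line, which rewrites the density as the minimum of the remaining conditioned jump times.
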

This is the probability that any \textit{one} of the remaining $D-(i-1)$ masked positions transitions at time $\tau_i$,\footnote{$\frac{\dot{\kappa}_{\tau_{i}}}{1-\kappa_{\tau_{i-1}}}=\Pm(\tau_i|\tau_i>t)$} given that none of them transitioned between $\tau_{i-1}$ and $\tau_i$ and all other positions have already been unmasked.

The core insight from the calculation below is that, in a masking process, the time to the next jump only depends on state through the total number of masked tokens: the rate of leaving the mask state is the same across positions, and once unmasked, the state is fixed.\footnote{Recall, there are exactly $D$ jumps because we are using the zero-stochasticity rates from \cite{gat2024discrete, campbell2024generative}.} As a result, once one specifies that we're sampling the \textit{$i$-th} jump time, \textit{we know exactly how many masked tokens are left} and there is no further dependence on the CTMC state.
\begin{proof}
\begin{align*}
\intertext{We start by rewriting the PDF of the holding time as the derivative of its CDF, }
  &\mathbb{P}_{\mathrm{MFM}}\left(\tau_i | \tau_{i-1}, x_{\tau_{i-1}}\right) \\
  = &\frac{\partial}{\partial \tau_i}\mathbb{P}_{\mathrm{MFM}}\left(T_i < \tau_i | \tau_{i-1}, x_{\tau_{i-1}} \right) \\
  \intertext{which, for CTMCs has the standard form}
  = &\frac{\partial}{\partial \tau_i} \left( 1 - \exp\left(\int_{\tau_{i-1}}^{\tau_i}R_s(x_{\tau_{i-1}}, x_{\tau_{i-1}})ds\right) \right).\\
  \intertext{The rates for two positions transitioning at once are zero \cite{campbell2024generative} so we can factor this expression for the sequence not changing into a product of the probabilities for each position not changing.} 
    = &\frac{\partial}{\partial \tau_i} \left( 1 - \prod_{i=1}^D \exp\left(\int_{\tau_{i-1}}^{\tau_i}R^{\sigma(i)}_s(x_{\tau_{i-1}}^{\sigma(i)}, x_{\tau_{i-1}}^{\sigma(i)})ds\right) \right).\\
  \intertext{We can also substitute in the explicit form of the rates (Eq.~\eqref{eq:rates-param}), taking care that these are the rates for staying at $x_{\tau_{i-1}}$}
  = &\frac{\partial}{\partial \tau_i} \left( 1 - 
      \prod_{i=1}^{D}
        \exp\left(
          \int_{\tau_{i-1}}^{\tau_i}
            \frac{\dot{\kappa}_s}{1-\kappa_s}\left(p^\theta_{1|s}(X_1^{\sigma(i)}=x_{\tau_{i-1}}^{\sigma(i)} | X_{s}^{-}=x_{\tau_{i-1}})-\delta_{x_{s}^{\sigma(i)}}(x_{\tau_{i-1}}^{\sigma(i)})\right)
          ds
        \right)
    \right)\\
  = &\frac{\partial}{\partial \tau_i} \left( 1 - 
      \prod_{i=1}^{D}
        \exp\left(
          \int_{\tau_{i-1}}^{\tau_i}
            -\frac{\dot{\kappa}_s}{1-\kappa_s}\left(1 - p^\theta_{1|s}(X_1^{\sigma(i)}=x_{\tau_{i-1}}^{\sigma(i)} | X_{s}^{-}=x_{\tau_{i-1}})\right)
          ds
        \right)
    \right)\\
  \intertext{Because these MFM only make the minimal number of transitions, if the current state at position $i$ is unmasked, we know it will not change. Therefore, $p^\theta_{1|t}(\cdot | X_t \neq M)=\delta_{X_t}(\cdot)$ and we can drop the terms corresponding to the unmasked positions to get}
   = &\frac{\partial}{\partial \tau_i} \left( 1 - 
      \prod_{j \in \sigma(\geq i)}
        \exp\left(
          \int_{\tau_{i-1}}^{\tau_i}
            -\frac{\dot{\kappa}_s}{1-\kappa_s}\left(1 - p^\theta_{1|s}(X_1^{j}=x_{\tau_{i-1}}^{j} | X_{s}^{-}=x_{\tau_{i-1}})\right)
          ds
        \right)
    \right).\\
\intertext{We can now drop the state dependence entirely by noting that $p^\theta_{1|t}(M | \cdot )=0$ because masked tokens are never in the denoised sequence $X_1$:}
  = &\frac{\partial}{\partial \tau_i} \left( 1 - 
      \prod_{j \in \sigma(\geq i)}
        \exp\left(
          \int_{\tau_{i-1}}^{\tau_i}
            -\frac{\dot{\kappa}_s}{1-\kappa_s}
          ds
        \right)
    \right).\\
\intertext{Simplifying, we find}
  = &\frac{\partial}{\partial \tau_i} \left( 1 - 
      \prod_{j \in \sigma(\geq i)}
        \exp\left(
          \int_{\tau_{i-1}}^{\tau_i}
            \frac{d}{ds}\ln(1-\kappa_s)
          ds
        \right)
    \right)\\
  = &\frac{\partial}{\partial \tau_i} \left( 1 - 
      \prod_{j \in \sigma(\geq i)}
        \frac{1-\kappa_{\tau_{i}}}{1-\kappa_{\tau_{i-1}}}
    \right)\\
  = &-\frac{\partial}{\partial \tau_i} 
        \left(\frac{1-\kappa_{\tau_{i}}}{1-\kappa_{\tau_{i-1}}} \right)^{D-(i-1)}\\
  = &(D-(i-1)) \frac{\dot{\kappa}_{\tau_{i}}}{1-\kappa_{\tau_{i-1}}} \left(\frac{1-\kappa_{\tau_{i}}}{1-\kappa_{\tau_{i-1}}} \right)^{D-i}  \\
  = &(D-(i-1)) P(\tau_i | \tau_i > \tau_{i-1})  \prod_{j \in \sigma(> i)} P(\tau_j > \tau_i | \tau_i > \tau_{i-1})
\end{align*}
as claimed.
\begin{align*}
\intertext{We start by rewriting the PDF of the holding time as the derivative of its CDF, }
  &\mathbb{P}_{\mathrm{MFM}}\left(\tau_i | \tau_{i-1}, x_{\tau_{i-1}}\right) \\
  = &\frac{\partial}{\partial \tau_i}\mathbb{P}_{\mathrm{MFM}}\left(T_i < \tau_i | \tau_{i-1}, x_{\tau_{i-1}} \right) \\
  \intertext{which, for CTMCs has the standard form}
  = &\frac{\partial}{\partial \tau_i} \left( 1 - \exp\left(\int_{\tau_{i-1}}^{\tau_i}R_s(x_{\tau_{i-1}}, x_{\tau_{i-1}})ds\right) \right).\\
  \intertext{The rates for two positions transitioning at once are zero \cite{campbell2024generative} so we can factor this expression for the sequence not changing into a product of the probabilities for each position not changing.} 
    = &\frac{\partial}{\partial \tau_i} \left( 1 - \prod_{i=1}^D \exp\left(\int_{\tau_{i-1}}^{\tau_i}R^{\sigma(i)}_s(x_{\tau_{i-1}}^{\sigma(i)}, x_{\tau_{i-1}}^{\sigma(i)})ds\right) \right).\\
  \intertext{We can also substitute in the explicit form of the rates (Eq.~\eqref{eq:rates-param}), taking care that these are the rates for staying at $x_{\tau_{i-1}}$}
  = &\frac{\partial}{\partial \tau_i} \left( 1 - 
      \prod_{i=1}^{D}
        \exp\left(
          \int_{\tau_{i-1}}^{\tau_i}
            \frac{\dot{\kappa}_s}{1-\kappa_s}\left(p^\theta_{1|s}(X_1^{\sigma(i)}=x_{\tau_{i-1}}^{\sigma(i)} | X_{s}^{-}=x_{\tau_{i-1}})-\delta_{x_{s}^{\sigma(i)}}(x_{\tau_{i-1}}^{\sigma(i)})\right)
          ds
        \right)
    \right)\\
  = &\frac{\partial}{\partial \tau_i} \left( 1 - 
      \prod_{i=1}^{D}
        \exp\left(
          \int_{\tau_{i-1}}^{\tau_i}
            -\frac{\dot{\kappa}_s}{1-\kappa_s}\left(1 - p^\theta_{1|s}(X_1^{\sigma(i)}=x_{\tau_{i-1}}^{\sigma(i)} | X_{s}^{-}=x_{\tau_{i-1}})\right)
          ds
        \right)
    \right)\\
    = &\frac{\partial}{\partial \tau_i} \left( 1 - 
      \prod_{i=1}^D
        \exp\left(
          \int_{\tau_{i-1}}^{\tau_i}
            -\frac{d}{ds}\ln(1-\kappa_s)\left(1 - p^\theta_{1|s}(X_1^{\sigma(i)}=x_{\tau_{i-1}}^{\sigma(i)} | X_{s}^{-}=x_{\tau_{i-1}})\right)
          ds
        \right)
    \right).\\
\intertext{We now note that, $p^\theta_{1|s}$ has no real dependence on time given the current sequence. It is actually a constant, fixed at the value it takes as $s=\tau_{i-1}$.}
  = &\frac{\partial}{\partial \tau_i} \left( 1 - 
      \prod_{i=1}^D
        \exp\left(
            \left(1 - p^\theta_{1|\tau_{i-1}}(X_1^{\sigma(i)}=x_{\tau_{i-1}}^{\sigma(i)} | X_{\tau_{i-1}}^{-}=x_{\tau_{i-1}})\right)
            \int_{\tau_{i-1}}^{\tau_i}
                -\frac{d}{ds}\ln(1-\kappa_s)
          ds
        \right)
    \right).\\
\intertext{The remaining integral also simplifies nicely:}
  = &\frac{\partial}{\partial \tau_i} \left( 1 - 
      \prod_{i=1}^D
        \exp\left(
            \left(1 - p^\theta_{1|\tau_{i-1}}(X_1^{\sigma(i)}=x_{\tau_{i-1}}^{\sigma(i)} | X_{\tau_{i-1}}^{-}=x_{\tau_{i-1}})\right)
            \ln\left(\frac{1-\kappa_{\tau_{i}}}{1-\kappa_{\tau_{i-1}}}\right)
          ds
        \right)
    \right).\\
  = &\frac{\partial}{\partial \tau_i} \left( 1 - 
      \prod_{i=1}^D
        \left(
            \frac{1-\kappa_{\tau_{i}}}{1-\kappa_{\tau_{i-1}}}
        \right)^{p^\theta_{1|\tau_{i-1}}\left(X_1^{\sigma(i)} \neq x_{\tau_{i-1}}^{\sigma(i)} | X_{\tau_{i-1}}^{-}=x_{\tau_{i-1}}\right)}
    \right)\\
  = &
    \sum_{j=1}^D
        {p^\theta_{1|\tau_{i-1}}\left(X_1^{\sigma(j)} \neq x_{\tau_{i-1}}^{\sigma(j)} | X_{\tau_{i-1}}^{-}=x_{\tau_{i-1}}\right)}
    \frac{\dot{\kappa}_{\tau_{i}}}{1-\kappa_{\tau_{i-1}}}
    \left(
        \frac{1-\kappa_{\tau_{i}}}{1-\kappa_{\tau_{i-1}}}
    \right)^{
        \sum_{k=1}^D
        {p^\theta_{1|\tau_{i-1}}\left(X_1^{\sigma(k)} \neq x_{\tau_{i-1}}^{\sigma(k)} | X_{\tau_{i-1}}^{-}=x_{\tau_{i-1}}\right)} - 1
    }\\
\intertext{Intuitively, this is a sum of the probabilities for each position $j$, that we observe all the positions stay constant until $j$ is the next to transition at time $\tau_i$. For the special case where we are using a masking process, we never transition to the masked state so we have $p^\theta_{1|\tau_{i-1}}(X_1^{\sigma(i)} \neq \cdot|X_{\tau_{i-1}}^{-}=M)=\delta_{M}(\cdot)$ and \hbox{$p^\theta_{1|\tau_{i-1}}(X_1^{\sigma(i)} \neq M |X_{\tau_{i-1}}^{-}=\cdot)=1$}. These observations let us only evaluate the sum over masked positions and set the exponent to the number number of masked positions minus one:}
  = &(D-(i-1)) \frac{\dot{\kappa}_{\tau_{i}}}{1-\kappa_{\tau_{i-1}}} \left(\frac{1-\kappa_{\tau_{i}}}{1-\kappa_{\tau_{i-1}}} \right)^{D-i}  \\
  = &(D-(i-1)) P(\tau_i | \tau_i > \tau_{i-1})  \prod_{j \in \sigma(> i)} P(\tau_j > \tau_i | \tau_i > \tau_{i-1})
\end{align*}
as claimed.
\end{proof}

\begin{lemma}\label{lemma:transitions-are-same}
The distribution over state transitions is independent of the jump time, and can be expressed in closed-form as follows:
\begin{align}
\Pm(x_{\tau_i} | \tau_i, X_{\tau_{i}}^{-}=x_{\tau_{i-1}}) 
&= \Pm(x_{\tau_i} | \tau_i, X_{\tau_{i}}^{-}=x_{\tau_{i-1}}) \notag \\
&= \frac{p_{1|\tau_i}^\theta(X_1^{\sigma(i)}=x_{\tau_i}^{\sigma(i)} | X_{\tau_i}^{-}=x_{\tau_{i-1}})}{D - (i-1)} \label{eq:state-transitions}
\end{align}
\end{lemma}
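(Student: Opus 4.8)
The plan is to recognize that $\Pm(x_{\tau_i}\mid\tau_i, X_{\tau_i}^- = x_{\tau_{i-1}})$ is exactly the jump distribution of the \emph{embedded chain} of the CTMC: conditioned on a transition occurring at time $\tau_i$ out of state $x_{\tau_{i-1}}$, the probability of landing in a given state equals the outgoing rate to that state divided by the total outgoing rate at time $\tau_i$. This is a standard fact for (time-inhomogeneous) CTMCs, so the whole computation reduces to (i) writing the numerator rate, (ii) computing the normalizing total rate, and (iii) simplifying and arguing time-independence.

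For the numerator, I would use the factorization and single-position assumptions to observe that any reachable $x_{\tau_i}$ differs from $x_{\tau_{i-1}}$ in exactly one position, which I label $\sigma(i)$; this position is masked in $x_{\tau_{i-1}}$ and takes a non-mask value in $x_{\tau_i}$. Substituting the rate formula from Eq.~\eqref{eq:rates-param}, the Kronecker-delta term vanishes (the unmasked target differs from the masked left-limit value $M$), leaving the numerator $\frac{\dot\kappa_{\tau_i}}{1-\kappa_{\tau_i}}\,p_{1|\tau_i}^\theta(X_1^{\sigma(i)} = x_{\tau_i}^{\sigma(i)}\mid X_{\tau_i}^- = x_{\tau_{i-1}})$.

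For the denominator, I would sum the outgoing rate over all positions and all target values, and here the masking structure does the work. Unmasked positions contribute nothing, because the minimal-transition property forces $p_{1|t}^\theta(\cdot\mid X_t^j \neq M) = \delta_{X_t^j}(\cdot)$ and hence their off-diagonal rates are zero; and for each of the $D-(i-1)$ remaining masked positions, summing the denoiser over all non-mask targets gives $1$, since $p_{1|t}^\theta(M\mid\cdot)=0$ puts all denoised mass on non-mask tokens. So each masked position contributes exactly $\frac{\dot\kappa_{\tau_i}}{1-\kappa_{\tau_i}}$ and the total outgoing rate is $(D-(i-1))\frac{\dot\kappa_{\tau_i}}{1-\kappa_{\tau_i}}$. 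Dividing the numerator by this, the common prefactor $\frac{\dot\kappa_{\tau_i}}{1-\kappa_{\tau_i}}$ cancels and yields the claimed $\frac{p_{1|\tau_i}^\theta(X_1^{\sigma(i)} = x_{\tau_i}^{\sigma(i)}\mid X_{\tau_i}^- = x_{\tau_{i-1}})}{D-(i-1)}$.

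The final and conceptually central step is to argue independence from $\tau_i$: the cancellation already removes the explicit $\frac{\dot\kappa_{\tau_i}}{1-\kappa_{\tau_i}}$ factor, and the residual apparent dependence in $p_{1|\tau_i}^\theta$ is eliminated by invoking the earlier result that $p_{1|t}^\theta(X_1\mid X_t)$ is independent of $t$ given the partially masked sequence (the same time-independence proved in the MFM/MLM training-equivalence argument). I expect the main obstacle to be stating the normalizing-rate computation with full care—precisely that only masked positions contribute, that each per-position denoiser sums to one over non-mask values, and that the cancellation hinges on $\kappa_t$ being state-independent so the prefactor is genuinely common to numerator and denominator. A quick consistency check that the result sums to one over the $D-(i-1)$ possible single-position unmaskings confirms it is a valid probability distribution.
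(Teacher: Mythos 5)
Your proof is correct and takes essentially the same approach as the paper's: both compute the embedded-chain jump distribution by normalizing the rates of Eq.~\eqref{eq:rates-param} at the jump time, using that unmasked positions have zero outgoing rate and that $p^\theta_{1|t}$ assigns no mass to the mask token (so each of the $D-(i-1)$ masked positions contributes total outgoing rate $\dot{\kappa}_{\tau_i}/(1-\kappa_{\tau_i})$), and then cancelling this common prefactor. The paper merely organizes the identical computation in two stages---first the value distribution conditioned on position $\sigma(i)$ being the one that jumps, then the factor $1/(D-(i-1))$ for which position jumps---whereas you normalize over all positions and values in a single step, and your explicit appeal to the time-independence of the masking denoiser is a slightly more complete justification of the ``independent of jump time'' claim.
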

Note that the $t$ in $p^\theta_{1|t}$ is suggestive but is often not an explicit input to the model. See \citet{ou2025your,shi2024simplified,sahoo2024simple} for further discussion of the analogous results and implications of the lemma for MDMs. In particular, \citet{sahoo2024simple} provide some empirical results showing that inputting time to the model is unnecessary for MDMs.
\begin{proof}
There is a vanishingly small chance that two positions jump at the same time \cite{campbell2024generative}. Accordingly, we first examine only the conditional distribution for the position that changes. We can write it as
\begin{align*}
&\Pm\left(X_{\tau_i}^{\sigma(i)}=x_{\tau_i}^{\sigma(i)} | \tau_i, X_{\tau_i}^- = x_{\tau_{i-1}} \right) \\
= &\Pm\left(X_{\tau_i}^{\sigma(i)}=x_{\tau_i}^{\sigma(i)} | X_{\tau_i}^{\sigma(i)} \neq M, {X_{\tau_i}^{\sigma(i)}}^-= M, X_{\tau_i}^- = x_{\tau_{i-1}} \right) \\
= &\frac
{\Pm\left(X_{\tau_i}^{\sigma(i)}=x_{\tau_i}^{\sigma(i)} | X_{\tau_i}^- = x_{\tau_{i-1}} \right)}
{\Pm\left(X_{\tau_i}^{\sigma(i)} \neq M | X_{\tau_i}^- = x_{\tau_{i-1}} \right)} \\
= &\frac
{R_t^{\sigma(i)}\left(M, x_{\tau_i}^{\sigma(i)} | X_{\tau_i}^{-}=x_{\tau_{i-1}}\right) dt}
{\sum_{\tilde{x} \neq M} R_t^{\sigma(i)}\left(M, \tilde{x} | X_{\tau_i}^{-}=x_{\tau_{i-1}}\right) dt} \\
= &\frac
{R_t^{\sigma(i)}\left(M, x_{\tau_i}^{\sigma(i)} | X_{\tau_i}^{-}=x_{\tau_{i-1}}\right)}
{-R_t^{\sigma(i)}\left(M, M | X_{\tau_i}^{-}=x_{\tau_{i-1}}\right)} \\
= &\frac
{\frac{\dot{\kappa}_{\tau_i}}{1-\kappa_{\tau_i}} \left( p_{1|\tau_i}^\theta(X_1^{\sigma(i)}=x_{\tau_i}^{\sigma(i)} | X_{\tau_i}^{-}=x_{\tau_{i-1}}) - \delta_{x_{\tau_i}^{\sigma(i)}, M}\right)}
{-\frac{\dot{\kappa}_{\tau_i}}{1-\kappa_{\tau_i}} \left(p_{1|\tau_i}^\theta(X_1^{\sigma(i)}=M | X_{\tau_i}^{-}=x_{\tau_{i-1}}) - \delta_{M, M} \right)}. \\
\intertext{Notice that here we lose all time dependency,}
= &\frac
{p_{1|\tau_i}^\theta(X_1^{\sigma(i)}=x_{\tau_i}^{\sigma(i)} | X_{\tau_i}^{-}=x_{\tau_{i-1}})}
{\delta_{M, M}} \\
= &p_{1|\tau_i}^\theta(X_1^{\sigma(i)}=x_{\tau_i}^{\sigma(i)} | X_{\tau_i}^{-}=x_{\tau_{i-1}}).
\end{align*}

However, we still need to account for the fact that, out of all unmasked positions, $\sigma(i)$ was the one that changed.
\begin{align*}
&\Pm(x_{\tau_i} | \tau_i, X_{\tau_{i}}^{-}=x_{\tau_{i-1}}) \\
&= p_{1|\tau_i}^\theta(X_1^{\sigma(i)}=x_{\tau_i}^{\sigma(i)} | X_{\tau_i}^{-}=x_{\tau_{i-1}}) \frac{p_{1|\tau_i}^\theta(X_1^{\sigma(i)} \neq M | X_{\tau_i}^{-}=x_{\tau_{i-1}})}{\sum_{j \in \sigma(\geq i)} p_{1|\tau_i}^\theta(X_1^{\sigma(i)} \neq M | X_{\tau_i}^{-}=x_{\tau_{i-1}})} \\
&= \frac{p_{1|\tau_i}^\theta(X_1^{\sigma(i)}=x_{\tau_i}^{\sigma(i)} | X_{\tau_i}^{-}=x_{\tau_{i-1}})}{D-(i-1)}
\end{align*}
as claimed.
\end{proof}

Although we present a proof for the MFM case, the same calculation can be used to see that the general result for the common class of MFMs described in \cite{campbell2024generative, gat2024discrete} is:
\begin{align*}
&\Pm(x_{\tau_i}^{\sigma(i)} | \tau_i, X_{\tau_{i}}^{-}=x_{\tau_{i-1}}) \\
= &p^\theta_{1|\tau_i}(X_1^{\sigma(i)}=x_{\tau_i}^{\sigma(i)} | X_{\tau_i}^{-}=x_{\tau_{i-1}})
\frac{1 - p^\theta_{1|\tau_i}(X_1^{\sigma(i)}=x_{\tau_{i-1}}^{\sigma(i)} | X_{\tau_i}^{-}=x_{\tau_{i-1}})}{D - \sum_{j=1}^{D} p^\theta_{1|\tau_i}(X_1^{\sigma(j)}=x_{\tau_{i-1}}^{\sigma(j)} | X_{\tau_i}^{-}=x_{\tau_{i-1}})}\\
= &p^\theta_{1|\tau_i}(X_1^{\sigma(i)}=x_{\tau_i}^{\sigma(i)} | X_{\tau_i}^{-}=x_{\tau_{i-1}})
\frac{p^\theta_{1|\tau_i}(X_1^{\sigma(i)}\neq x_{\tau_{i-1}}^{\sigma(i)} | X_{\tau_i}^{-}=x_{\tau_{i-1}})}{\sum_{j=1}^{D} p^\theta_{1|\tau_i}(X_1^{\sigma(j)}\neq x_{\tau_{i-1}}^{\sigma(j)} | X_{\tau_i}^{-}=x_{\tau_{i-1}})}\\
\end{align*}

\newpage



\section{Supplementary Tables}


\newcolumntype{L}[1]{>{\raggedright\arraybackslash}p{#1}}

\begin{figure}[h!]
  \centering
  \captionof{table}{\textbf{Summary of design tasks in this work (\textit{in silico} and \textit{in vivo}).} The arrows in the ``Property'' column indicate the desired direction we wish to guide the generation towards. For folding stability, we desire enhanced stability, which is represented by lower $\Delta \Delta G$. The ``Multi-family'' column indicates whether multiple protein families are included in the dataset considered. Parentheses in the column ``\# variants (predictor)'' indicate the number of labeled variants used to train the guidance predictor when it differs from the full dataset size.``Mutational distance in dataset'' represents the range of hamming distances of the variants in a dataset relative to the wild-type sequence, when applicable. The ``Pre-trained model(s)'' column indicate the pre-trained model(s) used with \ourmethod.} 
  \label{tab:design_tasks}
  \footnotesize
  \setlength{\tabcolsep}{6pt} 
  \renewcommand{\arraystretch}{1.2}

  \begin{tabular}{@{}L{2.5cm} L{2.5cm} L{1.5cm} c L{1.8cm} L{1.8cm} L{1.5cm}@{}}
    \toprule
    \textbf{Task} &
    \textbf{Property} &
    \textbf{Design Setting} &
    \textbf{Multi-family?} &
    \textbf{\# variants (predictor)} &
    \textbf{Mutational distance in dataset} &
    \textbf{Pre-trained model(s)} \\
    \midrule

    Stability-guided inverse folding &
    Folding stability $\uparrow$~\citep{tsuboyama2023mega} &
    Interpolative &
    Yes &
    $\sim$669k total, $\sim$1-7k per protein &
    $\le$2 (per protein) &
    Protein MPNN~\citep{dauparas2022robust} \\

    \addlinespace
    CreiLOV assay-informed design &
    Fluorescence intensity $\uparrow$~\citep{chen2023deep} &
    Extrapolative &
    No &
    $\sim$167k (2k) &
    $\le$15 &
    ESM3~\citep{hayes2025simulating} \\

    UBE4B assay-informed design &
    Ubiquitination activity $\uparrow$~\citep{starita2013activity} &
    Extrapolative &
    No &
    $\sim$32k (2k) &
    $\le$6 &
    ESM3 \\

    PABP/Pab1 assay-informed design &
    Functional complementation $\uparrow$~\citep{melamed2013deep} &
    Extrapolative &
    No &
    $\sim$38k (2k) &
    $\le$2 &
    ESM3 \\

    \addlinespace
    PbrR Pareto extrapolative design &
    Pb binding $\uparrow$; Zn binding $\downarrow$~\citep{wang2025active} &
    Multi-property Pareto extrapolative &
    No &
    1060 &
    $\le$5 &
    ESM C~\citep{esm_cambrian_2024} \\

    \addlinespace
    Enzyme-class guidance &
    EC label (class)~\citep{uniprot2021uniprot} &
    Interpolative &
    Yes &
    $\sim$316k sequences, $\sim 10^2$ to $\sim 10^3$ sequences for common ECs, $\sim 10^5$ non-enzymes &
    N/A &
    ESM3 \\

    Fold-class guidance &
    CATH label (class)~\citep{dawson2017cath} &
    Interpolative &
    Yes &
    $\sim$200k structures total, $\sim10^2$ to $\sim10^4$ per class &
    N/A &
    ESM3 (backbone structure) \\

    \addlinespace
    \textit{In vivo}: ABE design &
    Base editing activity $\uparrow$ &
    Extrapolative &
    No &
    $\sim$2k &
    $\le$40&
    FMIF~\citep{nisonoff2025unlocking}, ESM3 \\

    \bottomrule
  \end{tabular}
\end{figure}

\clearpage


\begin{table}[h!]
\centering
\caption{\textbf{Multi-property, Pareto-extrapolative experiment: generation performance after post-hoc filtering with predictor.}}
\small
\begin{tabular}{lccc}
\toprule
\textbf{Method}
& \textbf{N Sequences}
& \textbf{Success Rate (Top-100)}
& \textbf{Diversity} \\
\midrule
Unguided
& 1000
& 16
& 33 \\
FT
& 1000
& 6
& 3 \\
Guidance
& 100
& 75
& 36 \\
\bottomrule
\end{tabular}
\label{tab:pbrr_filtered_results}
\end{table}

\begin{table}[h!]
\centering
\caption{\textbf{Multi-property, Pareto-extrapolative experiment: generation performance under equal wall-clock time without filtering.}}
\small
\begin{tabular}{lcccccc}  
\toprule
& & & \multicolumn{2}{c}{\textbf{Pb log-FC}} & \multicolumn{2}{c}{\textbf{Zn log-FC}} \\
\cmidrule(lr){4-5} \cmidrule(lr){6-7}
\textbf{Method}
& \textbf{Success Total}
& \textbf{Success / Sec}
& \textbf{Mean}
& \textbf{95th pct.}
& \textbf{Mean}
& \textbf{95th pct.} \\
\midrule
Unguided
& 19
& 0.1
& 1.9
& 4.9
& 4.9
& -2.9 \\
FT
& 3
& 0.0
& 0.8
& 2.4
& -0.4
& -2.2 \\
Guidance
& 75
& 0.4
& 2.8
& 4.5
& -0.7
& -2.8 \\
\bottomrule
\end{tabular}
\label{tab:pbrr_wallclock_matched}
\end{table}

\begin{figure}[h!]
    \centering
    \captionof{table}{\textbf{EC classes with enzyme names and associated keyword guidance.}}
    \resizebox{\textwidth}{!}{
    \begin{tabular}{@{}l l l@{}}
      \toprule
      \textbf{EC Number} & \textbf{Enzyme Name} & \textbf{Keywords} \\
      \midrule
      EC:2.3.2.27 & RING-type E3 ubiquitin transferase &
      ring type, e3 ubiquitin, ubiquitin transferase \\
      
      EC:2.7.11.1 & Non-specific serine threonine protein kinase &
      non specific, serine/threonine kinase \\
      
      EC:2.7.7.6 & DNA-directed RNA polymerase &
      dna directed, rna polymerase \\
      
      EC:3.1.26.4 & Ribonuclease H &
      ribonuclease H \\
      
      EC:3.6.4.12 & DNA helicase &
      dna helicase \\
      
      EC:3.6.4.13 & RNA helicase &
      rna helicase \\
      
      EC:4.2.1.33 & 3-isopropylmalate dehydratase &
      3-isopropylmalate, isopropylmalate dehydratase \\
      
      EC:5.2.1.8 & Peptidylprolyl isomerase &
      peptidyl prolyl, isomerase \\
      
      EC:7.1.1.2 & NADH:ubiquinone reductase (H$^+$-translocating) &
      NADH ubiquinone, ubiquinone reductase, proton, translocating \\
      
      EC:7.1.2.2 & H$^+$-transporting two-sector ATPase &
      proton, transporting, two sector, ATPase \\
      \bottomrule
    \end{tabular}
    }
    \label{tab:enzyme_keywords}
\end{figure}

\begin{figure}[h!]
    \centering
    \captionof{table}{\textbf{Enzyme-class guidance results for different EC sets at 80\% masking level.} Number of samples (N) per set are indicated. ``Mask Level \%" is the percent of positions masked, 100 \% is full masked. ``CLEAN \%'' is the percentage of samples that were correctly classified as the intended target class. ``pLDDT $> 0.8$'' is the proportion of samples correctly reclassified by CLEAN that have a median pLDDT across residues above 0.8.}
    \resizebox{\textwidth}{!}{
    \begin{tabular}{@{}l l r r r r r@{}}
      \toprule
      \textbf{Method} & \textbf{EC Set} & \textbf{CLEAN \% ($\uparrow$)} & \textbf{Mean pLDDT ($\uparrow$)} & \textbf{pLDDT $> 0.8$ ($\uparrow$)} & \textbf{Success \% ($\uparrow$)} & \textbf{N} \\
      \midrule
      ESM3 & Top-10 & 64\% & 0.84 & 86\% & 55\% & 100 \\
      & Random & 61\% & 0.83 & 95\% & 58\% & 100 \\
      & Min & 38\% & 0.82 & 94\% & 36\% & 86 \\
      \addlinespace
      +ProteinGuide & Top-10 & 76\% & 0.85 & 81\% & 62\% & 100 \\
      & Random & 73\% & 0.85 & 99\% & 72\% & 100 \\
      & Min & 41\% & 0.80 & 80\% & 33\% & 86 \\
      \bottomrule
    \end{tabular}
    }
    \label{tab:enzyme_guidance_results_ec_sets_combined}
\end{figure}

\newpage

\begin{figure}
    \centering
    \captionof{table}{
    \textbf{CATH labels used for A- and T-level fold-class conditioning baselines.}
    Each row includes the CATH code, its description, whether it belongs to the ``common'' or ``random'' group, the total number of samples in the dataset, and the fold-oracle test accuracy for that label.
    }
    \resizebox{0.8\textwidth}{!}{
    \begin{tabular}{llcrc}
    \toprule
    \textbf{Code} & \textbf{Description} & \textbf{Group} & \textbf{\# samples} & \textbf{Test Acc} \\
    \midrule
    \multicolumn{5}{l}{\textbf{A-level}}\\
    \midrule
    3.40  & 3-Layer(aba) Sandwich        & common & 84172 & 0.8163 \\
    2.60  & Sandwich                     & common & 34728 & 0.8724 \\
    2.40  & Beta Barrel                  & common & 24208 & 0.7961 \\
    3.20  & Alpha-Beta Barrel            & common & 15623 & 0.7755 \\
    3.10  & Roll                         & common & 13951 & 0.7230 \\
    2.30  & Roll                         & common &  6424 & 0.7406 \\
    1.25  & Alpha Horseshoe              & common &  2967 & 0.7316 \\
    2.80  & Trefoil                      & common &  1365 & 0.8224 \\
    2.130 & 7 Propeller                  & common &  1239 & 0.9423 \\
    2.120 & 6 Propeller                  & common &  1011 & 0.7374 \\
    2.100 & Aligned Prism                & random &   305 & 0.8485 \\
    3.80  & Alpha-Beta Horseshoe         & random &   756 & 0.8226 \\
    2.102 & 3-layer Sandwich             & random &   234 & 0.7812 \\
    2.50  & Clam                         & random &    66 & 0.9091 \\
    3.70  & Box                          & random &   297 & 0.9412 \\
    2.115 & 5 Propeller                  & random &   434 & 0.7778 \\
    2.90  & Orthogonal Prism             & random &   141 & 1.0000 \\
    2.160 & 3 Solenoid                   & random &   964 & 0.7396 \\
    3.65  & Alpha-beta prism             & random &   400 & 1.0000 \\
    2.140 & 8 Propeller                  & random &   398 & 0.8889 \\
    \midrule
    \multicolumn{5}{l}{\textbf{T-level}}\\
    \midrule
    3.40.50  & Rossmann fold                      & common & 50985 & 0.7885 \\
    2.60.40  & Immunoglobulin-like                & common & 23273 & 0.8811 \\
    3.20.20  & TIM Barrel                         & common & 14701 & 0.7779 \\
    3.30.70  & Alpha-Beta Plaits                  & common &  9911 & 0.7960 \\
    2.60.120 & Jelly Rolls                        & common &  8713 & 0.8591 \\
    2.40.10  & Thrombin, subunit H                & common &  7012 & 0.9036 \\
    3.40.190 & D-Maltodextrin-Binding Protein     & common &  5938 & 0.7025 \\
    2.40.70  & Cathepsin D, subunit A             & common &  4683 & 1.0000 \\
    3.40.30  & Glutaredoxin                       & common &  3702 & 0.9466 \\
    1.10.490 & Globin-like                        & common &  2682 & 0.9792 \\
    2.30.110 & Pnp Oxidase                        & random &   367 & 0.9487 \\
    1.25.40  & Serine Threonine Protein Phosp...  & random &  2155 & 0.7453 \\
    3.30.429 & Macrophage Migration Inhibitor...  & random &   489 & 1.0000 \\
    3.10.310 & Diaminopimelate Epimerase          & random &   285 & 0.8000 \\
    2.160.10 & UDP N-Acetylglucosamine Acyltr...  & random &   544 & 0.7547 \\
    3.10.129 & Thiol Ester Dehydrase              & random &   973 & 0.9024 \\
    1.20.91  & Influenza Virus Matrix Protein     & random &    76 & 0.7273 \\
    3.40.605 & Aldehyde Dehydrogenase             & random &  1113 & 1.0000 \\
    2.40.155 & Green Fluorescent Protein          & random &  1100 & 1.0000 \\
    3.40.225 & L-fuculose-1-phosphate Aldolase    & random &    90 & 1.0000 \\
    \bottomrule
    \end{tabular}
    }
    \label{tab:cath_labels_a_t_common_random}
\end{figure}

\newpage

\begin{figure}[t]
  \centering
  \captionof{table}{\textbf{Illumina PCR1 primers used in the base editor experiment.}}
  \resizebox{0.8\textwidth}{!}{
  \begin{tabular}{lp{8cm}}
    \toprule
    \textbf{Type} & \textbf{Sequence} \\
    \midrule
    Forward primers & \texttt{GCTCTTCCGATCTNCGACGCCACGTTGTAT} \\
    & \texttt{GCTCTTCCGATCTNNCGACGCCACGTTGTAT} \\
    & \texttt{GCTCTTCCGATCTNNNCGACGCCACGTTGTAT} \\
    \midrule
    Reverse primers & \texttt{GCTCTTCCGATCTNCGCTAGATCCTCCGGA}  \\
    & \texttt{GCTCTTCCGATCTNNCGCTAGATCCTCCGGA} \\
    & \texttt{GCTCTTCCGATCTNNNCGCTAGATCCTCCGGA} \\
    \bottomrule
  \end{tabular}
  }
  \label{tab:illumina_primers}
\end{figure}

\clearpage


\section{Supplementary Figures}

\begin{figure*}[h!]
  \centering
  \includegraphics[width=1.0\textwidth]{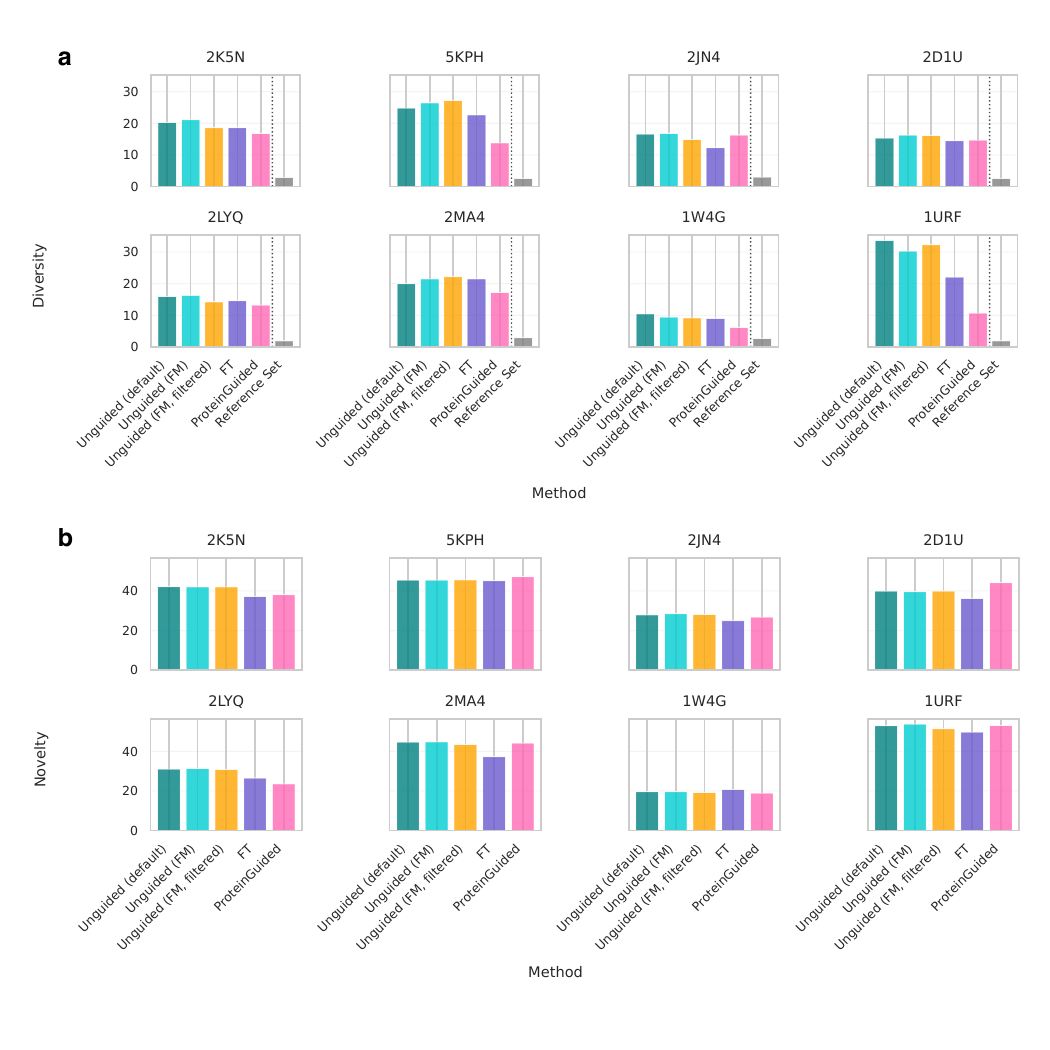}
  \caption[A]{
    \textbf{Diversity and novelty of generated sequences for all methods in the ProteinMPNN stability experiment.}
    \textbf{a}) Diversity of the generated sequences. For each method, diversity is evaluated on 100 sequences. Diversity is computed as the averaged pairwise hamming distance among the generated sequences. For each protein, the ``Reference Set'' diversity is computed on 500 randomly sampled sequences from the labeled dataset for each protein.
    \textbf{b}) Novelty of the generated sequences. For each method, novelty is evaluated on 100 sequences. For each sequence, novelty is computed as its minimum hamming distance with sequences in the labeled dataset. For each method, the mean novelty among the generated sequences is shown.
  }
  \label{si_fig:stability_diversity_novelty}
\end{figure*}

\newpage


\begin{figure*}[h!]
  \centering
  \includegraphics[width=0.9\textwidth]{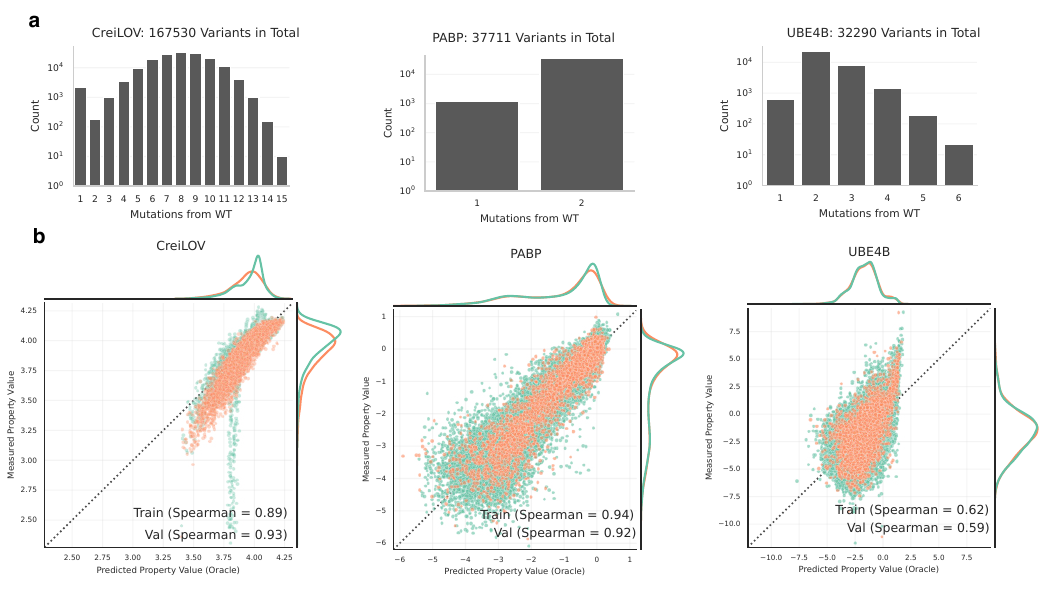}
  \caption[A]{
    \textbf{Datasets and oracle models used for the extrapolative experiment of guiding ESM3 with assay-informed predictive models.}
    \textbf{a}) Distribution of variant counts in each dataset stratified by the hamming distance from the wild-type sequence.
    \textbf{b}) Scatterplot of the labeled vs. oracle predicted property value in each dataset, for both the training and validation set.

  }
  \label{si_fig:fitness_dataset}
\end{figure*}

\begin{figure*}[h!]
  \centering
  \includegraphics[width=0.9\textwidth]{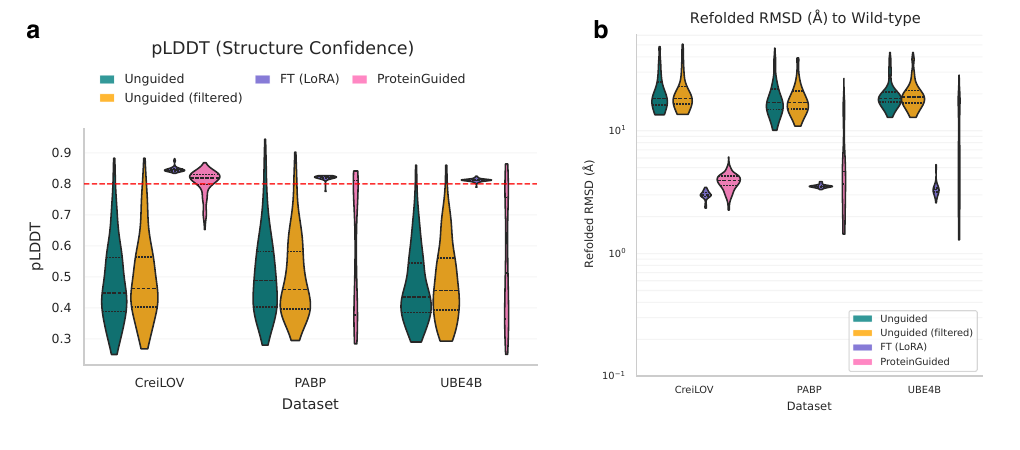}
  \caption[A]{
    \textbf{Structure prediction-based metrics for generated sequences from guiding ESM3 with assay-informed predictive models.}
    \textbf{a}) Distribution of the pLDDT of the generated sequences from each method (higher is better). pLDDT is based on ESMFold prediction. Red dotted line indicates the threshold above with a generated sequence is considered a success (pLDDT $\geq80$).   
    \textbf{b}) Distribution of the refolded RMSD (\AA) of the generated sequences from each method (lower is better). RMSD is computed between the predicted structure of the wild-type sequence and that of the generated sequence.
  }
  \label{si_fig:fitness_structure}
\end{figure*}

\begin{figure*}[h!]
\centering
\includegraphics[width=1.0\textwidth]{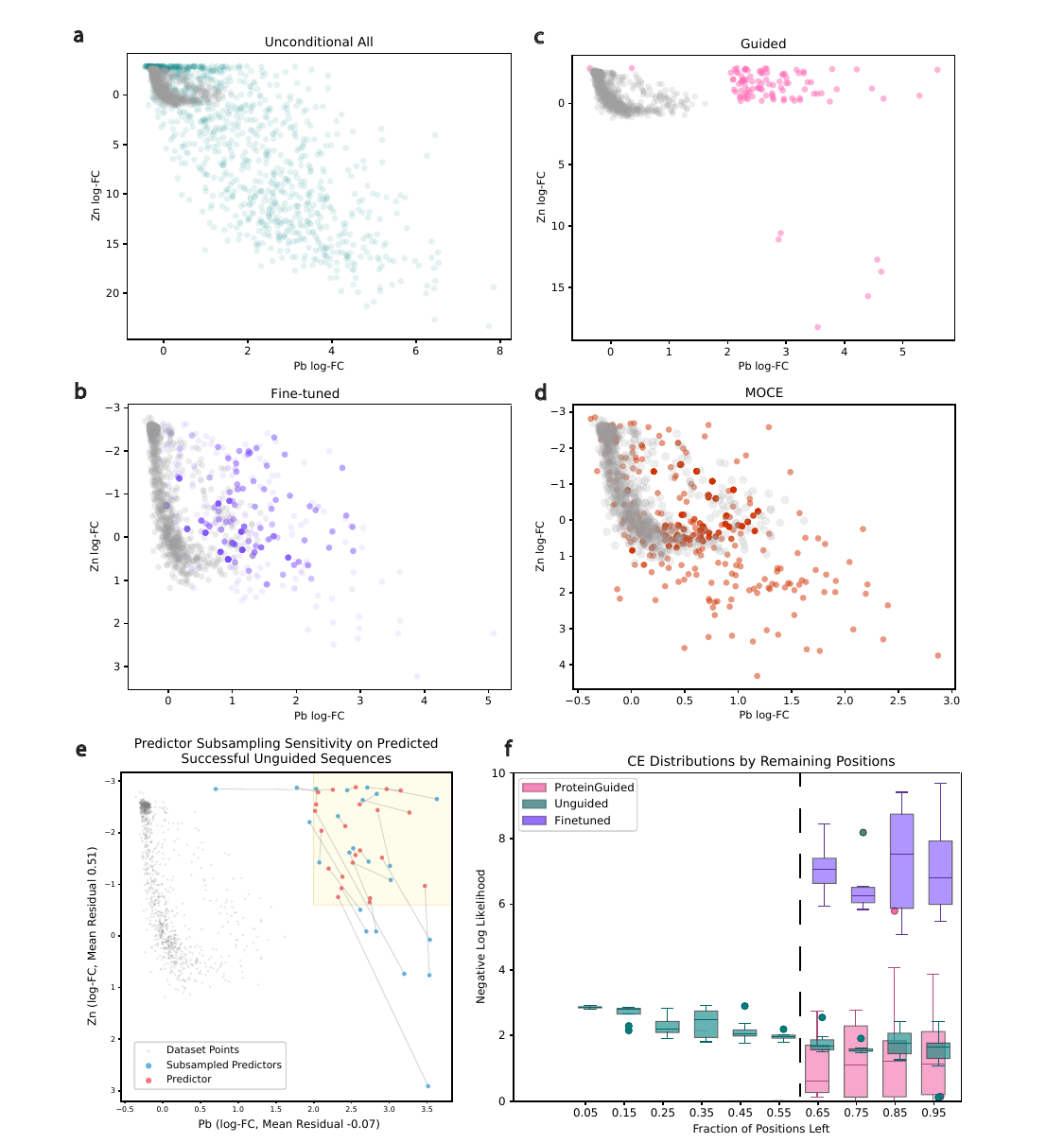}
\caption{
\textbf{Multi-property, Pareto-extrapolative experiment: unfiltered generation results, heuristic constraint adjustment, and negative log-likelihood curves.}
    \textbf{a-d}) Show the unfiltered compute-matched sequences generated by the three methods benchmarked in the main text and MOCE, the method used by~\citet{wang2025active}.
    \textbf{e}) Example of training-data-subsampled predictors (blue) against the predictor used for guidance (red). The residuals are ``Predictor'' minus ``Subsampled''. Residuals were calculated on randomly sampled sequences of up to 30 mutations that were predicted to be in the target region for the Pareto-extrapolative task (\fref{fig:fitness_multi}b, yellow box). 
    \textbf{f}) Time-series of negative log likelihoods for each method on the set of sequences predicted by the oracle to be in the target region of the Pareto-extrapolative task (\fref{fig:fitness_multi}b, yellow box). Dashed line shows the time point from which generation started. A cross entropy of 3 on the y-axis corresponds to the negative log-likelihood on the data for a uniform distribution over amino acids.
    }
\label{si_fig:multi_main}
\end{figure*}

\begin{figure*}[h!]
\centering
\includegraphics[width=1.0\textwidth]{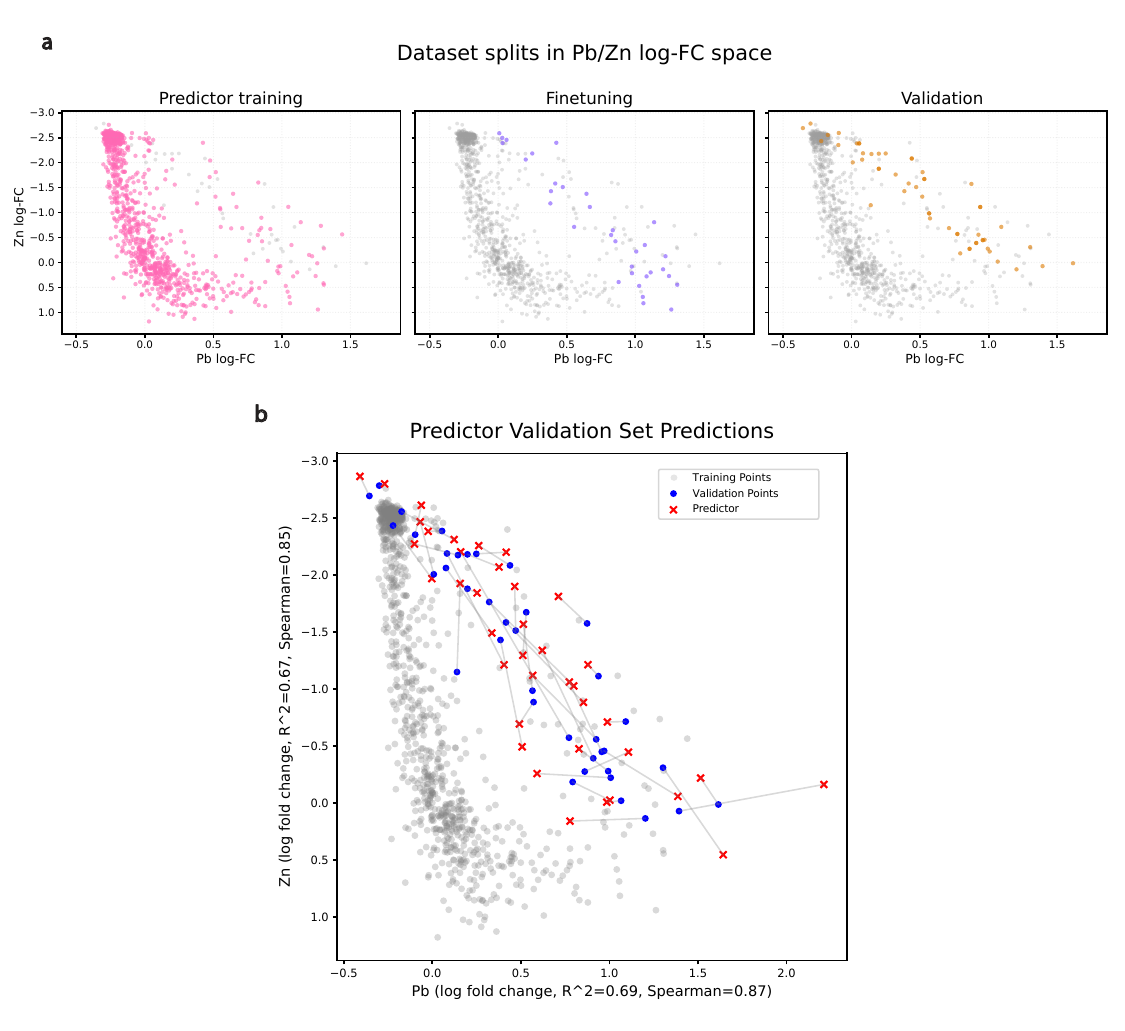}
\caption{
\textbf{Data splits used in the multi-property, Pareto-extrapolative PbrR experiment.} 
    \textbf{a.} Summary of training and validation splits used for the PbrR experiment. Gray points show full dataset points not included in that split. The validation split was common between the predictor and finetuned model. The finetuning training data was the portion of the predictor training data that was above the dashed on-/off-target threshold line shown in the center subpanel.
    \textbf{b.} Predictions from the regression model used for \ourmethod compared with the true assay value on the validation set.
}
\label{si_fig:multi_data}
\end{figure*}
\newpage

\begin{figure*}[h!]
\centering
\includegraphics[width=1.0\textwidth]{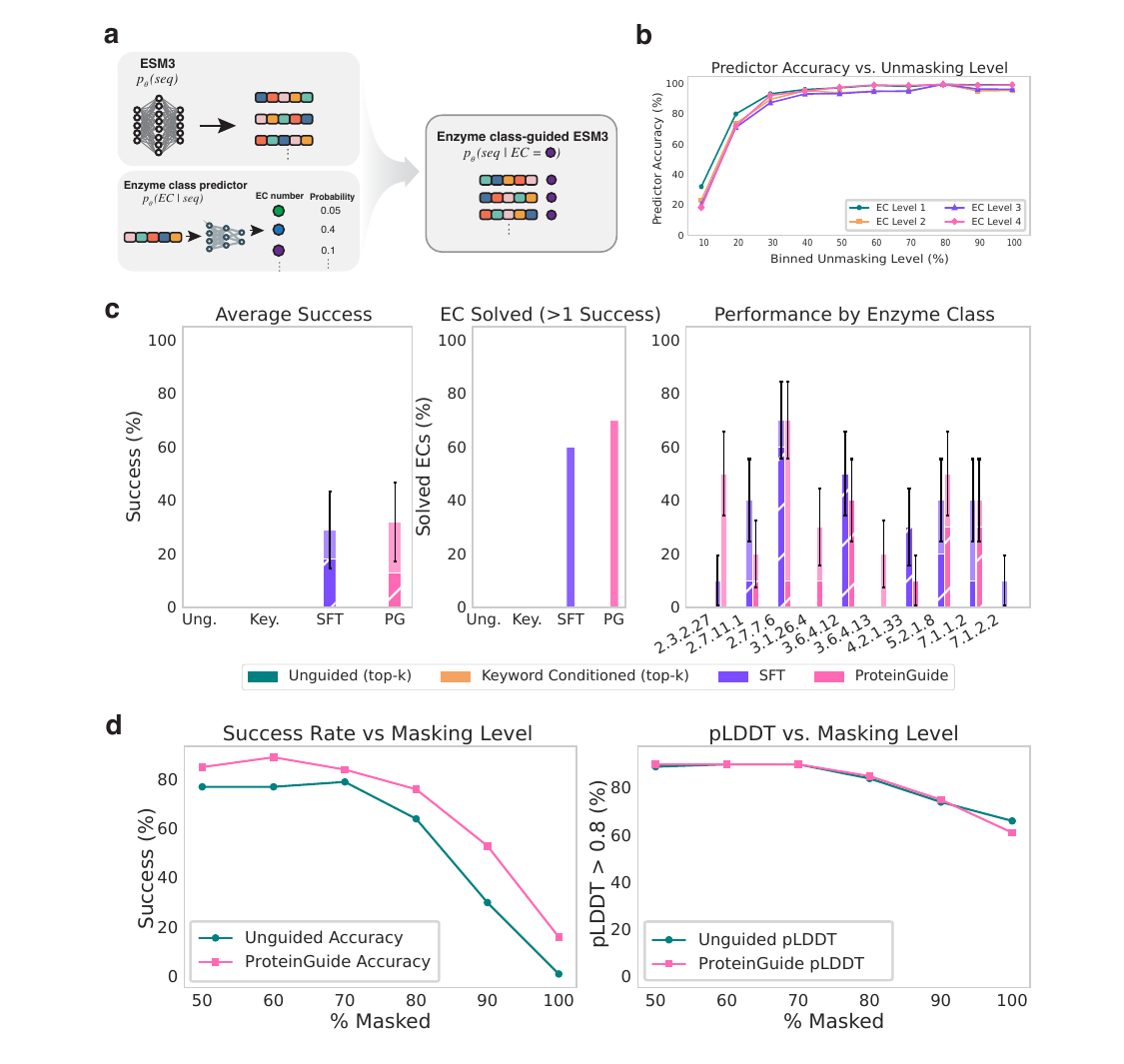}
\caption{
\textbf{Guiding ESM3 with enzyme annotations.} 
    \textbf{a}) Schematic for using \ourmethod to guide ESM3 with an enzyme class predictor.
    \textbf{b}) Classification accuracy of the predictor on SwissProt test sequences as an increasing number of positions are unmasked. All accuracies are class balanced.
    \textbf{c}) Four generative methods (in legend) are assessed across 10 of the most common enzyme classes in SwissProt (horizontal axis of rightmost sub-figure). Left panel shows average success rate across all 10 common classes. Center shows the number of ECs for which each method produced at least one success. Right panel shows success rate by enzyme class for the 10 most common ECs. Top-k experiments generated 210 samples total. Out of these, the top 10 sequences rated by the predictor as most likely to be in the target EC are kept for evaluation. The height of each bar represents the percentage of sequences predicted by the oracle to be of the target EC. The hatched sub-portion are the fraction of sequences in the target EC that are also predicted to fold with pLDDT $\leq$ 0.8. Error bars are standard errors for proportions.
    \textbf{d}) Guidance performance when redesigning test set enzyme sequences. The x-axis of each subplot is the percent of positions in the original enzyme that were masked. The left subplot y-axis shows the percent of sequences generated (10 total) that were reclassified to be in the target enzyme class. The right subplot y-axis shows the percent of these 10 sequences that had a pLDDT greater than 0.8.
}
\label{si_fig:ec}
\end{figure*}
\newpage

\begin{figure*}[h!]
  \centering
  \includegraphics[width=1.0\textwidth]{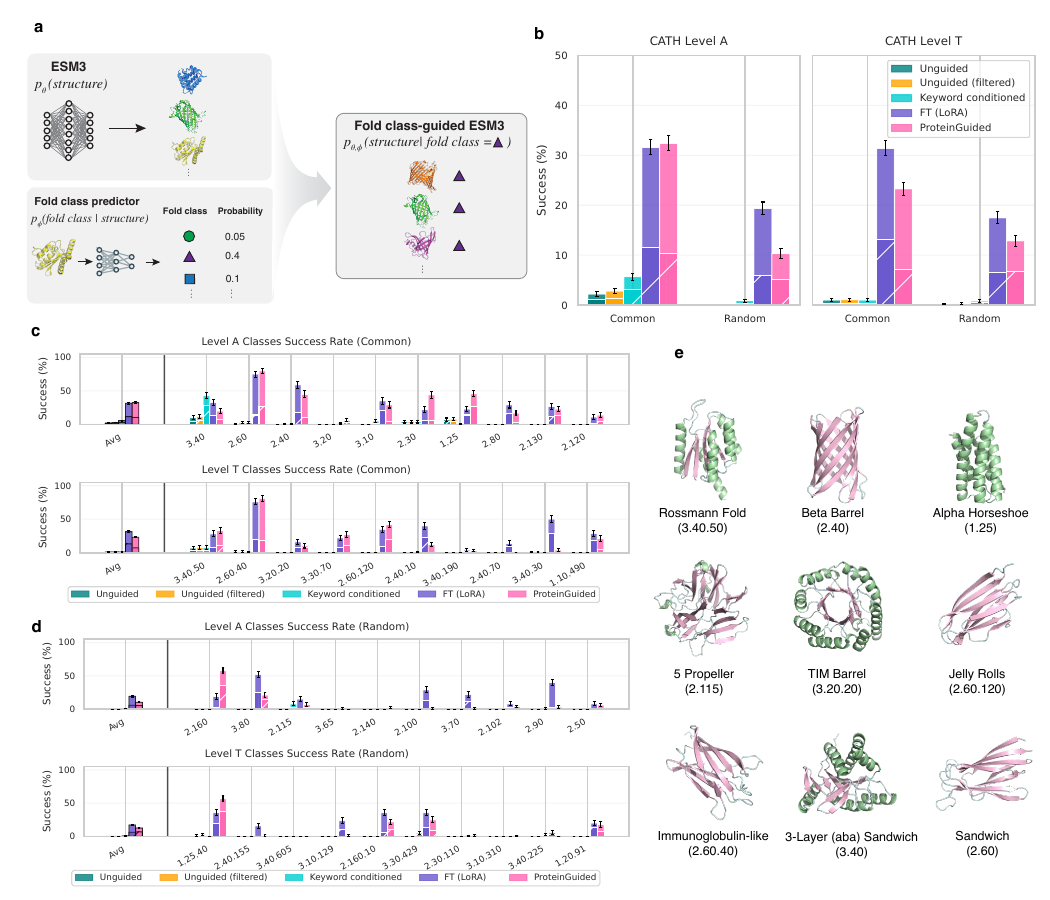}
  \caption[A]{
    \textbf{Guiding ESM3 to generate backbone structures with CATH fold class annotations.}
    \textbf{a}) Schematic for using \ourmethod to guide ESM3 with a fold class predictor (trained on CATH) to generate structures predicted to belong to specific fold class.
    \textbf{b}) Aggregated success rates of different mehods (in legends) across different CATH hierarchy levels. The height of each bar shows the percentage of structures correctly classified to their target fold class, while the hatched sub-portion indicates the subset that are also designable (self-consistency RMSD $\leq 2$\r{A}). Results are grouped by CATH level, and the classes for each level were grouped by their frequency of occurrence in the training dataset of the fold class oracle. Metrics were aggregated within each group following \citet{geffner2025proteina}. ``Common'' consists of the 10 most common classes for each level, and ``Random'' consists of 10 randomly selected classes for each level. 
    \textbf{c}) Per-class success rates of different methods for the Common classes for both A level (top) and T level (bottom). 
    \textbf{d}) Per-class success rates of different methods for the Random classes for both A level (top) and T level (bottom). 
    \textbf{e}) Examples of generated backbone structures by guiding ESM3 with some example CATH classes. The shown examples are both classified correctly to the target fold class by the fold classifier and are designable.
  }
  \label{si_fig:fold_class}
\end{figure*}

\newpage


\begin{figure*}[h!]
  \centering
  \includegraphics[width=1.0\textwidth]{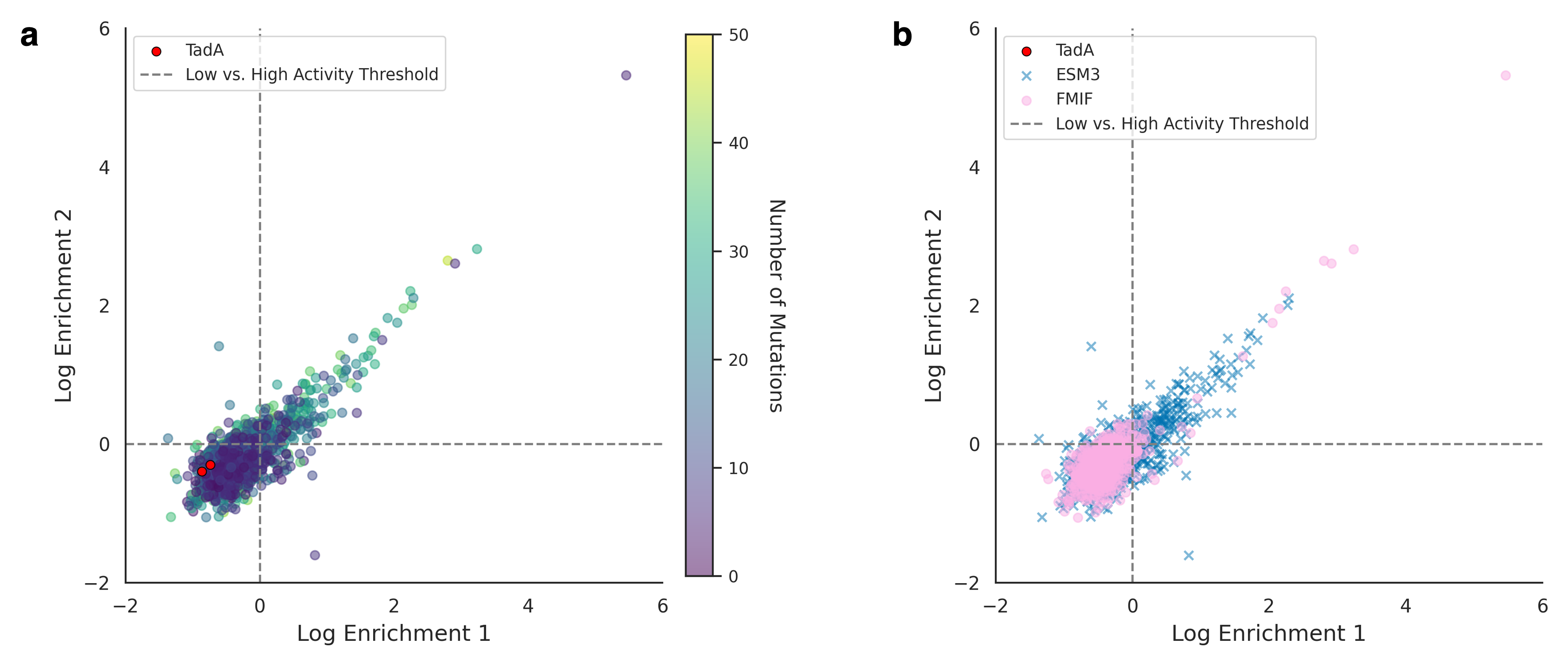}
  \caption[A]{
    \textbf{Experimental characterization of sequences sampled without \ourmethodacro.}
    \textbf{a})
    Log enrichment scores of the $2,000$ sequences sampled from ESM3 and FMIF for two biological replicates. Each sequence is colored by the number of mutations away from TadA (red). The class boundary used to train a classifier is shown as a dotted grey line.
    \textbf{b})
    The same log enrichment scores as in (\textbf{a.}) but different colors and markers are used to denote which model the sequences were generated from.
  }
  \label{si_fig:round1scatter}
\end{figure*}

\begin{figure*}[h!]
  \centering
  \includegraphics[width=1.0\textwidth]{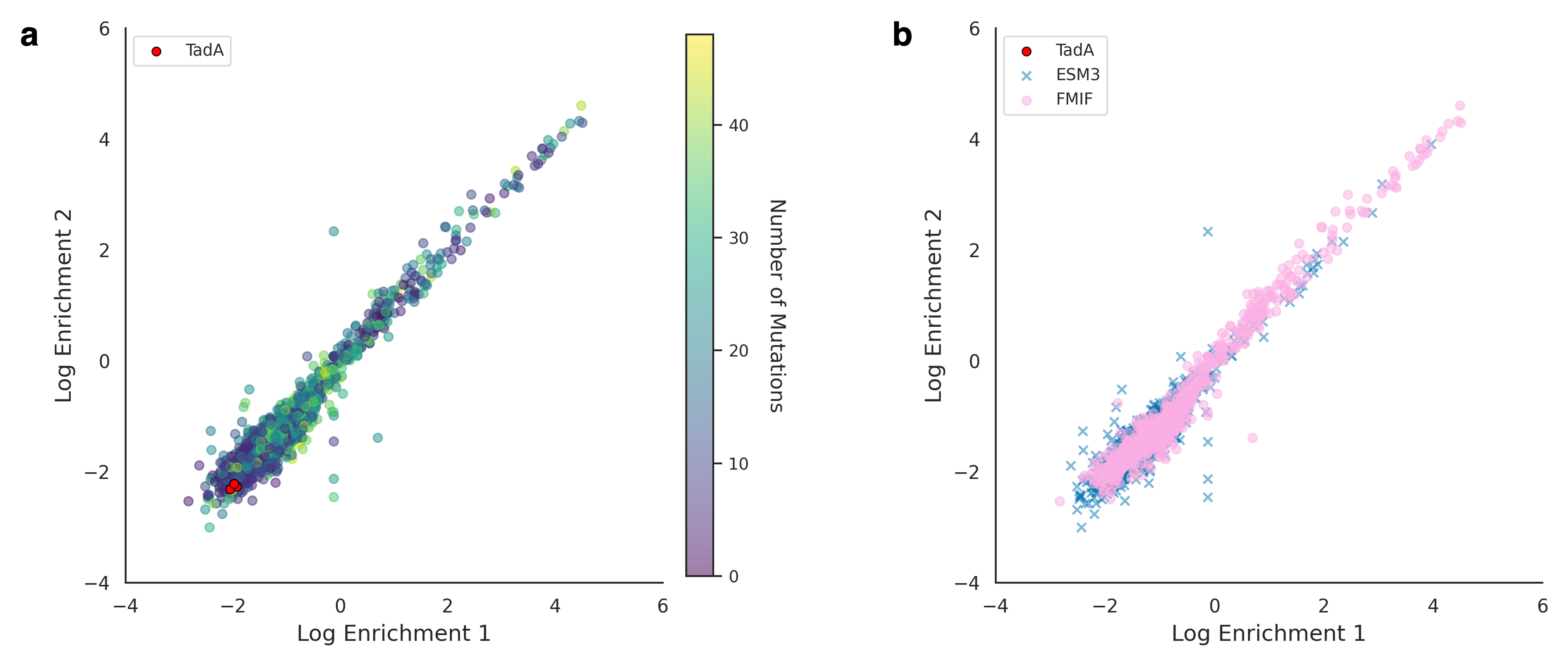}
  \caption[A]{
    \textbf{Experimental characterization of sequences sampled with \ourmethodacro.}
    \textbf{a})
    Log enrichment scores of the $1,888$ sequences sampled from ESM3 and FMIF with \ourmethod~for two biological replicates. Each sequence is colored by the number of mutations away from TadA (red).
    \textbf{b})
    The same log enrichment scores as in \textbf{a}) but different colors and markers are used to denote which model the sequences were generated from.
  }
  \label{si_fig:round2scatter}
\end{figure*}

\begin{figure*}[h!]
  \centering
  \includegraphics[width=1.0\textwidth]{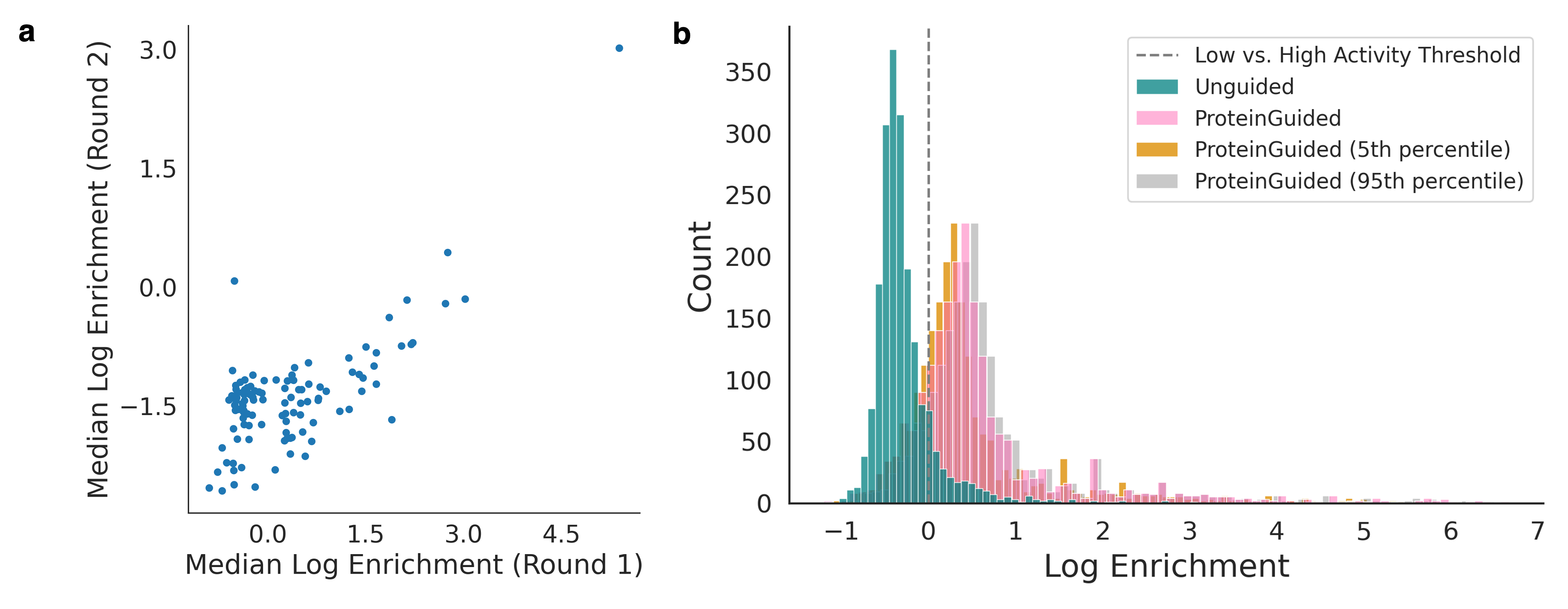}
  \caption[A]{
    \textbf{Normalizing log enrichment scores between round 1 and round 2.}
    \textbf{a})
    Log enrichment scores of the $112$ control sequences that were tested in both round 1 and round 2. The x-axis shows the median log enrichment scores for the replicates from round 1. The y-axis shows the median log enrichment scores for the replicates from round 2.    %
    \textbf{b})
    Histogram of log enrichment values comparing unconditional sequences (teal) and guided sequences (pink) after calibration between rounds. Bootstrap analysis (1,000 iterations) of the 112 control sequences yielded a mean median log enrichment of 0.56 for the guided library with a 90\% confidence interval of [0.38, 0.75]. The 5th and 95th percentile mappings (gold and gray histograms, respectively) represent conservative and optimistic transformations based on bootstrap sampling, visualizing the uncertainty range in our calibration procedure. The gray dashed line at a log enrichment value of 0 represents the low vs. high activity classification threshold. Our guidance strategy was designed to generate sequences with log enrichment values greater than this threshold.

  }
  \label{si_fig:tadA_mapping}
\end{figure*}

\begin{figure*}[h!]
    \centering
    \includegraphics[width=1.0\textwidth]{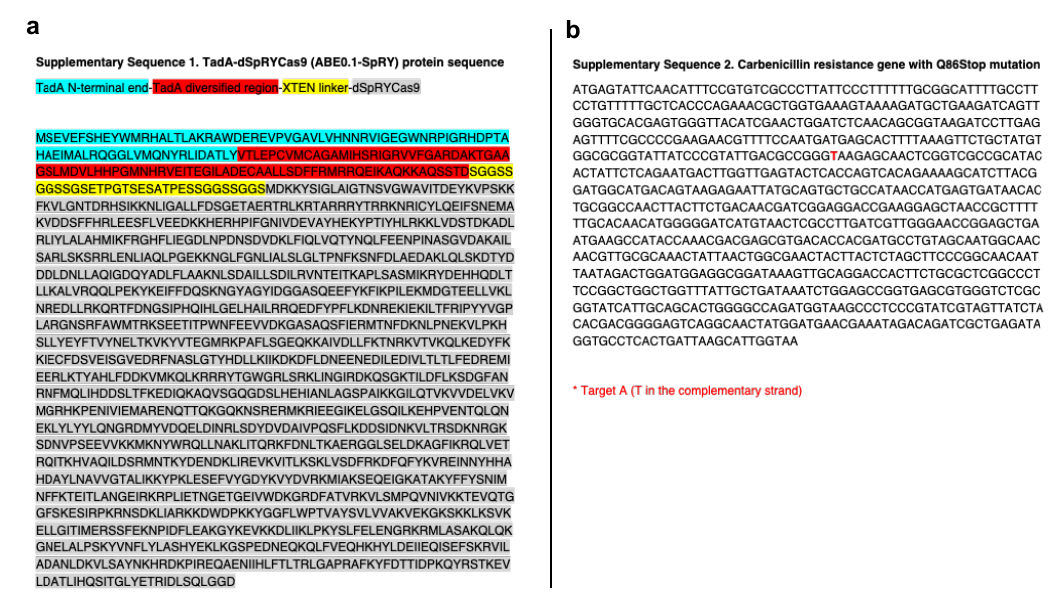}
    \caption[A]{
    \textbf{Supplementary sequences from the base editor experiment.}
    \textbf{a}) Base editor (starting variant) protein sequence.
    \textbf{b}) Reporter gene DNA sequence. 
    }
  \label{si_fig:supp_seq}
\end{figure*}



\newpage

\bibliographystyle{unsrtnat}   
\bibliography{main}


\end{document}